\RequirePackage{fix-cm}
\documentclass[smallextended,natbib]{svjour3}
\smartqed
\usepackage{amsmath,amssymb}
\usepackage{graphicx,booktabs,xcolor,array,tabularx}
\usepackage[expansion=false]{microtype}
\usepackage{enumitem}
\usepackage[hidelinks]{hyperref}
\setlist{leftmargin=1.4em,itemsep=2pt,topsep=3pt}
\newenvironment{proof}[1][\proofname]{\par\addvspace{\medskipamount}\noindent{\itshape #1.}\enspace\ignorespaces}{\qed\par\addvspace{\medskipamount}}
\journalname{Dynamic Games and Applications}
\makeatletter\def\makeheadbox{}\makeatother

\newcommand{\Vp}{V_p^{\max}}
\newcommand{\Ve}{V_e^{\max}}
\newcommand{\Om}{\Omega_p^{\max}}

\newcommand{\x}{\mathbf{x}}
\newcommand{\U}{\mathcal{U}}
\newcommand{\UP}{\ensuremath{\mathrm{UP}}}
\newcommand{\NUP}{\ensuremath{\mathrm{NUP}}}
\newcommand{\BUP}{\ensuremath{\mathrm{BUP}}}
\newcommand{\De}{\mathcal{D}_e}
\newcommand{\Dpt}{\mathcal{D}_{PT}}
\newcommand{\Bbar}{\mathcal{B}}
\newcommand{\Pset}{\mathcal{P}}
\newcommand{\Tset}{\mathcal{T}}
\newcommand{\Cset}{\mathcal{C}}
\newcommand{\Wt}{\widetilde W_T}
\newcommand{\sgn}{\operatorname{sgn}}
\newcommand{\PI}[1]{(Part~I, #1)}
\newcommand{\OR}[1]{Supplementary Material, Sect.~#1}


\begin{document}

\title{Minimum-Time Escape from the Conic Field of View of a Reeds--Shepp Pursuer: the Faster Evader}
\titlerunning{Escape from a Reeds--Shepp pursuer's field of view: the faster evader}
\author{Ubaldo Ruiz}
\authorrunning{U. Ruiz}
\institute{U. Ruiz \at
  Centro de Investigaci\'on Cient\'ifica y de Educaci\'on Superior de Ensenada (CICESE), Ensenada, Baja California, Mexico\\
  \email{uruiz@cicese.edu.mx}}
\date{}

\maketitle

\begin{abstract}
We continue the study of minimum-time escape of an omnidirectional evader from the forward conic field of view of a Reeds--Shepp pursuer, now for a faster evader, and solve the game for every speed ratio $\nu>1$ and every half-angle $0<\alpha<\pi/2$: the escape time is characterized by explicit characteristics and the selection of roots of a scalar equation, and it is the value of the game in nonanticipative strategies at every state. Above the end $E$ of the evader's dispersal line on the front axis, the evader chases the reversing pursuer along a universal line fed by tributaries. The characteristic from the boundary of the usable part is an involute of a circle of radius $\nu$ ending at a cusp, and two explicit thresholds decide whether it reaches the axis before the cusp and whether the jump of the escape time across it ends before the cusp or runs to it; in the latter case primaries fold back next to the barrier. The pursuer's optimal strategy is the relay of the slower case. The evader follows the tributaries and switches once to a primary, or commits in advance to a heading certified by the scalar equation of the primaries. Level-set and semi-Lagrangian computations are compared with the value.
\keywords{Pursuit--evasion games \and Surveillance--evasion \and Differential games \and Field of view \and Reeds--Shepp vehicle \and Universal surfaces \and Nonanticipative strategies}
\subclass{49N75 \and 49N70 \and 91A23 \and 49L25}
\end{abstract}

\section{Introduction}\label{sec:intro}
A directional sensor on a moving vehicle observes only part of the plane, so escaping surveillance means leaving its field of view rather than outrunning the observer. In Part~I \citep{ruiz2026conic} we studied minimum-time escape of an omnidirectional evader from the unbounded forward cone of half-angle $0<\alpha<\pi/2$ of a Reeds--Shepp pursuer, a car that can drive forward or in reverse but cannot rotate in place \citep{reeds1990}. The evader minimizes the exit time and the pursuer maximizes it. Part~I proved the escape/holding classification for every ratio $\nu=\Ve/\Vp$ of the evader's and the pursuer's maximum speeds. Straight reversal holds the whole cone for $\nu\le\sin\alpha$; for $\sin\alpha<\nu\le1$ escape is confined to a dart bounded by a barrier; for $\nu>1$ escape is possible from every state. For the slower or equally fast evader, Part~I constructed the escape time with Isaacs' characteristics \citep{isaacs1965,lewin1994}, verified it in feedback form and identified it with the value of the game in nonanticipative strategies at every state except one corner of the holding region.

This paper treats the faster evader, $\nu>1$. The closest forward-only model is the conic surveillance--evasion game of \citet{lewin1979conic}; related surveillance games and differential-drive variants are surveyed in Part~I \citep{dobbie1966,lewin1975,lewin1989rocket,greenfeld1987,ruiz2024dubins,braun2025,weintraub2023,ruiz2023fov,saavedra2024}. A faster evader changes the problem in three ways. First, the evader's axis dispersal line of the slower case ends at a point $E$, where the viscosity test on the axis fails \PI{Proposition~4}. Second, above $E$ a singular arc appears: the evader runs straight at the pursuer, which backs away in a straight line, so the front axis becomes a universal line in Isaacs' sense, fed from both sides by tributaries. Third, the characteristic issued from the boundary of the usable part (the $\BUP$ characteristic) no longer closes a holding region. It is a semipermeable curve that ends at a cusp, and across part of it the escape time jumps by a finite amount.

\paragraph{Contributions} The main results are the following.
\begin{enumerate}[label=(\roman*)]
\item For $\nu>1$ we give the characteristic families in closed form (Section~\ref{sec:char}): the point $E$ (Proposition~\ref{prop:E}), the universal line and its tributaries, whose value is an explicit classical solution of the Isaacs equation on the whole upper half-plane (Proposition~\ref{prop:trib}), and the $\BUP$ characteristic, an involute of the circle of radius $\nu$ about the pursuer's turning centre that ends at a cusp on that circle (Proposition~\ref{prop:barfast}).
\item The condition $\nu\sin(\alpha+\cos\alpha/\nu)\le1$ defines a regime F1 in which the $\BUP$ characteristic reaches the front axis at or before its cusp. In F1 the primaries, the tributaries and their dispersal lines are explicit (Proposition~\ref{prop:F1}, Lemma~\ref{lem:cover}).
\item In F1, the synthesis is verified in feedback form (Theorem~\ref{thm:feedback}), and its escape time $W$ is the value of the game in nonanticipative strategies at \emph{every} state of the cone (Theorem~\ref{thm:value}). The pursuer uses the relay of Part~I. The evader follows the tributaries and switches once to a primary. Both guarantees can be realized by sampled state feedback, so the value is the same whether or not a player observes the opponent's current control.
\item In the complementary regime F2, primaries fold before reaching the axis. Whether the tributary value undercuts the barrier before the cusp is decided by comparing the tributary value at the cusp with the retro-time of the cusp, which splits F2 into F2a and F2b (Section~\ref{sec:F2}). In F2a the results of (iii) hold unchanged (Theorem~\ref{thm:F2a}); the evader's argument needs only margins at the axis and the barrier that hold before the cusp, not the sign condition on the primary headings used in F1. In F2b the synthesis has folded primaries next to the barrier and a jump with square-root decay at the cusp (Appendix~\ref{app:cusp}). It is the value of the game (Theorem~\ref{thm:F2b}): the evader commits to the heading of a primary, which guarantees its time against every pursuer control, and an auxiliary envelope gives the relay bound before the covering of the tributaries is known. The same holds on the curve that separates F2a from F2b, so the conic escape game is solved for every $\nu>1$ and every $\alpha\in(0,\pi/2)$.
\item Independent level-set computations agree with the value in F2b away from singular lines. In F1, level-set and semi-Lagrangian computations are compared with the proved value, which identifies their persistent offsets on the universal line as errors of the discretizations (Section~\ref{sec:num}).
\end{enumerate}
The F1 verification follows the strategy of Part~I, with three new ingredients: a covering lemma for the tributary sheet, an invariance argument at the end point of the barrier, where the escape time is continuous but not Lipschitz, and a separation lemma that lets the evader switch branches at a sampling instant.

Section~\ref{sec:setting} recalls the formulation and Section~\ref{sec:char} constructs the characteristics. Sections~\ref{sec:F1} and~\ref{sec:F2} solve regimes F1 and F2, Section~\ref{sec:num} gives the numerical comparisons and Section~\ref{sec:lewin} the comparison with the Dubins game.

\section{Formulation and results from Part I}\label{sec:setting}
We recall the setting of Sections~2--4 of Part~I in normalized units. With $\Om$ the pursuer's maximum turn rate, lengths are measured in units of the minimum turning radius $r=\Vp/\Om$ and times in units of $1/\Om$, so that $\Vp=\Om=r=1$ and $\Ve=\nu$; dimensional values follow from the scaling (7) of Part~I. In the reduced frame attached to the pursuer, with $x$ to its right and $y$ along its heading, the evader's position $\x=(x,y)^\top$ obeys
\begin{equation}\label{eq:reduced}
\dot x=u_x+\omega_py,\qquad \dot y=u_y-v_p-\omega_px,
\end{equation}
where $\|\mathbf u\|\le\nu$ is the evader's velocity and the pursuer's speed and turn rate lie in the bow tie $|\omega_p|\le|v_p|\le1$. The field of view is the closed cone $K=\{\x:|\varphi|\le\alpha\}\cup\{\mathbf 0\}$, where $\rho=\|\x\|$ and $\varphi=\operatorname{atan2}(x,y)$ is the bearing, and the payoff is the first strict exit time $t_f=\inf\{t\ge0:\x(t)\notin K\}$. The bow tie lets the pursuer change the sign of its speed instantaneously, an idealization of the Reeds--Shepp car. With $J=t_f$, the evader's and the pursuer's strategy values are
\[
V_E=\inf_{\beta}\sup_{\mathbf w}J(\beta[\mathbf w],\mathbf w),\qquad V_P=\sup_{\gamma}\inf_{\mathbf u}J(\mathbf u,\gamma[\mathbf u]),
\]
where $\mathbf w=(v_p,\omega_p)$ and $\mathbf u$ range over measurable controls, and $\beta$ (evader) and $\gamma$ (pursuer) are nonanticipative strategies without delay: maps between control sets such that controls that agree a.e.\ on $[0,t]$ have images that agree a.e.\ on $[0,t]$, for every $t$ \citep{elliott1972,bardi1997}. This is the setting of Section~2 of Part~I; $V_E\le V_P$ is not assumed a priori. A guarantee of one player in feedback form bounds the corresponding value from one side; the reverse bounds come from sampled counter-controls (Appendix~\ref{app:value}).

\paragraph{Usable part} The right and left rays $M_\pm$ of $K$ are usable ($\UP_\pm$) for $\rho<\rho_B=\nu-\sin\alpha$, nonusable ($\NUP$) for $\rho>\rho_B$, and $B_\pm=\rho_B(\pm\sin\alpha,\cos\alpha)^\top$ are the $\BUP$ points. On the right ray the pursuer's control $(v_p,\omega_p)=(-1,-1)$ gives inward angular velocity $\rho\dot\varphi\le\nu-\rho-\sin\alpha$ against every evader control \PI{Eq.~(8)}. For $\nu>1$ the apex is usable. Throughout, $W$ denotes the escape time constructed from the characteristics below, defined formally in~\eqref{eq:W} for regime F1 and in Section~\ref{sec:F2} for regime F2; $W=0$ on $\UP\cup\{\mathbf 0\}$.

\paragraph{Hamiltonian} With costate $\boldsymbol\lambda=(\lambda_x,\lambda_y)^\top$ and $\Lambda=\|\boldsymbol\lambda\|$,
\begin{equation}\label{eq:Hstar}
H^*(\x,\boldsymbol\lambda)=1-\nu\Lambda+|\lambda_y|+|\Phi_\omega|,\qquad \Phi_\omega=\lambda_xy-\lambda_yx,
\end{equation}
with saddle controls $\mathbf u^*=-\nu\boldsymbol\lambda/\Lambda$, $v_p^*=-\sgn\lambda_y$ and $\omega_p^*=\sgn\Phi_\omega$ \PI{Eq.~(11)--(12)}. Where a switching function vanishes the maximizer is not unique, and, as in Part~I, we select an admissible one: if $\lambda_y=0$ and $\Phi_\omega\ne0$, either sign of $v_p$ with $|v_p|=1$, so that the saturated turn rate lies in the bow tie; if $\Phi_\omega=0$, any $\omega_p$ with $|\omega_p|\le|v_p|$. A normal characteristic satisfies $H^*=0$; an abnormal one satisfies $H_A^*=H^*-1=0$. By Proposition~1 of Part~I, a normal characteristic with singular steering lies on the front axis, with $\boldsymbol\lambda=(0,\lambda_y)$, $\omega_p=0$, $|v_p|=1$ and $(\nu-1)|\lambda_y|=1$; such arcs exist only for $\nu>1$.

\paragraph{Primaries} The characteristics ending at $\rho_f(\sin\alpha,\cos\alpha)^\top\in\UP_+$ are, with $\theta=\alpha+\tau$ in retro-time $\tau$ and the saddle controls $(v_p,\omega_p)=(-1,-1)$, given by Eq.~(16) of Part~I, which in the normalized units $r=\Vp=\Om=1$, $\Ve=\nu$ reads
\begin{equation}\label{eq:primary}
\begin{aligned}
\psi&=\rho_f-\sin\theta+\sin\alpha,\qquad \eta=\cos\alpha-\cos\theta-\nu\tau,\\
x&=\psi\sin\theta+\eta\cos\theta,\qquad y=\psi\cos\theta-\eta\sin\theta,
\end{aligned}
\end{equation}
carrying $W=\tau$ and costate $\boldsymbol\lambda=(-\cos\theta,\sin\theta)^\top/g$, $g=\rho_B-\rho_f$ \PI{Eqs.~(14)--(15)}. Eliminating $\rho_f$, a point $\x$ lies on a primary at retro-time $\tau$ if and only if $G(\tau;\x)=0$, where
\begin{equation}\label{eq:Gscalar}
G(\tau;\x)=(x+1)\cos\theta-y\sin\theta-\cos\alpha+\nu\tau,
\end{equation}
and the launch parameter is then
\begin{equation}\label{eq:rhof}
\rho_f=(x+1)\sin\theta+y\cos\theta-\sin\alpha,\qquad G_\tau=\rho_B-\rho_f
\end{equation}
\PI{Eq.~(23)--(24)}. The $\BUP$ characteristic $\Bbar_+$ is the abnormal characteristic issued from $B_+$; its path is~\eqref{eq:primary} with $\rho_f=\rho_B$, i.e.\ $\psi=\nu-\sin\theta$ \PI{Section~6.2}. Throughout, $S=\sin\theta$, $C=\cos\theta$ and $q(\tau)=\cos\alpha-\nu\tau$.

\paragraph{Results of Part I used here} Besides the formulas above we use: the relay $v_p=-1$, $\omega_p=-\sgn x$, with Filippov sliding on $x=0$, which defines a unique solution and a nonanticipative pursuer strategy while $y>0$ \PI{Lemma~1}; the subsolution test on the axis \PI{Proposition~4}, which for $\nu>1$ holds at a point reached by a primary and its mirror image if and only if $\lambda_y\le1/(\nu-1)$; the evader's escape bound for $\nu>1$ \PI{Proposition~3(c)}; the feedback verification and its sampled versions \PI{Theorem~1, Lemmas~2--6}; and the pairing of sample-and-hold controls with nonanticipative strategies \PI{Appendix~B}. Where an argument of Part~I used $\nu\le1$ we say so and give the replacement.

\section{Characteristics for the faster evader}\label{sec:char}
Throughout this section $\nu>1$ and $0<\alpha<\pi/2$.

\subsection{Primaries}
Part~(i) of Proposition~2 of Part~I holds for every $\nu$: the map $(\rho_f,\tau)\mapsto\x$ of~\eqref{eq:primary} has Jacobian determinant $\rho_B-\rho_f>0$. Part~(ii), which says that every primary reaches the front axis before any switch, is where $\nu\le1$ enters. We record what remains for $\nu>1$.

\begin{lemma}[Primaries for $\nu>1$]\label{lem:primfast}
Let $\nu>1$ and $0<\rho_f<\rho_B$. Along the primary~\eqref{eq:primary}, the pursuer's steering does not switch before the first axis contact or before $\theta=\pi/2$, whichever comes first, and the speed does not switch before $\theta=\pi$. Moreover $x_\tau=-gC-qS$, where $g=\rho_B-\rho_f$, so $x_\tau<0$ while $q>0$ and $\theta<\pi/2$.
\end{lemma}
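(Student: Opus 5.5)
The plan is to compute both switching functions along the primary from the costate $\boldsymbol\lambda=(-C,S)^\top/g$ of~\eqref{eq:primary}, and to read off their signs. The speed switch $v_p^*=-\sgn\lambda_y$ is governed by $\lambda_y=S/g$. Since $g=\rho_B-\rho_f>0$, and $\theta=\alpha+\tau$ with $0<\alpha<\pi/2$, we have $S>0$ for $\alpha\le\theta<\pi$. So $v_p^*=-1$ with no switch before $\theta=\pi$, which settles the speed claim.

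For the steering I evaluate $\Phi_\omega=\lambda_xy-\lambda_yx=-(Cy+Sx)/g$. Substituting $x=\psi S+\eta C$ and $y=\psi C-\eta S$ gives $Sx+Cy=\psi$, so $\Phi_\omega=-\psi/g$. The sign of $\omega_p^*=\sgn\Phi_\omega=-1$ is therefore maintained as long as $\psi>0$, where $\psi=\rho_f-\sin\theta+\sin\alpha$. I then check when $\psi$ can vanish.

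\begin{itemize}
\item For $\theta<\pi/2$, $\sin\theta$ increases from $\sin\alpha$, so $\psi$ decreases from $\rho_f>0$. The first zero of $\psi$ with $\theta<\pi/2$ requires $\sin\theta=\rho_f+\sin\alpha$.
\item At such a point the position is $\x=\eta(C,-S)$, so $\rho=|\eta|$ and the bearing is fixed by $\eta$. I must show the trajectory meets the front axis $x=0$ (within the cone, $y>0$) no later than this. At $\psi=0$ we have $x=\eta C$ and $y=-\eta S$. If $\eta<0$, then $x<0$, $y>0$, and the point lies left of the axis; by continuity from $x(0)=\rho_f\sin\alpha>0$, the axis was crossed earlier. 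If $\eta\ge0$, then $y\le0$ and the point lies behind the pursuer. Here the trajectory must already have left the upper half-plane through an axis or cone contact. I will argue via the bearing, $\tan\varphi=x/y$, whose continuous evolution from $\alpha$ must pass through $0$ before reaching the lower half-plane directions $\pm\pi/2$ unless it exits through $\rho=0$ (the apex, which also counts as axis contact).
\end{itemize}

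This step is the main obstacle, and I expect to handle it with the derivative formula below. If $\psi$ first hits zero only after $\theta=\pi/2$, the claim holds trivially via the alternative ``before $\theta=\pi/2$''.

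Finally I differentiate $x=\psi S+\eta C$ in $\tau$, using $\psi_\tau=-C$ and $\eta_\tau=S-\nu$:
\[
x_\tau=-CS+\psi C+(S-\nu)C-\eta S=\psi C-\nu C-\eta S.
\]
Substituting $\psi-\nu=\rho_f-\rho_B+(\sin\alpha-S)+\ldots$ requires care. Writing $\rho_B=\nu-\sin\alpha$ gives $\psi-\nu=-g-S$. Likewise $\eta=q-C$. Hence
\[
x_\tau=(-g-S)C-(q-C)S=-gC-qS,
\]
as claimed. When $q>0$ and $0<\theta<\pi/2$, both terms are negative, so $x_\tau<0$. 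This monotone decrease of $x$ also shows that, in the regime $q>0$, $\theta<\pi/2$, the trajectory moves steadily toward the axis. That supports the ordering argument of the previous paragraph, since no return to $x>0$ can happen before axis contact.
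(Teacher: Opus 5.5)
\textbf{Gap in the steering claim.} Your computations are correct: $\lambda_y=S/g>0$ on $[\alpha,\pi)$, $\Phi_\omega=-\psi/g$, and $x_\tau=-gC-qS$ by direct differentiation. The paper instead writes the primary as $\Bbar_+(\tau)-g(\sin\theta,\cos\theta)$, which is equivalent. Your case $\eta<0$ at the first zero $\theta_\omega=\arcsin(\rho_f+\sin\alpha)<\pi/2$ of $\psi$ is exactly the paper's argument. The gap is the case $\eta\ge0$, which you leave open, and neither tool you propose closes it.

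The bearing claim is unjustified. The lemma does not assume that the arc stays in the cone, and a bearing starting at $\alpha$ can reach $\pi/2$ without passing through $0$, after a return through $M_+$. Then there is no axis contact at all, as your phrase ``axis or cone contact'' concedes. A cone contact does not give the lemma's conclusion.

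The monotonicity $x_\tau<0$ does not help either, for two reasons:
\begin{enumerate}
\item A decreasing $x$ is compatible with $x=\eta\cos\theta_\omega\ge0$ at $\theta_\omega$.
\item The sign of $x_\tau$ needs $q>0$, which for $\nu>1$ can fail before $\theta_\omega$. For example, $\nu=2$, $\alpha=10^\circ$, $\rho_f=0.8$ give $\theta_c\approx38^\circ$ but $\theta_\omega\approx77^\circ$.
\end{enumerate}

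\textbf{The fix.} You already wrote down the needed ingredient: $\eta(0)=0$ and $\eta_\tau=\sin\theta-\nu<0$ because $\nu>1$, so $\eta<0$ for every $\tau>0$. The case $\eta\ge0$ is therefore vacuous. At $\theta_\omega$ one has $x=\eta\cos\theta_\omega<0$, while $x(0)=\rho_f\sin\alpha>0$, so the axis is crossed first. At that crossing $\psi=yC>0$, so the contact is with the front axis. This is where the paper uses $\nu>1$, and with this step your proof is complete.
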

\begin{proof}
Let $b=\rho_f+\sin\alpha$, so $\psi=b-\sin\theta$. If $b\ge1$, then $\psi>0$ for $\theta<\pi/2$. If $b<1$, at the first zero $\theta_\omega=\arcsin b$ of $\psi$ we have $\eta<0$, because $\eta(\alpha)=0$ and $\eta'=\sin\theta-\nu<0$ for every $\nu>1$; hence $x=\eta\cos\theta_\omega<0$ there, while $x=\rho_f\sin\alpha>0$ at launch, so the axis is reached first. The speed switch is at $\theta=\pi$. For the last claim write $\x=\Bbar_+(\tau)-g(\sin\theta,\cos\theta)$, with $x_B=\nu S-1+qC$ the abscissa of $\Bbar_+$ \PI{proof of Proposition~6(c)}; then $x_\tau=-gC-qS$.
\end{proof}

For $\nu\le1$ every primary reaches the axis while $q>0$ \PI{Proposition~6}. For $\nu>1$ the sign of $q$ may change before the axis contact; this is the origin of the folded primaries of regime F2 (Section~\ref{sec:F2}).

\subsection{The end of the axis dispersal line}
The primaries that reach the front axis meet their mirror images there, and Proposition~4 of Part~I admits the contact only while $\lambda_y\le1/(\nu-1)$.

\begin{proposition}[The point $E$]\label{prop:E}
Let $S=\sin(\alpha+\tau)$, $C=\cos(\alpha+\tau)$ and
\begin{equation}\label{eq:fE}
f(\tau)=\cos\alpha-\nu\tau+\frac{\nu CS}{1+S}-C,\qquad 0\le\tau\le\tfrac{\pi}{2}-\alpha .
\end{equation}
Then $f$ has exactly one root $\tau_*$; let $S_*$, $C_*$ be the corresponding values. The primary with launch parameter $\rho_E=\nu-\sin\alpha-(\nu-1)S_*\ge1-\sin\alpha>0$ reaches the front axis for the first time at retro-time $\tau_*$, without leaving the cone and without a switch of the pursuer's controls, at
\begin{equation}\label{eq:E}
E=(0,y_E)^\top,\qquad y_E=\frac{\nu C_*}{1+S_*},\qquad W_E=\tau_*,
\end{equation}
where it arrives with $\lambda_y=1/(\nu-1)$. Axis contacts below $E$ satisfy the subsolution test, and those above $E$ violate it.
\end{proposition}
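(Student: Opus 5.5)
The plan is to turn the contact condition into the scalar equation $f(\tau)=0$, then prove uniqueness of the root, the location of $E$, and the first-contact and no-switch properties. Along the primary~\eqref{eq:primary} the costate is $\boldsymbol\lambda=(-C,S)^\top/g$, so $\lambda_y=S/g$ with $g=\rho_B-\rho_f$.

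The condition $\lambda_y=1/(\nu-1)$ means $g=(\nu-1)S$, that is $\rho_f=\nu-\sin\alpha-(\nu-1)S$. Then $\psi=\rho_f-S+\sin\alpha=\nu(1-S)$ and $\eta=q-C$. Axis contact $x=\psi S+\eta C=0$ reads
\[
(q-C)C=-\nu(1-S)S=-\frac{\nu C^2S}{1+S}.
\]
Dividing by $C$ gives exactly $f(\tau)=0$; this is legitimate because $C>0$ for $\theta<\pi/2$. Substituting back,
\[
y=\psi C-\eta S=\nu(1-S)C+\frac{\nu CS^2}{1+S}=\frac{\nu C}{1+S},
\]
which is~\eqref{eq:E}. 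The bound $\rho_E\ge1-\sin\alpha$ follows from $S\le1$.

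For existence and uniqueness of $\tau_*$, I will evaluate $f$ at the two ends and show it is strictly decreasing. At $\tau=0$ the first two and last terms cancel, so $f(0)=\nu\cos\alpha\sin\alpha/(1+\sin\alpha)>0$. At $\tau=\pi/2-\alpha$ we get $f=\cos\alpha-\nu(\pi/2-\alpha)<0$, since $\cos\alpha=\sin(\pi/2-\alpha)<\pi/2-\alpha$ and $\nu>1$. For the derivative, write $CS/(1+S)=C-C/(1+S)$, whose $\theta$-derivative is $1/(1+S)-S$. Hence
\[
f'=-\frac{\nu S}{1+S}-(\nu-1)S<0 .
\]
So there is exactly one root, and $\tau_*\in(0,\pi/2-\alpha)$, so $C_*>0$ and $y_E>0$.

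Next I will show that the primary with $\rho_f=\rho_E$ first meets the axis at $\tau_*$. By Lemma~\ref{lem:primfast}, the steering does not switch before the first axis contact or $\theta=\pi/2$, and the speed does not switch before $\theta=\pi$. So it suffices to show $x(\tau)>0$ on $(0,\tau_*)$ and that the path stays in $K$. For the sign of $x$ I would parametrize all axis contacts. The equation $x=0$ is affine in $\rho_f$ with coefficient $S>0$, so for each $\tau$ it gives a unique launch $\rho_f^0(\tau)$. Positivity of the Jacobian $\rho_B-\rho_f$ from Proposition~2(i) of Part~I should make $\tau\mapsto\rho_f^0(\tau)$ monotone. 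Then the primary launched at $\rho_E=\rho_f^0(\tau_*)$ cannot have reached the axis at any earlier $\tau$. I expect this monotonicity, together with the fact that the path does not re-cross the right ray $M_+$ before $\tau_*$ (a Part~I-type argument using the inward drift near $\UP_+$ and $y>0$), to be the main obstacle. If it resists, my fallback is a direct argument on $x(\tau)=(b-S)S+(q-C)C$ with $b=\rho_E+\sin\alpha$, using $x_\tau=-gC-qS$.

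Finally, for the subsolution test I will parametrize the axis contacts by the contact retro-time $\tau$, along which $\lambda_y=S/(\rho_B-\rho_f^0(\tau))$. I will show that $\lambda_y$ and the contact height $y$ both increase with $\tau$ along this curve, using the same monotonicity as in the previous step. Then $\lambda_y<1/(\nu-1)$ strictly below $E$ and $\lambda_y>1/(\nu-1)$ above it. By the criterion of Proposition~4 of Part~I, contacts below $E$ pass the test and those above fail it.
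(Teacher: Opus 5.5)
Your derivation of $f(\tau)=0$ from $g=(\nu-1)S$ and $x=0$ is correct and matches the paper, and so are the formula for $y_E$, the bound on $\rho_E$ and the uniqueness of $\tau_*$ (endpoint signs and $f'=-\nu S/(1+S)-(\nu-1)S<0$). The gap is in the geometric part.

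\textbf{First contact.} The monotonicity of the contact label $\rho_f^0(\tau)$ does not follow from the positive Jacobian. Along the contact curve $d\rho_f^0/d\tau=-x_\tau/S$. The Jacobian is $Sy_\tau-Cx_\tau=g$, which does not control the sign of $x_\tau$. In regime F2, for instance, the primary $\rho_a$ of Lemma~\ref{lem:retained} is tangent to the axis at $\tau_a$ with $g>0$, and $\rho_f^0$ has a maximum there. On $[0,\tau_*]$ the monotonicity does hold, but for a reason you must supply: $d\rho_f^0/d\tau=(q-C+\nu CS)/S^2=\bigl[f(\tau)+\nu CS^2/(1+S)\bigr]/S^2>0$ there. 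Hence $x(\rho_E,\tau)=S\bigl(\rho_E-\rho_f^0(\tau)\bigr)>0$ for $\tau<\tau_*$. The paper argues instead along the $\rho_E$-primary itself: $x'=-qS-g_*C$, the ratio $x'/S=-q-g_*\cot\theta$ increases, and $x'(\tau_*)=-\nu S_*C_*/(1+S_*)<0$ by $f(\tau_*)=0$. So $x$ decreases to its first zero at $\tau_*$.

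\textbf{Staying in the cone.} This is not proved. An inward drift at $\UP_+$ only shows that the arc enters $K$ at launch, not that it avoids returning to $M_+$ before $\tau_*$. The missing idea is that $d=y\sin\alpha-x\cos\alpha$ satisfies $d'=g_*\cos\tau+q\sin\tau$. Then $d'/\sin\tau=g_*\cot\tau+q$ is decreasing, so $d'$ changes sign at most once, from positive to negative. With $d(0)=0$ and $d(\tau_*)=y_E\sin\alpha>0$ this gives $d>0$ on $(0,\tau_*]$.

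\textbf{Classifying the contacts.} Your argument needs $\lambda_y=S/g$ to increase along all contacts above $E$, i.e.\ $\rho_f^0$ nondecreasing past $\tau_*$. Neither the Jacobian nor the sign of $f$ (negative there) provides this. It is also unnecessary. Your own computation, done for a general contact rather than only the equality case, gives $g-(\nu-1)S=(C/S)f(\tau)$ at every axis contact. So the test $\lambda_y\le1/(\nu-1)$ is equivalent to $f(\tau)\ge0$, i.e.\ to $\tau\le\tau_*$. The contact height $y_D=(C-q)/S$ has $y_D'=g/S>0$ wherever $g>0$, which identifies contacts below and above $E$ with $\tau<\tau_*$ and $\tau>\tau_*$. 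This is how the paper proceeds.
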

\begin{proof}
\emph{The root of $f$.} A primary that meets the axis at retro-time $\tau$ does so at the height $y_D(\tau)=(C-\cos\alpha+\nu\tau)/S$, by~\eqref{eq:Gscalar} with $x=0$, with gradient $(-C,S)/g$, so $\lambda_y=S/g$, where $g=\rho_B-\rho_f=\nu-S-y_DC$ by~\eqref{eq:rhof}. Differentiating, $y_D'=g/S$. Since $\nu C^2/(1+S)=\nu(1-S)$,
\[
g-(\nu-1)S=\nu(1-S)-\frac CS\,(C-\cos\alpha+\nu\tau)=\frac CS\,f(\tau),
\]
so the test $\lambda_y\le1/(\nu-1)$, i.e.\ $g\ge(\nu-1)S$, is equivalent to $f(\tau)\ge0$. Now $f(0)=\nu\cos\alpha\sin\alpha/(1+\sin\alpha)>0$; $f(\pi/2-\alpha)=\cos\alpha-\nu(\pi/2-\alpha)<0$, because $\pi/2-\alpha>\cos\alpha$ and $\nu>1$; and $f'(\tau)=S[1-\nu(S+2)/(S+1)]<0$. Hence $f$ has exactly one root $\tau_*$, with $f>0$ before it and $f<0$ after it, and $y_D$ increases with $\tau$ on the axis contacts of admissible primaries, where $g>0$.

\emph{The primary through $E$.} Let $g_*=(\nu-1)S_*$. The primary with $\rho_E=\rho_B-g_*$ has $g=g_*$ along it. Differentiating~\eqref{eq:primary}, $x'=-qS-g_*C$. Thus $x'/S=-q-g_*\cot(\alpha+\tau)$ is increasing, because $q$ and the cotangent decrease, and $x'(\tau_*)=-\nu S_*C_*/(1+S_*)<0$ by $f(\tau_*)=0$. So $x'<0$ on $[0,\tau_*]$, and $x$ decreases from $\rho_E\sin\alpha>0$ to its first zero at $\tau_*$. The distance $d=y\sin\alpha-x\cos\alpha$ from $M_+$ satisfies $d'=g_*\cos\tau+q\sin\tau$ \PI{proof of Proposition~6(d)}. Since $d'/\sin\tau=g_*\cot\tau+q$ is decreasing, $d'$ changes sign at most once, from positive to negative. As $d(0)=0$ and $d(\tau_*)=y_E\sin\alpha>0$, $d>0$ on $(0,\tau_*]$, and the primary stays in the cone. Finally $0<\rho_E<\rho_B$, and Lemma~\ref{lem:primfast} excludes a switch.
\end{proof}

The evader's axis dispersal line is $\De=\{0\}\times(0,y_E)$. Numerically, $(y_E,W_E,\rho_E)=(0.55154,0.46673,0.86653)$ for $(\nu,\alpha)=(1.1,10^\circ)$ and $\rho_E=0.58387$ for $(1.5,30^\circ)$.

\subsection{The universal line and its tributaries}\label{sec:univ}
By Proposition~1(ii) of Part~I, the only singular arc is the front axis with
\begin{equation}\label{eq:lamU}
\boldsymbol\lambda_U=(0,\Lambda_U)^\top,\qquad \Lambda_U=\frac{1}{\nu-1},
\end{equation}
evader control $\mathbf u^*=(0,-\nu)$ and pursuer control $v_p=-1$, $\omega_p=0$, a straight backward motion, which an actual Reeds--Shepp pursuer realizes exactly. Above $E$ the evader closes at the relative speed $\nu-1$, and
\begin{equation}\label{eq:U}
\U=\{(0,y)^\top:\ y>y_E\},\qquad W_U(y)=W_E+\frac{y-y_E}{\nu-1}.
\end{equation}
At $E$ the tangential derivative of $W$ along the axis is continuous, because the arriving primary has $\lambda_y=\Lambda_U$. $W_U$ is consistent with the bound $\rho/(\nu-1)$ of Proposition~3(c) of Part~I, since $W_E\le y_E/(\nu-1)$: indeed $y_ES_*=C_*-\cos\alpha+\nu W_E\ge(\nu-1)W_E$ because $\cos(\alpha+\tau)\ge\cos\alpha-\tau$, and $S_*\le1$.

\paragraph{Tributaries} Characteristics leave $\U$ on both sides with saturated steering; on the right, $(v_p,\omega_p)=(-1,-1)$. With retro-time $\sigma$ measured from the departure point $(0,s)$, $s>y_E$,
\begin{equation}\label{eq:trib}
x=(s+\nu\sigma)\sin\sigma-(1-\cos\sigma),\qquad y=(s+\nu\sigma)\cos\sigma-\sin\sigma,
\end{equation}
with $\boldsymbol\lambda=\Lambda_U(\sin\sigma,\cos\sigma)^\top$ and $W=W_U(s)+\sigma$. The family is inverted in closed form: for $y>0$, let
\begin{equation}\label{eq:tribinv}
R=\sqrt{y^2+|x|(|x|+2)},\qquad \sigma=2\arctan\frac{|x|}{R+y},\qquad s=R-\nu\sigma .
\end{equation}
Then $(|x|+1)\cos\sigma-y\sin\sigma=1$ and $(|x|+1)\sin\sigma+y\cos\sigma=R$, so $(s,\sigma)$ are the parameters of the tributary through $\x$ (of its mirror image for $x<0$), and the value it carries is
\begin{equation}\label{eq:WT}
\Wt(\x)=W_E+\frac{R-y_E-\sigma}{\nu-1}.
\end{equation}
We write $W_T$ for $\Wt$ on the tributary sheet $\{s\ge y_E\}$, where it is the value carried by a genuine tributary; on the axis $\Wt=W_U$.

\begin{proposition}[Tributary field]\label{prop:trib}
$\Wt$ is a classical solution of the Isaacs equation $H^*(\x,\nabla\Wt)=0$ on the open half-plane $\{y>0\}$. It is $C^1$ there, its gradient is Lipschitz on compact subsets, and for $x\ge0$
\begin{equation}\label{eq:gradT}
\nabla\Wt=\frac{(\sin\sigma,\cos\sigma)^\top}{\nu-1},\qquad \Phi_\omega=-\frac{1-\cos\sigma}{\nu-1}\le0,
\end{equation}
so the saddle controls are $\mathbf u_T=-\nu(\sin\sigma\,\sgn x,\cos\sigma)^\top$ and the relay $(v_p,\omega_p)=(-1,-\sgn x)$. Moreover:
\begin{enumerate}[label=(\alph*)]
\item under the relay, with Filippov sliding on $x=0$, $\frac{d}{dt}\Wt(\x(t))\ge-1$ for every measurable evader control;
\item under $\mathbf u_T$ and any admissible pursuer control,
\begin{equation}\label{eq:WTdot}
\frac{d}{dt}\Wt(\x(t))=-1-\frac{\Delta_v\cos\sigma+\Delta_\omega(1-\cos\sigma)}{\nu-1}\le-1,
\end{equation}
where $\Delta_v=v_p+1\ge0$ and $\Delta_\omega=\omega_p+1\ge0$ for $x\ge0$, with the mirrored quantities for $x<0$.
\end{enumerate}
\end{proposition}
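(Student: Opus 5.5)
The plan is to work in the coordinates $(R,\sigma)$ of \eqref{eq:tribinv} for $x\ge0$ and extend by the mirror symmetry $x\mapsto-x$. I will use the two identities stated after \eqref{eq:tribinv}: $(x+1)\cos\sigma-y\sin\sigma=1$ and $(x+1)\sin\sigma+y\cos\sigma=R$.

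\emph{Gradient.} Differentiating the first identity gives
\[
\cos\sigma\,dx-\sin\sigma\,dy-R\,d\sigma=0,
\]
since the coefficient of $d\sigma$ is $-(x+1)\sin\sigma-y\cos\sigma=-R$. Differentiating $R^2=y^2+x(x+2)$ gives $R\,dR=(x+1)dx+y\,dy$. Writing $(x+1,y)=(\cos\sigma+R\sin\sigma,\,R\cos\sigma-\sin\sigma)$, which follows by solving the two identities, I expect
\[
dR-d\sigma=\sin\sigma\,dx+\cos\sigma\,dy.
\]
This is \eqref{eq:gradT}, once \eqref{eq:WT} is divided by $\nu-1$.

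\emph{Isaacs equation.} Then $\Lambda=1/(\nu-1)$ and $|\lambda_y|=\cos\sigma/(\nu-1)$; here $\cos\sigma\ge0$ because $\sigma<\pi/2$ for $y>0$, since $\arctan$ of $x/(R+y)<1$ is below $\pi/4$. Also
\[
\Phi_\omega=\frac{y\sin\sigma-x\cos\sigma}{\nu-1}=\frac{1-\cos\sigma}{\nu-1}\,(-1)
\]
by the first identity. So
\[
H^*=1-\frac{\nu}{\nu-1}+\frac{\cos\sigma+1-\cos\sigma}{\nu-1}=0.
\]

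\emph{Regularity.} $R$ and $\sigma$ are analytic on $\{y>0,\ x>0\}$. On $x=0$ the gradient $(\sin\sigma\,\sgn x,\cos\sigma)/(\nu-1)$ tends to $(0,1)/(\nu-1)$ from both sides, since $\sigma\to0$. The map $|x|$ composed with a smooth function of $|x|$ that is odd in its first component is $C^1$. For the Lipschitz claim, $\sigma$ is a Lipschitz function of $(|x|,y)$ on compacts with $y>0$, hence so is $\sin\sigma\,\sgn x$. This corner at $x=0$ is the only point requiring care, and I expect it to be the main (mild) obstacle: the relevant quantities are even or odd in $x$ with $\sigma\sim x/(2y)$, and I will check continuity of the gradient there directly.

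\emph{(b).} For $x\ge0$, compute $\nabla\Wt\cdot\dot\x$ with $\mathbf u_T=-\nu(\sin\sigma,\cos\sigma)$:
\[
\nabla\Wt\cdot\dot\x=-\frac{\nu}{\nu-1}+\frac{-v_p\cos\sigma+\omega_p\,(y\sin\sigma-x\cos\sigma)}{\nu-1}.
\]
Using $y\sin\sigma-x\cos\sigma=-(1-\cos\sigma)$ and substituting $v_p=\Delta_v-1$, $\omega_p=\Delta_\omega-1$, this becomes
\[
-1-\frac{\Delta_v\cos\sigma+\Delta_\omega(1-\cos\sigma)}{\nu-1}.
\]
Nonpositivity of the correction follows from $|\omega_p|\le|v_p|\le1$, which makes $\Delta_v,\Delta_\omega\ge0$. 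The case $x<0$ is the mirror image. On $x=0$, $\Wt$ is $C^1$, so the chain rule holds a.e. along any absolutely continuous trajectory, and both one-sided formulas agree there since $\sigma=0$.

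\emph{(a).} Under the relay with $x>0$ the pursuer's terms contribute $(\cos\sigma+1-\cos\sigma)/(\nu-1)=1/(\nu-1)$. Meanwhile $\nabla\Wt\cdot\mathbf u\ge-\nu/(\nu-1)$ by Cauchy--Schwarz, so the total is $\ge-1$. On a sliding interval on $x=0$, the Filippov velocity has some $\omega_p\in[-1,1]$. The gradient there is $(0,\Lambda_U)$, so only $\dot y=u_y+1$ matters, and again
\[
\frac{d}{dt}\Wt\ge\frac{-\nu+1}{\nu-1}=-1.
\]
Since $\Wt\circ\x$ is absolutely continuous, the almost-everywhere bound suffices.
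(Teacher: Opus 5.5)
Your proposal is correct and follows essentially the paper's proof: the same $(R,\sigma)$ identities give $\nabla\Wt=(\sin\sigma,\cos\sigma)^\top/(\nu-1)$, $\Phi_\omega=(\cos\sigma-1)/(\nu-1)$ and $H^*=0$; regularity across the axis comes from $\sigma=O(|x|)$; and (a), (b) are the same direct computations of $\nabla\Wt^\top\dot{\mathbf x}$, with the axis case reduced to $\dot y=u_y+1$. Two small points: your justification of $\cos\sigma>0$ is a useful addition the paper leaves implicit, and near the axis $\sigma\sim|x|/y$ rather than $x/(2y)$, a slip that affects nothing.
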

\begin{proof}
In the coordinates $(R,\sigma)$, $x+1=\cos\sigma+R\sin\sigma$ and $y=R\cos\sigma-\sin\sigma$ for $x\ge0$, with Jacobian determinant $-R\ne0$. Since $(\nu-1)\Wt=R-\sigma+\text{const}$, this gives~\eqref{eq:gradT}; the sign of $\Phi_\omega$ uses $y\sin\sigma-x\cos\sigma=\cos\sigma-1$. Then $\lambda_y>0$ and $H^*=1-\nu/(\nu-1)+\cos\sigma/(\nu-1)+(1-\cos\sigma)/(\nu-1)=0$. As $x\to0^\pm$, $\sigma\to0$ and $\nabla\Wt\to(0,1)^\top/(\nu-1)$ from both sides. Since $\sigma=O(|x|)$ and $R$ is Lipschitz, $\sin\sigma\,\sgn x$ is Lipschitz, and so are $\nabla\Wt$ and $\mathbf u_T$. Under $(-1,-1)$, $\dot\x=(u_x-y,\,u_y+1+x)$ and
\[
(\nu-1)\tfrac{d}{dt}\Wt=\mathbf u^\top(\sin\sigma,\cos\sigma)^\top+(x+1)\cos\sigma-y\sin\sigma\ge-\nu+1 .
\]
During sliding on the axis, $\dot x=0$ and $\dot y=u_y+1\ge1-\nu$, so $\frac{d}{dt}\Wt=\dot y/(\nu-1)\ge-1$. This proves (a). For (b), the deviation from $(-1,-1)$ adds $(\Delta_\omega y,\,-\Delta_v-\Delta_\omega x)$ to $\dot\x$, and the same identities give~\eqref{eq:WTdot}. On the bow tie $|\omega_p|\le|v_p|\le1$, both $\Delta_v$ and $\Delta_\omega$ are nonnegative.
\end{proof}

The identities in this proof were also verified symbolically (\OR{S7}). The tributaries in which the pursuer turns toward the evader therefore feed the universal line $\U$ from both sides, with the same relay as the primaries; $\U$ is a universal line of Isaacs' type \citep{isaacs1965,lewin1994}. In the optimal forward play, an evader far from the apex first moves toward the axis, then runs straight at the backing pursuer down to $E$, and finally veers to one side and escapes along a primary (Fig.~\ref{fig:fwd}).

\subsection{The barrier and its cusp}\label{sec:barfast}
Near $B_+$ the usable part is thin: the escape speed $\nu-\sin\alpha-\rho_f=\rho_B-\rho_f$ at the terminal point, the inverse of the terminal costate magnitude \PI{Eq.~(15)}, tends to zero as $\rho_f\uparrow\rho_B$. States on one side of the $\BUP$ characteristic escape quickly through this part of the $\UP$, whereas states on the other side cannot be driven across it and must escape elsewhere. Let
\begin{equation}\label{eq:taucusp}
\tau_c=\frac{\cos\alpha}{\nu},\qquad \theta_c=\alpha+\tau_c=\alpha+\frac{\cos\alpha}{\nu}.
\end{equation}

\begin{proposition}[$\BUP$ characteristic for $\nu>1$]\label{prop:barfast}
Let $\Bbar_+$ be the $\BUP$ characteristic issued from $B_+$, with costate direction $\mathbf n(\theta)=(-\cos\theta,\sin\theta)^\top$.
\begin{enumerate}[label=(\roman*)]
\item For $\theta\in[\alpha,\pi)$ the pursuer's controls along $\Bbar_+$ do not switch, $(v_p,\omega_p)=(-1,-1)$, and $H_A^*\equiv0$. This holds in particular on $[0,\tau_c]$, because $\theta_c<\pi/2$.
\item Along $\Bbar_+$,
\begin{equation}\label{eq:bartangent}
(x',y')=-q\,(\sin\theta,\cos\theta),\qquad d'=q\sin(\theta-\alpha),\qquad \rho^2\varphi'=q\eta,
\end{equation}
where $d=y\sin\alpha-x\cos\alpha$ and $\eta<0$ for $\tau>0$. Hence, on $(0,\tau_c)$, $x$ and $y$ decrease, the distance from $M_+$ increases and the bearing decreases. At $\tau_c$ the velocity vanishes and reverses, and $\Bbar_+$ has a cusp at
\begin{equation}\label{eq:Pc}
P_c=\bigl(\nu\sin\theta_c-1,\ \nu\cos\theta_c\bigr)^\top .
\end{equation}
\item $\Bbar_+$ reaches the front axis at or before its cusp if and only if $\nu\sin\theta_c\le1$. Otherwise it stays in the open right half of $K$, at a positive distance from $M_+$, on $(0,\tau_c]$.
\item $\Bbar_+$ is semipermeable: by playing $(-1,-1)$ the pursuer ensures $\mathbf n^\top\dot\x\ge0$ whatever the evader does, and by playing $\mathbf u=-\nu\mathbf n$ the evader ensures $\mathbf n^\top\dot\x\le0$ whatever the pursuer does.
\end{enumerate}
\end{proposition}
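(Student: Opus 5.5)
Throughout I work with the explicit path of $\Bbar_+$: \eqref{eq:primary} with $\rho_f=\rho_B$, i.e.\ $\psi=\nu-S$, $\eta=\cos\alpha-C-\nu\tau$. Expanding \eqref{eq:primary} gives
\[
x=\nu S-1+qC,\qquad y=\nu C-qS .
\]
This recovers the abscissa $x_B$ quoted in the proof of Lemma~\ref{lem:primfast}. Differentiating in $\tau$, with $\theta'=1$ and $q'=-\nu$, the $\nu$ terms cancel and leave $(x',y')=-q(S,C)$.

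\emph{Part (ii).} From $(x',y')=-q(S,C)$, $d'=y'\sin\alpha-x'\cos\alpha=q(S\cos\alpha-C\sin\alpha)=q\sin(\theta-\alpha)$. For the bearing I use $\rho^2\varphi'=x'y-xy'$. In the rotated frame $\x=\psi(S,C)+\eta(C,-S)$, while the velocity $-q(S,C)$ is along the first basis vector, so the cross product is $q\eta$. I then check that $\eta(0)=0$ and $\eta'=S-\nu<0$, so $\eta<0$ for $\tau>0$. On $(0,\tau_c)$ we have $q>0$ and $\theta<\pi/2$, which gives the monotonicity claims. At $\tau_c$, $q=0$ changes sign while $(S,C)$ does not, so the velocity reverses and there is a cusp. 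Substituting $q=0$ gives $P_c$ in \eqref{eq:Pc}.

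\emph{Part (i).} I would mimic Lemma~\ref{lem:primfast} with $g=0$; this case was excluded there, which is why a separate argument is needed. The costate is parallel to $\mathbf n$, so $v_p^*=-\sgn\lambda_y=-1$ while $S>0$, i.e.\ for $\theta<\pi$. The steering switch function $\Phi_\omega\propto -Cy-Sx$ equals $-\psi$ by \eqref{eq:primary}, and $\psi=\nu-S>0$ for $\nu>1$. Hence $\omega_p^*=-1$ throughout. For $H_A^*=-\nu\Lambda+|\lambda_y|+|\Phi_\omega|$ with $\Lambda=1$, $\lambda_y=S$ and $|\Phi_\omega|=\nu-S$, the sum is identically zero. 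Finally $\theta_c<\pi/2$ because $\cos\alpha/\nu<\cos\alpha<\pi/2-\alpha$.

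\emph{Part (iii).} This is the step requiring the most care. On $(0,\tau_c]$, $x$ is strictly decreasing from $x(0)=\rho_B\sin\alpha>0$, and it reaches $x(\tau_c)=\nu\sin\theta_c-1$. By monotonicity, $x$ vanishes on $[0,\tau_c]$ if and only if $\nu\sin\theta_c\le1$. In the other case $x>0$ on $(0,\tau_c]$. The distance $d$ is increasing from $d(0)=0$, because $q>0$ and $\theta>\alpha$, so $d>0$ there. This places the arc in the open right half of $K$, with $y>0$ to be checked: $y$ decreases to $\nu\cos\theta_c>0$. The expected obstacle is ensuring that the arc has not previously left the cone in the case $\nu\sin\theta_c\le1$, so that ``reaches the axis'' is meaningful. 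This follows from $d>0$ together with $y>0$ along the arc.

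\emph{Part (iv).} I take $\mathbf n^\top\dot\x$ from \eqref{eq:reduced}. With the pursuer at $(-1,-1)$ this is $\mathbf n^\top\mathbf u+C(y)+S(1+x)$. On $\Bbar_+$, $Sx+Cy=\nu-q\cdot0$ by the formulas above; indeed $Sx+Cy=\nu-S$. So the expression equals $\mathbf n^\top\mathbf u+\nu\ge0$. If instead the evader plays $-\nu\mathbf n$, the Hamiltonian min-max structure gives $\mathbf n^\top\dot\x\le -\nu+\max_{\mathbf w}(\cdot)$. By part (i) this maximum is attained at $(-1,-1)$ with value $\nu$; equivalently, $H_A^*=0$ gives $\mathbf n^\top\dot\x\le0$ for every pursuer control.
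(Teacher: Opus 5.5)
Your proposal is correct and follows the paper's proof essentially step for step. You use the explicit path, written in the involute form $x=\nu S-1+qC$, $y=\nu C-qS$ instead of differentiating the $(\psi,\eta)$ form; the identity $Sx+Cy=\psi=\nu-S>0$ for both the steering switch and semipermeability; $\lambda_y\propto S>0$ for the speed; and the monotonicity of $x$ and $d$ for (iii). The only blemish is the stray ``$\nu-q\cdot 0$'' in part (iv), which you immediately correct to $Sx+Cy=\nu-S$.
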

\begin{proof}
(i) Since $\nu>1>\sin\theta$, $\psi=\nu-\sin\theta>0$, so $\Phi_\omega<0$ and $\omega_p^*=-1$; since $\lambda_y\propto\sin\theta>0$ on $(0,\pi)$, $v_p^*=-1$. Then $H_A^*/\Lambda=-\nu+\sin\theta+\psi=0$. The function $\alpha\mapsto\alpha+\cos\alpha$ increases on $(0,\pi/2)$ to $\pi/2$, so $\theta_c<\alpha+\cos\alpha<\pi/2$. (ii) Differentiating~\eqref{eq:primary} with $\psi'=-\cos\theta$ and $\eta'=\sin\theta-\nu$ gives $x'=\cos\theta(\psi-\nu)-\eta\sin\theta=-\sin\theta(\cos\theta+\eta)=-q\sin\theta$, and $y'$ is analogous. The expressions for $d'$ and $\rho^2\varphi'=x'y-y'x$ follow from $y\sin\theta-x\cos\theta=-\eta$. Since $\eta(0)=0$ and $\eta'<0$, $\eta<0$ for $\tau>0$. The sign of $q$ changes at $\tau_c$, which gives the cusp and the minimum of the bearing; at $\tau_c$, $\eta=-\cos\theta_c$, which gives~\eqref{eq:Pc}. (iii) By (ii), $x$ decreases strictly on $[0,\tau_c]$ from $\rho_B\sin\alpha>0$ to $x(P_c)=\nu\sin\theta_c-1$, and $d$ increases from $0$. (iv) By~\eqref{eq:reduced}, $\mathbf n^\top\dot\x=\mathbf n^\top\mathbf u-v_p\sin\theta+\omega_p\Phi_\omega/\Lambda$. With $(v_p,\omega_p)=(-1,-1)$ this is at least $-\nu+\sin\theta+\psi=0$; with $\mathbf u=-\nu\mathbf n$ it is at most $-\nu+\sin\theta+\psi=0$.
\end{proof}

\paragraph{Geometry of the $\BUP$ characteristic} By~\eqref{eq:primary} with $\psi=\nu-\sin\theta$,
\begin{equation}\label{eq:involute}
(x+1,\,y)=\nu(\sin\theta,\cos\theta)+q\,(\cos\theta,-\sin\theta),\qquad q=\nu(\theta_c-\theta),
\end{equation}
so $\Bbar_+$ is an involute of the circle of radius $\nu$ centred at $(-1,0)$: the point lies on the tangent to that circle at $\nu(\sin\theta,\cos\theta)-(1,0)$, at the distance $q$ equal to the arc still to be unwound. The cusp $P_c$ is the point where the involute reaches the circle, $q=0$. On that circle the evader's speed $\nu$ equals the speed with which the turning pursuer sweeps the plane, so the barrier cannot be continued smoothly beyond it. This is the mechanism of the barrier cusp in the forward-only game, whose barrier ``winds the $1/w$-circle'' and ends where it reaches it \citep[Section~6]{lewin1979conic}; see Section~\ref{sec:lewin}.

We write $P_{c\pm}$ for the cusp and its mirror image. By Proposition~\ref{prop:barfast}(iii), the condition
\begin{equation}\label{eq:alphaF}
\nu\sin\Bigl(\alpha+\frac{\cos\alpha}{\nu}\Bigr)>1
\end{equation}
defines \emph{regime F2}, in which $\Bbar_+$ ends at its cusp inside the right half of the cone; \emph{regime F1} is its complement within $\nu>1$. For each $\alpha$ the left side of~\eqref{eq:alphaF} increases with $\nu$, so the boundary $\alpha=\alpha_F(\nu)$ is a single curve (Fig.~\ref{fig:regimes}). F1 consists of narrow cones for moderate $\nu$: $\alpha_F=21.5^\circ$ at $\nu=1.05$, $9.3^\circ$ at $\nu=1.2$ and $1.4^\circ$ at $\nu=2$, widening toward $\alpha=\pi/2$ as $\nu\downarrow1$ (e.g.\ $53.0^\circ$ at $\nu=1.001$). This classification concerns the geometric characteristic only; which branch carries the value near the cusp is decided by a second comparison, which splits F2 into F2a and F2b along a curve $\alpha=\alpha_Z(\nu)$ (Section~\ref{sec:F2}).

\begin{figure}[!htbp]
\centering
\includegraphics[width=0.8\textwidth]{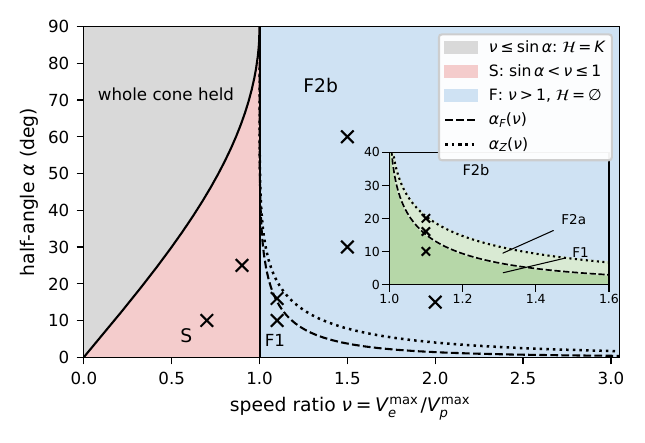}
\caption{Regimes of the conic escape game. Solid curves: $\nu=\sin\alpha$ and $\nu=1$ (Part~I). Dashed: $\alpha_F(\nu)$, at or below which the $\BUP$ characteristic reaches the front axis at or before its cusp (regime F1, Section~\ref{sec:F1}); above it lies regime F2 (Section~\ref{sec:F2}). Dotted: $\alpha_Z(\nu)$, which separates F2a, where the jump ends before the cusp, from F2b. The value is proved in F1, F2a and F2b (Sections~\ref{sec:F1}, \ref{sec:F2a} and~\ref{sec:F2b}), and on the curve $\alpha_Z$. The inset enlarges $1<\nu\le1.6$. Crosses: examples used here and in Part~I.}
\label{fig:regimes}
\end{figure}

\section{Regime F1: solution and value of the game}\label{sec:F1}
Throughout this section $\nu>1$ and regime F1 holds:
\begin{equation}\label{eq:F1}
\nu\sin\theta_c\le1 .
\end{equation}
By Proposition~\ref{prop:barfast}(iii), $\Bbar_+$ reaches the front axis at or before its cusp, at the first root $\theta_I$ of $\Gamma(\theta)=\nu\sin\theta-1+\cos\alpha\cos\theta-\nu(\theta-\alpha)\cos\theta$, at the point $I=(0,y_I)^\top$ and retro-time $\tau_I=\theta_I-\alpha$, with
\begin{equation}\label{eq:yI}
y_I=\frac{\nu-\sin\theta_I}{\cos\theta_I},\qquad \alpha<\theta_I\le\theta_c<\frac{\pi}{2},
\end{equation}
by the computation of Proposition~5 of Part~I, which uses only the path of $\Bbar_+$. Indeed $\Gamma'(\theta)=\sin\theta\,[\nu(\theta-\alpha)-\cos\alpha]<0$ on $(\alpha,\theta_c)$ and $\Gamma(\theta_c)=\nu\sin\theta_c-1\le0$, so $\theta_I<\theta_c$ in the interior of F1 and $\theta_I=\theta_c$, i.e.\ $I=P_c$, on its boundary $\nu\sin\theta_c=1$. In both cases $q>0$ on $[0,\tau_I)$, which is all that the arguments below use. Unlike the slower case, this contact does not close a holding region: the universal line and its tributaries supply values above $I$. Theorem~\ref{thm:F2a} in Section~\ref{sec:F2a} extends the results of this section to the part F2a of regime F2.

\subsection{The junction of primaries and tributaries}
For a label $\rho_f\in(0,\rho_B]$, with $\rho_f=\rho_B$ giving $\Bbar_+$, let $F(\rho_f,\tau)=\Wt(\x(\rho_f,\tau))-\tau$, so that a primary point carries the tributary value exactly where $F=0$.

\begin{proposition}[Junction in regime F1]\label{prop:F1}
Assume~\eqref{eq:F1}.
\begin{enumerate}[label=(\roman*)]
\item $y_I>y_E$ and $W_U(y_I)<\tau_I$.
\item Along a primary, $F_\tau=-\frac{\nu}{\nu-1}\bigl(1-\sin(\theta-\sigma)\bigr)$, where $\sigma$ is evaluated at $\x(\tau)$. On every arc used below, $F_\tau<0$.
\item Along $\Bbar_+$, $F$ has a unique root $\tau_Z\in(0,\tau_I)$. We write $Z_+=\Bbar_+(\tau_Z)$ and $Z_-$ for its mirror image.
\item For $\rho_E<\rho_f<\rho_B$, the primary meets the tributary sheet at a unique point where $F=0$, at a retro-time $\tau_J(\rho_f)$ before its axis contact. These points form two smooth curves $\Dpt^\pm$, dispersal lines of the evader, which join $E$ to $Z_\pm$.
\item The tributary sheet is admissible, $s>y_E$, along $\Dpt^\pm$ and along the arcs of the $\BUP$ characteristics from $Z_\pm$ to $I$.
\end{enumerate}
\end{proposition}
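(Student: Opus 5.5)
I would begin with (ii), since it drives every other item. Along a primary, $\frac{d}{d\tau}\Wt(\x(\tau))=\nabla\Wt^\top\x_\tau$. By~\eqref{eq:gradT}, for $x\ge0$ we have $\nabla\Wt=(\sin\sigma,\cos\sigma)^\top/(\nu-1)$. Differentiating~\eqref{eq:primary}, with $\psi'=-C$ and $\eta'=S-\nu$, gives $\x_\tau=(\psi-\nu)(C,-S)+\ldots$. It is cleaner to use the characteristic dynamics directly: in retro-time $\x_\tau=-\dot\x$ under the saddle controls $\mathbf u=-\nu\boldsymbol\lambda/\Lambda=\nu(\cos\theta,-\sin\theta)$ and $(v_p,\omega_p)=(-1,-1)$. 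Then $(\nu-1)\frac{d}{d\tau}\Wt=-\bigl[\nu(\cos\theta\sin\sigma-\sin\theta\cos\sigma)+(x+1)\cos\sigma-y\sin\sigma\bigr]$. The identity $(x+1)\cos\sigma-y\sin\sigma=1$ from~\eqref{eq:tribinv} turns the bracket into $1-\nu\sin(\theta-\sigma)$. So $F_\tau=-(1-\nu\sin(\theta-\sigma))/(\nu-1)-1$, which simplifies to the stated $-\frac{\nu}{\nu-1}(1-\sin(\theta-\sigma))$. This is $\le0$, with equality only when $\theta-\sigma=\pi/2$. On the arcs used below we have $\theta<\pi/2$ (Lemma~\ref{lem:primfast}, $\theta_I<\pi/2$) and $\sigma\ge0$, so $\theta-\sigma<\pi/2$ and $F_\tau<0$ strictly.

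For (i), I would use the parametrization of axis contacts in the proof of Proposition~\ref{prop:E}. There $y_D'=g/S>0$, and the primary through $E$ has $g_*=(\nu-1)S_*$. The point $I$ is the axis contact of the limiting label $\rho_f=\rho_B$, i.e.\ $g=0$. The contacts of labels $\rho_f\in(\rho_E,\rho_B)$ sweep the axis monotonically, provided each such primary reaches the axis before $q$ vanishes, which holds in F1 by comparison with $\Bbar_+$. So $y_I>y_E$ follows by continuity in $\rho_f$ once I check that $y_D$ increases with $\rho_f$ as well as with $\tau$.

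For $W_U(y_I)<\tau_I$, I would evaluate $F$ along $\Bbar_+$ at two ends. At $\tau=0$, $F(\rho_B,0)=\Wt(B_+)>0$. At $\tau_I$, $F=W_U(y_I)-\tau_I$. Alternatively I would integrate (ii) along the axis contacts from $E$, where $F=0$, using $\lambda_y>\Lambda_U$ above $E$ (Proposition~\ref{prop:E}). This gives $d\tau/dy>1/(\nu-1)=dW_U/dy$ along the contact curve, and hence $\tau_D(y_I)>W_U(y_I)$. I expect this comparison along the contact curve, including the degenerate endpoint $g=0$ at $I$, to be the main obstacle.

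Item (iii) then follows from $F(0)>0$, $F(\tau_I)<0$ and $F_\tau<0$. For (iv), the same argument applies along each primary with $\rho_f\in(\rho_E,\rho_B)$. At launch, $F=\Wt>0$. At the axis contact $F=W_U(y_D)-\tau_D<0$ by the contact-curve comparison above. By strict monotonicity there is a unique root $\tau_J$. The implicit function theorem, with $F_\tau\neq0$ and $\Wt\in C^1$, makes $\tau_J$ smooth in $\rho_f$. As $\rho_f\downarrow\rho_E$ the root tends to the axis contact $E$, where $F=0$, and as $\rho_f\uparrow\rho_B$ it tends to $\tau_Z$. The points are dispersal points because two distinct optimal branches (primary and tributary) carry equal value there. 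Mirror symmetry gives $\Dpt^-$.

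For (v), I would show $s=R-\nu\sigma>y_E$ along these arcs. Along a primary, $\frac{d}{d\tau}s$ is computable from the same identities: $dR/d\tau$ follows from $(x+1)\sin\sigma+y\cos\sigma=R$. At the ends we have $s=y_E$ at $E$ and $s=y_I>y_E$ at $I$. I would use the identity $\Wt=W_E+(s-y_E+(\nu-1)\sigma)/(\nu-1)$, so that $s>y_E$ is equivalent to $\Wt-W_E>\sigma$. On $\Dpt^\pm$, $\Wt=\tau_J$, so I need $\tau_J-W_E>\sigma$. I would verify this via monotonicity of $s$ along $\Dpt$ starting from $E$.
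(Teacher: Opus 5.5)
Your (i)--(iv) follow the paper's proof. You use the same $F_\tau$ computation from $\nabla\Wt$ and $(x+1)\cos\sigma-y\sin\sigma=1$, and the same comparison $\lambda_y>1/(\nu-1)$ along the axis contacts above $E$; your second alternative is the right one, while the first, reading off $F(\rho_B,\tau_I)$, is circular. The sign-change and implicit-function arguments are also the paper's. The endpoint $g=0$ at $I$ is not an obstacle if you integrate in $\tau$ instead of $y$. There $g-(\nu-1)S=(C/S)f(\tau)$, and $f<0$ for $\tau>\tau_*$ gives $y_D'=g/S<\nu-1$. Hence $y_I-y_E<(\nu-1)(\tau_I-\tau_*)$, which is $W_U(y_I)<\tau_I$.

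There are two gaps. The first is that you assume $F(\rho_f,0)=\Wt>0$ at the launch points: at $B_+$ in (iii), and for every $\rho_f\in(\rho_E,\rho_B)$ in (iv). This is not automatic, because $\Wt$ is negative near the apex, where it tends to $W_E-y_E/(\nu-1)<0$. The paper derives it from (ii). Along the $\rho_E$-primary, $F$ vanishes at $E$ and decreases strictly, so $F(\rho_E,0)>0$. Along $M_+$, where $0\le\sigma\le\alpha$, $\frac{d}{d\rho}\Wt=\cos(\alpha-\sigma)/(\nu-1)>0$, which carries the positivity to every $\rho_f\ge\rho_E$.

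The second and main gap is (v), which you only announce. Endpoint values do not settle it. On the $\BUP$ arc from $Z_+$ to $I$, knowing $s(Z_+)>y_E$ and $s(I)=y_I>y_E$ does not exclude a dip below $y_E$ in between. Your reformulation $\tau_J-W_E>\sigma$ is equivalent, not easier. What is needed is the differential of $s$. From $x+1=\cos\sigma+R\sin\sigma$ and $y=R\cos\sigma-\sin\sigma$ one gets $ds=\sin\sigma\,dx+\cos\sigma\,dy-\frac{\nu-1}{R}(\cos\sigma\,dx-\sin\sigma\,dy)$. With $\delta=\theta-\sigma$ this gives $s_{\rho_f}=\cos\delta-\frac{\nu-1}{R}\sin\delta$ and $s_\tau=-\nu(1-\sin\delta)+\frac{\nu(\nu-1)}{R}\cos\delta$.
\begin{itemize}
\item Along $\Dpt^+$, $d\tau_J/d\rho_f=\cos\delta/(\nu(1-\sin\delta))$, and the combination collapses to $ds/d\rho_f=(\nu-1)/R>0$. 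With $s(E)=y_E$ this gives $s>y_E$ on $\Dpt^\pm$.
\item Along $\Bbar_+$, the involute form gives $\nu\sin\delta+q\cos\delta=1$ and $\nu\cos\delta-q\sin\delta=R$, whence $s_\tau=-\frac{\nu}{R}\,q\,(1-\sin\delta)$. Since $q>0$ on $[0,\tau_I)$, $s$ decreases strictly from $B_+$ to $I$ and stays at least $s(I)=y_I>y_E$.
\end{itemize}
This second computation is where F1 enters (v), through $q>0$, and nothing in your sketch replaces it.
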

\begin{proof}
(i) By the proof of Proposition~\ref{prop:E}, the axis contacts of the primaries with labels in $[\rho_E,\rho_B]$ have heights $y_D(\tau)$ that increase with $\tau\in[\tau_*,\tau_I]$, from $y_E$ to $y_I$, and $d\tau/dy_D=\lambda_y>1/(\nu-1)$ for $\tau>\tau_*$ because $f(\tau)<0$ there. Integrating from $E$ to $I$ gives $\tau_I-W_E>(y_I-y_E)/(\nu-1)=W_U(y_I)-W_E$, and $y_I>y_E$.

(ii) Along a primary, $(x',y')=(-\nu\cos\theta+y,\ \nu\sin\theta-1-x)$. With~\eqref{eq:gradT} and $(x+1)\cos\sigma-y\sin\sigma=1$,
\[
\nabla\Wt^\top(x',y')^\top=\frac{\nu\sin(\theta-\sigma)-1}{\nu-1},
\]
and subtracting $1$ gives the formula. Equality $F_\tau=0$ requires $\theta-\sigma=\pi/2$. On the arcs used below, $0\le\sigma<\pi/2$ and $\alpha\le\theta\le\theta_I<\pi/2$, because these arcs end at or before their axis contact, which occurs by $\tau_I$; hence $F_\tau<0$.

\emph{Positivity at launch.} For the primary launched at $\rho_E$, $F(\rho_E,\tau_*)=0$ at $E$ and $F_\tau<0$ on $[0,\tau_*]$, so $F(\rho_E,0)>0$. Along the right ray, $\frac{d}{d\rho}\Wt(\rho\sin\alpha,\rho\cos\alpha)=\cos(\alpha-\sigma)/(\nu-1)>0$, so $F(\rho_f,0)>0$ for every $\rho_f\ge\rho_E$, including $B_+$.

(iii) Along $\Bbar_+$, $F(\rho_B,0)>0$ and $F(\rho_B,\tau_I)=W_U(y_I)-\tau_I<0$ by (i), because $\Wt=W_U$ on the axis. Since $F(\rho_B,\cdot)$ is strictly decreasing, it has a unique root $\tau_Z\in(0,\tau_I)$.

(iv) For $\rho_E<\rho_f<\rho_B$, $F(\rho_f,0)>0$, and at the axis contact $\tau_D(\rho_f)$, which lies above $E$, $F<0$ by the integration in (i). Since $F(\rho_f,\cdot)$ is strictly decreasing, it has a unique root $\tau_J(\rho_f)\in(0,\tau_D(\rho_f))$. Because $F_\tau<0$, the implicit function theorem makes $\tau_J$ smooth, so the points $\x(\rho_f,\tau_J(\rho_f))$ form a smooth curve. It tends to $E$ as $\rho_f\downarrow\rho_E$ and to $Z_+$ as $\rho_f\uparrow\rho_B$.

(v) Differentiating~\eqref{eq:tribinv} gives $ds=\sin\sigma\,dx+\cos\sigma\,dy-\frac{\nu-1}{R}(\cos\sigma\,dx-\sin\sigma\,dy)$. Let $\delta=\theta-\sigma$. With $\partial\x/\partial\rho_f=(\sin\theta,\cos\theta)$ and $\partial\x/\partial\tau=(x',y')$,
\[
s_{\rho_f}=\cos\delta-\frac{\nu-1}{R}\sin\delta,\qquad s_\tau=-\nu(1-\sin\delta)+\frac{(\nu-1)\nu}{R}\cos\delta .
\]
Along $\Dpt^+$, $F_{\rho_f}=\cos\delta/(\nu-1)$ and $F_\tau=-\nu(1-\sin\delta)/(\nu-1)$ give $d\tau_J/d\rho_f=\cos\delta/(\nu(1-\sin\delta))$, and
\[
\frac{ds}{d\rho_f}=s_{\rho_f}+s_\tau\frac{d\tau_J}{d\rho_f}=\frac{\nu-1}{R}\Bigl(\frac{\cos^2\delta}{1-\sin\delta}-\sin\delta\Bigr)=\frac{\nu-1}{R}>0 .
\]
Since $s=y_E$ at $E$, $s>y_E$ along $\Dpt^\pm$. Along $\Bbar_+$, while $x\ge0$, the involute form~\eqref{eq:involute} and the identities after~\eqref{eq:tribinv} give $\nu\sin\delta+q\cos\delta=1$ and $\nu\cos\delta-q\sin\delta=R$. Substituting $1-\sin\delta=(\nu-1)\sin\delta+q\cos\delta$ into $s_\tau$ gives
\begin{equation}\label{eq:sbar}
s_\tau=-\frac{\nu}{R}\,q\,(1-\sin\delta)\qquad\text{along }\Bbar_+ .
\end{equation}
Since $q>0$ on $[0,\tau_I)$ and $|\delta|<\pi/2$, $s$ decreases strictly from $B_+$ to $I$, and $s\ge s(I)=y_I>y_E$ on the whole arc.
\end{proof}

For $(\nu,\alpha)=(1.1,10^\circ)$, $y_I=0.55756$, $\tau_I=0.60655$, $W_U(y_I)=0.52691$, $Z_\pm=(\pm0.00541,0.56314)$ and $\tau_Z=0.58303$. Near $E$, $\Dpt^\pm$ have half-width $|x|\simeq40(y-y_E)^2$, so the channel between them is only about $8\times10^{-5}$ wide at $0.001$ above $E$ (Fig.~\ref{fig:F1}b). Such a narrow channel is hard to resolve on a grid (Section~\ref{sec:num}).

\subsection{The primary chart and the tributary sheet}
Let $\Cset_+$ be the open \emph{primary chart} of the right half, the image of $\{(\rho_f,\tau):0<\rho_f<\rho_B,\ 0<\tau<\tau_D(\rho_f)\}$ under~\eqref{eq:primary}, where $\tau_D$ is the first axis contact.

\begin{lemma}[Primary chart in F1]\label{lem:prim}
Assume~\eqref{eq:F1}. Every primary with $0<\rho_f<\rho_B$ reaches the axis at a time $\tau_D(\rho_f)<\tau_I$, with $x_\tau<0$ and $\psi>0$ before it, so primaries in $\Cset_+$ do not fold. On $\Cset_+$, the primary value $W_P$, the first positive root of~\eqref{eq:Gscalar}, is the unique root in $(0,\tau_I)$ with $G_\tau>0$; it is smooth, with $\nabla W_P=(-C,S)^\top/g$. $\Cset_+$ is the region bounded by $\UP_+$, the axis segment $\{0\}\times(0,y_I)$ and the arc of $\Bbar_+$ from $B_+$ to $I$. Finally $\nu S<1$ on $\Cset_+$.
\end{lemma}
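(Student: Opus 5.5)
The proof runs on a comparison with the barrier. Lemma~\ref{lem:primfast} gives $\x(\rho_f,\tau)=\Bbar_+(\tau)-g(\sin\theta,\cos\theta)$ with $g=\rho_B-\rho_f>0$. For each $\tau\in[0,\tau_I)$ the primary point therefore lies on the ray from $\Bbar_+(\tau)$ in direction $-(\sin\theta,\cos\theta)$. By~\eqref{eq:bartangent} this ray is tangent to $\Bbar_+$ and points along its forward motion in retro-time while $q>0$. Abscissa comparison is the cleanest route. We have $x=x_B(\tau)-gS$ with $x_B>0$ on $[0,\tau_I)$ and $x_B(\tau_I)=0$. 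Hence $x(\rho_f,\tau_I)=-gS_I<0$, while $x(\rho_f,0)=\rho_f\sin\alpha>0$. So every primary meets the axis at some $\tau_D(\rho_f)<\tau_I$.

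Since $\theta<\theta_I<\pi/2$ and $q>0$ on $[0,\tau_I)$, Lemma~\ref{lem:primfast} gives $x_\tau=-gC-qS<0$ and excludes any steering switch before that contact. For $\psi>0$ I would use the argument of Lemma~\ref{lem:primfast}: if $\psi$ vanished first, then $x=\eta C<0$ there, contradicting the claim that this is the first axis contact. The primaries in $\Cset_+$ therefore do not fold, since $x$ is strictly monotone in $\tau$ along each of them.

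Next I would identify $W_P$. On $\Cset_+$ the map $(\rho_f,\tau)\mapsto\x$ is a local diffeomorphism with Jacobian $g>0$. It is injective because at fixed $\tau$ the points lie on a line through $\Bbar_+(\tau)$, and distinct $\tau$ give disjoint images. The injectivity is the step I would check by the root count instead. For fixed $\x$, $G(\tau;\x)=0$ has a root exactly where the point lies on a primary at retro-time $\tau$. There $G_\tau=\rho_B-\rho_f=g>0$, so admissible roots are transversal up-crossings. Between two consecutive up-crossings $G$ would need a down-crossing root, and such a root would carry $\rho_f>\rho_B$. That is the inadmissible label whose points lie beyond the barrier, which I would rule out for $\x\in\Cset_+$ with $\tau<\tau_I$ using the tangent-ray picture. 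This yields uniqueness in $(0,\tau_I)$ with $G_\tau>0$. The implicit function theorem then gives smoothness, and $\nabla W_P=-\nabla_\x G/G_\tau=(-C,S)^\top/g$ follows from~\eqref{eq:Gscalar}.

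The boundary description comes from the limits of the chart. At $\tau=0$ it gives $\UP_+$, as $\rho_f\uparrow\rho_B$ it gives the arc of $\Bbar_+$ up to $I$, and at $\tau=\tau_D$ it gives axis points. These axis points sweep $\{0\}\times(0,y_I)$ monotonically, because $y_D'=g/S>0$ by the proof of Proposition~\ref{prop:E}, with $y_D\to0$ as $\rho_f\to0$ and $y_D\to y_I$ as $\rho_f\to\rho_B$. These three arcs form a Jordan curve, and a proper injective local diffeomorphism fills its interior. Finally, $\theta<\theta_I\le\theta_c$ gives $\nu S<\nu\sin\theta_c\le1$ by~\eqref{eq:F1}.

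I expect the main obstacle to be the global injectivity, equivalently the uniqueness of the admissible root. The sub-step to handle with most care is excluding spurious roots with $G_\tau<0$ inside $\Cset_+$.
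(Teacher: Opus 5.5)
Your treatment of the first half is correct and more self-contained than the paper's. The paper only notes that Part~I's Proposition~6 uses $\nu\le1$ through $q>0$ alone, which holds on $[0,\tau_I)$ in F1, and reuses that proof verbatim. Your abscissa comparison $x(\rho_f,\tau_I)=-gS_I<0$ is a clean way to get $\tau_D(\rho_f)<\tau_I$, and $x_\tau<0$, $\psi>0$ and $\nu S<1$ follow as you say.

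\textbf{The gap.} It lies in the step you flag yourself. You plan to show that no root of $G(\cdot;\x)$ in $(0,\tau_I)$ has $G_\tau\le0$, i.e.\ label $\rho_f\ge\rho_B$, for $\x\in\Cset_+$. That claim is false. Write $G(\tau;\x)=(x+1)\cos\theta-y\sin\theta-q(\tau)$. By~\eqref{eq:involute}, the tangent line of $\Bbar_+$ at retro-time $\tau$ is $\{(x+1)\cos\theta-y\sin\theta=q(\tau)\}$. So the roots are the retro-times of the tangent lines through $\x$: up-crossings lie on forward half-tangents, down-crossings on backward ones. The backward half-tangent from a later barrier point runs through the lens until it meets the tangent at an earlier point.

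Concretely, take $\x=\Bbar_+(\tau_0)-\mu\,\mathbf n(\theta_0)$ with $0<\tau_0<\tau_I$ and small $\mu>0$, a lens-side point of $\Cset_+$. Lemma~\ref{lem:barG} gives
\[
G(\tau_0+\delta;\x)=\nu(\delta-\sin\delta)-q(\tau_0)(1-\cos\delta)+\mu\cos\delta .
\]
This is positive at $\delta=0$. Since $q(\tau_0)>0$, it is negative at $\delta=\pm c$ for a fixed small $c>0$ once $\mu$ is small. So the up-crossing is followed by a down-crossing inside $(0,\tau_I)$; this is the positive local maximum $M>0$ of Lemma~\ref{lem:Mfun}. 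Such roots cannot be ``ruled out using the tangent-ray picture'', so your argument stalls there. Your alternative claim, that distinct $\tau$ give disjoint images, is only asserted.

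\textbf{The missing idea: concavity.} What excludes a second up-crossing is concavity, not the absence of down-crossings. From~\eqref{eq:Gscalar}, $G_{\tau\tau}=-(G+q)$, and $G+q=(x+1)\cos\theta-y\sin\theta$ is a pure sinusoid of half-period $\pi$ in $\tau$. It equals $q>0$ at every root in $(0,\tau_I)$. Since $(0,\tau_I)$ is shorter than $\pi$, all these roots lie in one interval where $G+q>0$, i.e.\ where $G$ is strictly concave. Hence there is at most one root with $G_\tau>0$, and it precedes every other root in $(0,\tau_I)$. So it is the first positive root, and the unique admissible one. A tangential root ($\rho_f=\rho_B$) cannot coexist with it either, which handles the barrier edge. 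This is step~(b) of Part~I's Proposition~6, which the paper reuses.

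\textbf{Alternative fix.} You can instead make your tangent-ray remark rigorous. On $[0,\tau_I)$ the arc $\Bbar_+$ is strictly convex, its tangent turning monotonically by $\tau_I<\pi/2$. Hence its forward half-tangents, base points included, are pairwise disjoint. This gives injectivity on the closed parameter set, which your Jordan-curve step needs to place the chart, and hence the primaries before $\tau_D$, inside $K$.
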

\begin{proof}
In \PI{Proposition~6} the hypothesis $\nu\le1$ enters only through $q>0$ on the retro-times considered: on $(0,\tau_I)$ in step (b), where every root lies in an interval on which $G$ is concave; on $[0,\tau_I)$ in step (c), where $x_\tau=-gC-qS$, which is still negative at $\tau_I$ because $g>0$; and on $(0,\tau_I)$ in step (d), where the distance of $\Bbar_+$ from $M_+$ increases. Under~\eqref{eq:F1}, $q>0$ on $[0,\tau_I)$, so steps (a)--(d) apply verbatim and give all statements except the last two. Uniqueness is asserted only on $(0,\tau_I)$: the analytic continuation of~\eqref{eq:Gscalar} may have further roots with $G_\tau>0$ beyond $\tau_I$ (e.g.\ at $\tau=1.925$ for a point of the chart of $(1.1,10^\circ)$ with $\tau_I=0.607$), which belong to no admissible arc. For $\psi>0$ and the absence of folding use Lemma~\ref{lem:primfast}. On $\Cset_+$, $W_P<\tau_D(\rho_f)<\tau_I$, so $\theta<\theta_I\le\theta_c$ and $\nu S<\nu\sin\theta_c\le1$.
\end{proof}

\begin{lemma}[Primary and tributary regions]\label{lem:region}
Assume~\eqref{eq:F1}. Let $\tau_P=\tau_D$ on $(0,\rho_E]$ and $\tau_P=\tau_J$ on $(\rho_E,\rho_B)$, let $\Sigma_P=\{(\rho_f,\tau):0<\rho_f<\rho_B,\ 0<\tau<\tau_P(\rho_f)\}$, and let $\Pi$ be the primary map~\eqref{eq:primary}.
\begin{enumerate}[label=(\alph*)]
\item $\tau_P$ is continuous on $(0,\rho_B)$, with $\tau_P\to0$ as $\rho_f\to0$ and $\tau_P\to\tau_Z$ as $\rho_f\to\rho_B$.
\item $\Pi$ maps $\Sigma_P$ diffeomorphically onto an open set $\Pset_+^\circ\subset\Cset_+$, whose boundary is the union of the closed segment of $M_+$ from the apex to $B_+$, the axis segment $\{0\}\times[0,y_E]$, the closure of $\Dpt^+$ and the arc $\Lambda_+=\Bbar_+[0,\tau_Z]$.
\item Let $\Pset_+$ be $\Pset_+^\circ$ together with the open segment $\UP_+$, the apex, the axis segment $\{0\}\times(0,y_E]$ and the open curve $\Dpt^+$, and let $\Tset_+=(K\cap\{x\ge0\})\setminus\Pset_+$. Then every point of $\UP_+\cup\{\mathbf 0\}$ and of the axis segment below $E$ has a neighbourhood in $K\cap\{x\ge0\}$ contained in $\Pset_+$, $\Tset_+$ contains $\Lambda_+$, the $\NUP$ and the axis above $E$, and the closure of $\Tset_+$ is $\Tset_+\cup\Dpt^+\cup\{E\}$.
\end{enumerate}
\end{lemma}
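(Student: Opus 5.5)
The proof handles (a), (b), (c) in turn, using Lemma~\ref{lem:prim} and Proposition~\ref{prop:F1} as the main inputs.

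\emph{Part (a).} On $(0,\rho_E]$, $\tau_P=\tau_D$ is the first axis contact. It is continuous by the implicit function theorem applied to $x(\rho_f,\tau)=0$, since $x_\tau<0$ there by Lemma~\ref{lem:primfast}. On $(\rho_E,\rho_B)$, $\tau_P=\tau_J$ is smooth by Proposition~\ref{prop:F1}(iv). At $\rho_E$ the two definitions agree: $F(\rho_E,\tau_*)=0$ at $E$ and $F(\rho_E,\cdot)$ is strictly decreasing, so $\tau_J(\rho_f)\to\tau_*=\tau_D(\rho_E)$ as $\rho_f\downarrow\rho_E$. As $\rho_f\to0$, the primary starts at the apex and $x(0,\tau)=\eta\cos\theta-\dots$ vanishes at $\tau=0$; since $x_\tau<0$, this gives $\tau_D\to0$. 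As $\rho_f\uparrow\rho_B$, continuity of $F$ up to $\rho_f=\rho_B$, together with $F_\tau<0$ and the unique root $\tau_Z$ of Proposition~\ref{prop:F1}(iii), gives $\tau_J\to\tau_Z$.

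\emph{Part (b).} The map $\Pi$ has Jacobian $\rho_B-\rho_f>0$. Since $\Sigma_P$ lies inside the domain of $\Cset_+$, $\Pi$ is injective on $\Sigma_P$ by Lemma~\ref{lem:prim}, because $W_P$ is the unique root there. So $\Pi$ is a diffeomorphism onto an open set $\Pset_+^\circ\subset\Cset_+$. To identify the boundary, I would send the boundary of $\Sigma_P$ to its image using continuity of $\Pi$ and properness of the closure map:
\begin{itemize}
\item $\tau=0$ gives the segment of $M_+$;
\item $\rho_f\to0$ collapses to the apex, by (a);
\item $\tau=\tau_D$ on $(0,\rho_E]$ gives the axis segment $\{0\}\times[0,y_E]$, which is monotone by the proof of Proposition~\ref{prop:E};
\item $\tau=\tau_J$ gives $\overline{\Dpt^+}$;
\item $\rho_f=\rho_B$, $\tau\in[0,\tau_Z]$ gives $\Lambda_+$.
\end{itemize}
This boundary curve is a Jordan curve. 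The four pieces meet only at their endpoints: $\Dpt^+$ lies off the axis and off $\Bbar_+$, since $\tau_J<\tau_D$ and $\rho_f<\rho_B$, and $\Lambda_+$ has $d>0$ for $\tau>0$. Hence $\Pset_+^\circ$ is the interior of this curve, because $\Pi(\Sigma_P)$ is open, connected, and its frontier lies in $\Pi(\partial\Sigma_P)$.

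\emph{Part (c).} Each neighbourhood claim comes from the local chart.
\begin{itemize}
\item \emph{Points of $\UP_+$.} $\Pi$ is a local diffeomorphism at $\tau=0$, and the half-disc with $\tau\ge0$ maps onto $K\cap\{x\ge0\}$ near the point, since $d_\tau>0$ at launch.
\item \emph{The apex.} Near the apex, all nearby points of the right half of $K$ lie in the Jordan interior or on the pieces added to $\Pset_+$. The reason is that the boundary pieces through the apex are exactly $M_+$ and the axis.
\item \emph{Axis points below $E$.} These are handled the same way: the only boundary pieces near such a point are the axis itself.
\end{itemize}
$\Lambda_+$ and the $\NUP$ lie on the excluded boundary or outside the interior, so they lie in $\Tset_+$. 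The axis above $E$ lies outside the closed Jordan region except at $E$; it is in $\Tset_+$ since $y_D$ only covers $[0,y_E]$ on this boundary. For the closure statement, $\Tset_+$ is the complement in the closed half-cone of $\Pset_+$. So its closure adds exactly those points of $\Pset_+$ that are limits from outside, namely the boundary pieces of $\Pset_+^\circ$ that were included in $\Pset_+$ but are not interior to the half-cone. Among these, $M_+$ and the axis below $E$ are excluded by the neighbourhood claims, which leaves $\Dpt^+\cup\{E\}$.

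The main obstacle is (b): showing that the image boundary is a simple closed curve and that the image is exactly its interior, i.e.\ ruling out self-contact between $\Dpt^+$ and $\Lambda_+$ or the axis. I would use the monotonicity $ds/d\rho_f>0$ along $\Dpt^+$ and $s_\tau<0$ along $\Bbar_+$ from Proposition~\ref{prop:F1}(v) to separate these pieces in the tributary coordinate $s$.
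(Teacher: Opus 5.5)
Your proof is correct. Part (a) and the first half of (b) follow the paper:
\begin{itemize}
\item continuity of the root branches by the implicit function theorem (the paper applies it to $F$, which is $C^1$ near $(\rho_E,\tau_*)$ because $\Wt$ is);
\item injectivity of $\Pi$ from the uniqueness of the root in Lemma~\ref{lem:prim};
\item the limit-point argument that puts $\partial\Pset_+^\circ$ inside $\Pi(\partial\Sigma_P)$.
\end{itemize}
Your ``strictly decreasing'' version of the first step also works, but it needs joint continuity of $F$ and the bound $\tau_J<\tau_D\to\tau_*$ to control the root from above.

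The routes differ after that. The paper observes that each point of $\Pi(\partial\Sigma_P)$ is a limit of points of $\Pset_+^\circ$, and lies outside it by the chart. It then gets the neighbourhood claims of (c) from the extension of the chart across $\UP_+$ and the axis (Lemma~2(a) of Part~I), and from $\tau_P(0^+)=0$ at the apex. You instead show that the four pieces form a Jordan curve $J$ and that $\Pset_+^\circ$ is its bounded complementary component. The neighbourhood claims then follow because a small open sector at the apex, or a half-disc at an axis point below $E$, is connected, misses $J$ and meets the interior, so it lies in it. Likewise $\Dpt^+\cup\{E\}$ lies in the closure of $\Tset_+$ because points of $J$ are also limits of the exterior, whose points in $K\cap\{x\ge0\}$ belong to $\Tset_+$. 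That is the precise form of your rather loose closure sentence. Your route gives a clean global picture and makes (c) nearly automatic without Part~I's chart extension.

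The price is verifying that $J$ is simple and that $\Pi(\Sigma_P)$ is bounded and disjoint from $J$. For this you should add three facts:
\begin{itemize}
\item $\Lambda_+$ misses the axis, since $x\ge x_Z>0$ on it;
\item $\Pi(\Sigma_P)$ is bounded and disjoint from $J$;
\item $\tau_D$ maps $(0,\rho_E]$ onto $(0,\tau_*]$. This follows from continuity, the limits in (a), and the fact that a contact time fixes the label through~\eqref{eq:rhof}. The monotonicity in the proof of Proposition~\ref{prop:E} is in $\tau$, not in $\rho_f$, so it does not give this directly.
\end{itemize}
Your separation of the pieces by $s$ via Proposition~\ref{prop:F1}(v) is valid but unnecessary. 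The inequality $\tau_J<\tau_D$ already places $\Dpt^+$ in the open chart $\Cset_+$, which by Lemma~\ref{lem:prim} misses the axis and the arc of $\Bbar_+$ from $B_+$ to $I$.
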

\begin{proof}
(a) $\tau_D$ is continuous with $\tau_D(0^+)=0$ by \PI{Proposition~6(c)}, and $\tau_J$ is smooth on $(\rho_E,\rho_B)$ by Proposition~\ref{prop:F1}(iv). The function $F$ is $C^1$ near $(\rho_E,\tau_*)$, because $\Wt$ is $C^1$ (Proposition~\ref{prop:trib}), with $F(\rho_E,\tau_*)=0$ and $F_\tau<0$; the implicit function theorem gives a $C^1$ root branch through $(\rho_E,\tau_*)$, which coincides with $\tau_J$ for $\rho_f>\rho_E$ by uniqueness of the root, so $\tau_J(\rho_f)\to\tau_*=\tau_D(\rho_E)$. The same argument at $(\rho_B,\tau_Z)$, where the primary map extends continuously, gives $\tau_J\to\tau_Z$.

(b) $\Sigma_P$ lies in the chart parameter set, on which $\Pi$ is an injective local diffeomorphism (Lemma~\ref{lem:prim}), so $\Pset_+^\circ=\Pi(\Sigma_P)$ is open and $\Pi$ is a diffeomorphism onto it. If $\x\in\partial\Pset_+^\circ$, then $\x=\lim\Pi(p_n)$ with $p_n\in\Sigma_P$; the $p_n$ are bounded and have a limit point $p\in\overline{\Sigma}_P$, and $p\notin\Sigma_P$ because $\Pset_+^\circ$ is open and disjoint from its boundary. So $\x\in\Pi(\partial\Sigma_P)$. By (a), $\partial\Sigma_P$ consists of the edge $\tau=0$, $0\le\rho_f\le\rho_B$, mapped onto the segment of $M_+$ from the apex to $B_+$; the edge $\rho_f=\rho_B$, $0\le\tau\le\tau_Z$, mapped onto $\Lambda_+$; and the graph of $\tau_P$, mapped onto the axis contacts $\{0\}\times[0,y_E]$ for $\rho_f\le\rho_E$ (their heights increase from $0$ to $y_E$, \PI{Proposition~6(c)}) and onto $\Dpt^+$ for $\rho_f>\rho_E$. Conversely each of these points is a limit of points of $\Pset_+^\circ$.

(c) Near a point of $\UP_+$ or of the axis below $E$, the chart covers a relative neighbourhood in $K\cap\{x\ge0\}$ (the chart extends across these edges, \PI{Lemma~2(a)}), and by (b) the part of it in $K\cap\{x\ge0\}$ lies in $\Pset_+^\circ$ or on the edge itself; near the apex the same holds because $\tau_P(0^+)=0$. The remaining statements follow from (b) and the definition of $\Pset_+$.
\end{proof}

\begin{lemma}[Covering]\label{lem:cover}
Assume~\eqref{eq:F1}, and let $\Tset_+$ be as in Lemma~\ref{lem:region}. Then $s>y_E$ on $\Tset_+$ and $s\ge y_E$ on its closure, with equality only at $E\in\Pset_+$. Consequently $\Wt>W_E$ on $\Tset_+$, $\Wt\ge W_E$ on its closure with equality only at $E$, and $\Wt\ge W_E+(y_I-y_E)/(\nu-1)$ on $\Bbar_+[0,\tau_I]$.
\end{lemma}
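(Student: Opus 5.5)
The covering lemma reduces to a statement about the single function $s=R-\nu\sigma$ of~\eqref{eq:tribinv} on the closed right half of the cone. The consequences for $\Wt$ are then immediate. By~\eqref{eq:WT}, $(\nu-1)(\Wt-W_E)=R-y_E-\sigma=s-y_E+(\nu-1)\sigma$. Since $\sigma\ge0$, $s>y_E$ gives $\Wt>W_E$. If $s=y_E$ with $\sigma=0$, the point is $(0,y_E)=E$. On $\Bbar_+[0,\tau_I]$, Proposition~\ref{prop:F1}(v) gives $s\ge y_I$, hence $\Wt\ge W_E+(y_I-y_E)/(\nu-1)$. So the whole task is to show $s>y_E$ on $\Tset_+$, with $s\ge y_E$ on its closure.

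I would use a minimum principle for $s$ on the region $\Tset_+$. First, $s$ has no interior critical points in $\{x>0,y>0\}$. This follows from the differential $ds=\sin\sigma\,dx+\cos\sigma\,dy-\frac{\nu-1}{R}(\cos\sigma\,dx-\sin\sigma\,dy)$ from the proof of Proposition~\ref{prop:F1}(v), whose coefficient vector has norm at least $1-(\nu-1)/R$. Where this is not conclusive, I would instead argue via the $(R,\sigma)$ coordinates: there $s$ is linear, with $\partial s/\partial R=1\ne0$, and the map is a diffeomorphism with Jacobian $-R$.

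So the infimum of $s$ over $\overline{\Tset_+}$ is approached on the boundary or at infinity. At infinity, $R\to\infty$ with $\sigma<\pi$ bounded, so $s\to\infty$. By Lemma~\ref{lem:region}(b),(c), the finite boundary of $\Tset_+$ within $K\cap\{x\ge0\}$ consists of four pieces.
\begin{enumerate}[label=(\roman*)]
\item The axis above $E$, where $\sigma=0$ and $s=y>y_E$.
\item The $\NUP$ ray $\rho\ge\rho_B$. Here the monotonicity $\frac{d}{d\rho}\Wt>0$ from the positivity-at-launch step suggests the analogous $\frac{d}{d\rho}s=\cos(\alpha-\sigma)-\frac{\nu-1}{R}\sin(\alpha-\sigma)>0$. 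So it suffices that $s(B_+)>y_E$, which holds because $s$ decreases along $\Bbar_+$ from $B_+$ to $I$ and $s(I)=y_I>y_E$.
\item The arc $\Lambda_+$, where $s\ge y_I>y_E$, again by Proposition~\ref{prop:F1}(v).
\item The curves $\Dpt^+$, where $s>y_E$ by $ds/d\rho_f=(\nu-1)/R>0$, with equality only at the endpoint $E$.
\end{enumerate}
Combining these boundary values with the absence of interior minima gives the claim.

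The main obstacle is that the minimum principle needs $\Tset_+$ to be exactly the region enclosed by these four pieces, together with the far field. The boundary description in Lemma~\ref{lem:region}(b) is for $\Pset_+^\circ$, and I would transfer it to the complement. Since $\Pset_+^\circ$ is open and its closure meets the remaining boundary only along the listed curves, $\partial\Tset_+\cap\{x>0\}$ lies in the boundary of $\Pset_+^\circ$. The pieces adjacent to $\Tset_+$ are then $\Dpt^+$, $\Lambda_+$, the $\NUP$ and the axis above $E$.

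A second delicate point is the $\NUP$ step when $R$ is small, where the coefficient $(\nu-1)/R$ could dominate. If the sign of $d s/d\rho$ fails there, I would fall back on the $(R,\sigma)$ linearity. In those coordinates the ray is a curve, and I would check directly that $s\ge s(B_+)$ along it.
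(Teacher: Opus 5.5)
Your architecture is the paper's: minimize $s$ over $\overline{\Tset}_+$, exclude interior minimizers because $s=R-\nu\sigma$ is linear in the chart $(R,\sigma)$, check the axis, $\Dpt^+$ and $\Lambda_+$ with Proposition~\ref{prop:F1}(v), and read off the statements on $\Wt$ from $(\nu-1)(\Wt-W_E)=s-y_E+(\nu-1)\sigma$. (Your first route to $\nabla s\ne0$ also works: the two terms of $ds$ are orthogonal, so $\|\nabla s\|=\sqrt{1+(\nu-1)^2/R^2}\ge1$.)

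The gap is the $\NUP$ step. It is the only boundary piece not already handled by Proposition~\ref{prop:F1}(v) or by $\sigma=0$ on the axis. The analogy with $d\Wt/d\rho>0$ does not carry over. Along $M_+$, $ds/d\rho=\cos\phi-\frac{\nu-1}{R}\sin\phi$ with $\phi=\alpha-\sigma\in(0,\alpha)$, and this is genuinely negative on the usable part near the apex, where $R\to0$ while $\phi\to\alpha$. So positivity can only come from the hypothesis $\rho\ge\rho_B$, which your argument never invokes. Your fallback of checking ``directly'' that $s\ge s(B_+)$ along the ray restates the claim rather than proving it.

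The missing computation is short. On $M_+$ the identities after~\eqref{eq:tribinv} read $\rho\sin\phi=1-\cos\sigma$ and $\rho\cos\phi=R-\sin\sigma$. These give $R=(\cos\phi-\cos\alpha)/\sin\phi$ and $\rho=(1-\cos(\alpha-\phi))/\sin\phi$. On the $\NUP$, $\rho\ge\rho_B=\nu-\sin\alpha$ is equivalent to $\nu-1\le\rho+\sin\alpha-1$, so $R\,ds/d\rho\ge R\cos\phi-(\rho+\sin\alpha-1)\sin\phi$. Expanding,
\[
R\cos\phi-(\rho+\sin\alpha-1)\sin\phi=\frac{(1-\sin\phi)(1-\cos\alpha\cos\phi)}{\sin\phi}>0 .
\]
Hence $s$ increases strictly along the $\NUP$ from $s(B_+)>y_E$, for every $\nu>1$ and every $\alpha$. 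With this step in place your proof coincides with the paper's.
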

\begin{proof}
\emph{Where the minimum lies.} The coordinates $(R,\sigma)$ are smooth on $\{x>0,y>0\}$ and $s=R-\nu\sigma$ is linear in them, so $\nabla s\ne0$ there. Since $s\ge R-\nu\pi/2\to\infty$ at infinity, $s$ attains its minimum over the closure $\overline\Tset_+$ at some point $p$. If $p$ lay in the open quadrant and in the interior of $\overline\Tset_+$, it would be a critical point; so $p$ is on the axis, on $M_+$, or on the boundary of $\overline\Tset_+$ inside the open quadrant. By Lemma~\ref{lem:region}, the points of $\overline\Tset_+$ on the axis are those with $y\ge y_E$, those on $M_+$ form the $\NUP$ together with $B_+$, and the remaining boundary points lie on $\Dpt^+$ or $\Lambda_+$.

\emph{Axis, $\Dpt^+$ and barrier.} On the axis $\sigma=0$ and $s=y\ge y_E$. On $\Dpt^+$ and on $\Bbar_+[0,\tau_I]$, $s>y_E$ by Proposition~\ref{prop:F1}(v).

\emph{$\NUP$.} On $M_+$, with $\phi=\alpha-\sigma$, the identities after~\eqref{eq:tribinv} give $\rho\sin\phi=1-\cos\sigma$ and $\rho\cos\phi=R-\sin\sigma$, so $0\le\sigma\le\alpha$, and
\[
R=\frac{\cos\phi-\cos\alpha}{\sin\phi},\qquad \rho=\frac{1-\cos(\alpha-\phi)}{\sin\phi},\qquad \frac{ds}{d\rho}=\cos\phi-\frac{\nu-1}{R}\sin\phi .
\]
On the $\NUP$, $\rho\ge\rho_B$, i.e.\ $\nu-1\le\rho+\sin\alpha-1$, and a direct computation gives
\begin{equation}\label{eq:NUPid}
R\cos\phi-(\rho+\sin\alpha-1)\sin\phi=\frac{(1-\sin\phi)(1-\cos\alpha\cos\phi)}{\sin\phi}>0
\end{equation}
for $0<\phi<\alpha$. Hence $ds/d\rho>0$ on the $\NUP$, for every $\nu>1$ and $\alpha$, and $s$ increases along the $\NUP$ from $s(B_+)>y_E$.

\emph{Values.} With $s\ge y_E$, $(\nu-1)(\Wt-W_E)=R-\sigma-y_E=s-y_E+(\nu-1)\sigma\ge0$, with equality iff $\sigma=0$ and $s=y_E$, i.e.\ at $E$. On $\Bbar_+[0,\tau_I]$, $s\ge y_I$ gives the last bound.
\end{proof}

\subsection{The synthesis and the main results}
The right half of $K$ is the disjoint union of the sets $\Pset_+$ and $\Tset_+$ of Lemma~\ref{lem:region}. $\Tset_+$ contains the barrier arc $\Lambda_+$, the $\NUP$, the axis above $E$ and the \emph{channel} $\Cset_+\cap\Tset_+$ between $\Dpt^+$ and the axis. Let $\Pset$, $\Tset$ be the unions with the mirror images and define
\begin{equation}\label{eq:W}
W=W_P\ \text{on}\ \Pset,\qquad W=\Wt\ \text{on}\ \Tset,\qquad W=0\ \text{on}\ \UP\cup\{\mathbf 0\},
\end{equation}
where on $\Pset_-$ the left branch $W_P(-x,y)$ is used. The \emph{barrier} is $\Bbar=\Bbar_+(0,\tau_Z)\cup\Bbar_-(0,\tau_Z)$, the part of the $\BUP$ characteristics across which $W$ jumps. Its \emph{lens side} is the side facing the $\UP$, covered by the primaries of $\Pset$, and its \emph{far side} the side into which $\mathbf n$ points; on the arcs from $Z_\pm$ to $I$ both sides carry $\Wt$, and these arcs, though semipermeable, carry no jump.

\begin{lemma}[Structure of $W$]\label{lem:P}
Assume~\eqref{eq:F1}. (a) $W$ is continuous on $K\setminus(\Bbar\cup\{B_+,B_-\})$. Across $\Bbar$ it jumps up, from the lens-side trace $\tau$ to $\Wt>\tau$, and the jump vanishes at $Z_\pm$, where $W$ is continuous. (b) $W$ is the lower envelope of the admissible branches: points of $\Pset$ with $\rho_f\ge\rho_E$ carry $W_P\le\Wt$, and points of the channel carry $\Wt<W_P$. (c) $W(B_\pm)=\Wt(B_\pm)>0$, while $W\to0$ as $B_\pm$ is approached from $\Pset$.
\end{lemma}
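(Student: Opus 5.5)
The plan is to work in the right half $K\cap\{x\ge0\}$ and obtain the left half by symmetry. The three parts follow from the decomposition $K\cap\{x\ge0\}=\Pset_+\sqcup\Tset_+$ of Lemma~\ref{lem:region}, together with the sign of $F=\Wt-\tau$ along each primary (Proposition~\ref{prop:F1}(ii)--(iv)) and the covering Lemma~\ref{lem:cover}.

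\emph{Part (b).} For a primary with $\rho_E\le\rho_f<\rho_B$, Proposition~\ref{prop:F1}(ii) gives $F_\tau<0$ on the primary up to its axis contact, and $F$ vanishes exactly at $\tau_J(\rho_f)$ (or at $\tau_*$ when $\rho_f=\rho_E$). Hence $F>0$, i.e.\ $W_P=\tau<\Wt$, on the part $\tau<\tau_J$ that lies in $\Pset_+$. On the channel $\Cset_+\cap\Tset_+$ each point lies on a primary with $\tau_J<\tau<\tau_D$, so $F<0$, i.e.\ $\Wt<W_P$; here Lemma~\ref{lem:cover} guarantees that $\Wt$ is a genuine tributary value ($s>y_E$). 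This identifies $W$ with the lower envelope of the admissible branches.

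\emph{Part (a).} Continuity in the interior of $\Pset_+^\circ$ and of $\Tset_+$ is immediate, since $W_P$ is smooth there (Lemma~\ref{lem:prim}) and $\Wt$ is $C^1$ (Proposition~\ref{prop:trib}). At the interfaces:
\begin{itemize}
\item On $\Dpt^+$ the two branches agree because $F=0$ there.
\item At $E$ they agree by Proposition~\ref{prop:E}.
\item On $\UP_+$ and at the apex, continuity follows because $\tau_P\to0$ and $W_P\to0$ there (Lemma~\ref{lem:region}(a),(c)).
\item On the axis below $E$, $W_P$ and its mirror branch coincide; on the axis above $E$, $\Wt$ is even in $x$.
\end{itemize}
Across $\Lambda_+=\Bbar_+[0,\tau_Z]$, approach from the lens side along primaries with $\rho_f\uparrow\rho_B$. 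The chart extends continuously to $\rho_f=\rho_B$ (Lemma~\ref{lem:prim}), so the trace of $W_P$ is the retro-time $\tau$ of $\Bbar_+$. From the far side the value is $\Wt$, and $F(\rho_B,\tau)>0$ for $\tau<\tau_Z$ by Proposition~\ref{prop:F1}(iii), so the jump is upward and equals $F(\rho_B,\tau)$, which vanishes at $\tau_Z$. The arcs from $Z_+$ to $I$ lie in $\Tset_+$ with $\Tset_+$-points on both sides, so they carry no jump.

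\emph{Part (c).} $B_+$ belongs to $\Tset_+$ since it lies in the closure of the $\NUP$ and is not in $\Pset_+$. Hence $W(B_+)=\Wt(B_+)>W_E>0$ by Lemma~\ref{lem:cover}. Approaching from $\Pset_+$, nearby points have $\tau\to0$ along primaries near launch.

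\emph{Main obstacle.} The hard step is the last claim of (c). The primary chart degenerates at $B_+$: the Jacobian is $g\to0$, and the value $\tau$ of a point in $\Pset_+$ near $B_+$ need not be small uniformly. Points close to $B_+$ can lie on primaries with $\rho_f$ near $\rho_B$ at larger $\tau$, since $\Lambda_+$ itself starts at $B_+$. I would first try the following. Suppose $\x_n\to B_+$ with $\x_n=\Pi(\rho_n,\tau_n)$ and $\tau_n\ge\epsilon$. Passing to a limit $(\rho,\tau)$ with $\tau\in[\epsilon,\tau_Z]$, the point $\Pi(\rho,\tau)=B_+$ would lie on $\Bbar_+$ (if $\rho=\rho_B$) or on a primary. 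The first is excluded because $d$ is strictly increasing along $\Bbar_+$ (Proposition~\ref{prop:barfast}(ii)), so $\Bbar_+(\tau)\ne B_+$ for $\tau>0$. The second is excluded because the chart is injective up to the $\UP$ edge with $d>0$ off it. Hence $\tau_n\to0$ and $W_P\to0$.
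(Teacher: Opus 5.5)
\textbf{There is a gap in part (a), at the barrier.} Your parts (b) and (c) follow the paper, and so do the interfaces $\Dpt^+$, $E$, $\UP_+$ and the axis in (a). Your compactness argument at $B_+$ is correct and more explicit than the paper's one-line remark: limits of $(\rho_n,\tau_n)$ with $\tau_n\ge\epsilon$ would be preimages of $B_+$ with $d>0$. The problem is that you claim the lens-side trace of $W_P$ on $\Lambda_+$ is $\tau$ because ``the chart extends continuously to $\rho_f=\rho_B$''. Continuity of the forward map $\Pi$ up to that edge says nothing about continuity of its inverse there. The Jacobian $g=\rho_B-\rho_f$ degenerates along the \emph{whole} edge $\rho_f=\rho_B$, not only at $B_+$. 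So a priori lens points $\x_n\to\Bbar_+(\tau_0)$ could carry chart times $\tau_n$ accumulating at some $\tau^\dagger\ne\tau_0$; that is, other primaries could pass arbitrarily close to the barrier point. Approaching ``along primaries with $\rho_f\uparrow\rho_B$'' only computes the limit along one family of paths. Without the general limit you have established neither the size $F(\rho_B,\tau)$ of the jump nor continuity at $Z_\pm$, which needs the lens-side limit at $Z_+$ to be $\tau_Z$. The obstacle you flagged as the hard step at $B_+$ is present at every barrier point, and (c) is only its endpoint case.

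\textbf{How the paper closes it, and how you could.} The paper uses Lemma~2(b) of Part~I, adapted to $\nu>1$. For $P=\Bbar_+(\tau_P)$, one has $G(\tau';P)=-\int\sin|\tau'-\tau''|\,q(\tau'')\,d\tau''$ over the interval between $\tau_P$ and $\tau'$ (equivalently, Lemma~\ref{lem:barG}). This is negative on $[0,\tau_I]\setminus\{\tau_P\}$ because $q>0$ on $[0,\tau_I)$, and this is exactly where \eqref{eq:F1} enters. The chart roots of nearby points lie in $(0,\tau_I)$, and every limit of them is a root of $G(\cdot\,;P)$, so they converge to $\tau_P$. Alternatively, you can reuse your own argument from (c). Show that $\Bbar_+(\tau_0)$ has the unique preimage $(\rho_B,\tau_0)$ in the compact closed parameter domain, using four facts:
\begin{enumerate}
\item $x>0$ there, so it is not on the axis;
\item $d>0$, so it is not on $M_+$;
\item $\Bbar_+$ is injective, because $x$ decreases along it;
\item it is not in the open chart $\Cset_+$, by the region description in Lemma~\ref{lem:prim}.
\end{enumerate}
Then conclude by compactness. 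Either way, this uniqueness is the step that needs proof, not the special point $B_+$.
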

\begin{proof}
On $\Dpt^\pm$, $W_P=\Wt$ by definition, and at $E$ and $Z_\pm$ both values equal $W_E$ and $\tau_Z$. On $\Bbar$ the jump is $F(\rho_B,\tau)>0$ for $\tau<\tau_Z$, by Proposition~\ref{prop:F1}(iii), and the lens-side trace is $\tau$ by Lemma~2(b) of Part~I. Its proof writes $G_P(\tau')=-\int\sin|\tau'-\tau''|\,q(\tau'')\,d\tau''$ between $\tau_P$ and $\tau'$ and needs $G_P<0$ off $\tau_P$ on the interval containing the roots that define $W_P$; here these roots lie in $(0,\tau_I)$, and $G_P<0$ on $[0,\tau_I]\setminus\{\tau_P\}$ because $q>0$ on $[0,\tau_I)$. Part (b) is the sign of $F$ (Proposition~\ref{prop:F1}(ii),(iv)). For (c), $B_+\in\Tset_+$, so $W(B_+)=\Wt(B_+)>W_E$ by Lemma~\ref{lem:cover}, while the lens-side trace tends to $0$ at $B_+$ and $W=0$ on the adjacent $\UP_+$; this is the discontinuity at $B_\pm$ excluded in (a).
\end{proof}

\begin{figure}[!htbp]
\centering
\includegraphics[width=\textwidth]{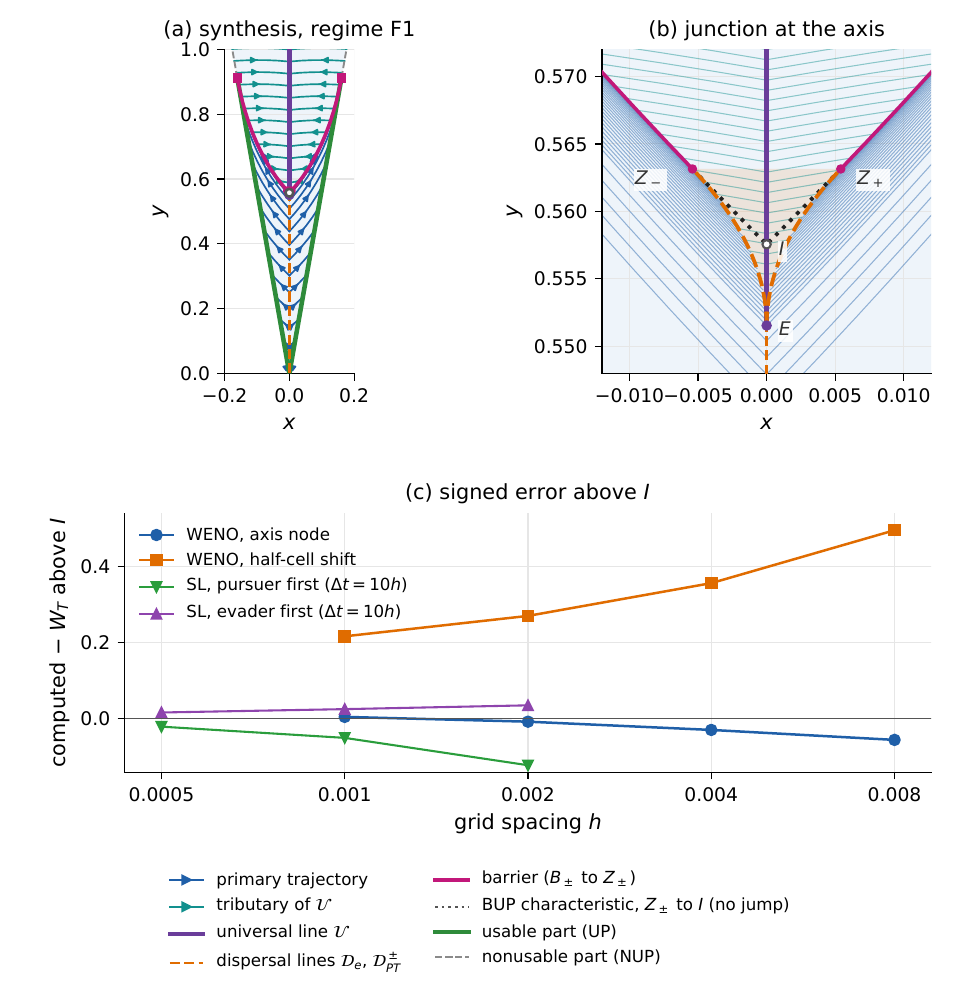}
\caption{Regime F1, $(\nu,\alpha)=(1.1,10^\circ)$. (a) The synthesis of Theorem~\ref{thm:value}. (b) Junction near the axis: $\Dpt^\pm$ bound the shaded tributary channel and meet the barrier at $Z_\pm$; the dotted continuations of the $\BUP$ characteristics reach $I$ without a jump. (c) Mean signed error against the proved value $W_T$ on the axis column $0.575\le y\le0.60$, for fixed-domain WENO computations with a grid node on the axis or a half-cell shift, and for semi-Lagrangian computations with either player first (Section~\ref{sec:num}).}
\label{fig:F1}
\end{figure}

The pursuer uses the relay of Part~I. The evader uses the primary feedback of Theorem~1(ii) of Part~I in $\Pset$, and in $\Tset$ it follows the tributary feedback until it can switch to a primary without loss. To make the switch precise, fix $\varepsilon>0$ and $\eta\in(0,\rho_B-\rho_E)$ so small that the strip $\Sigma_\eta=\{\x\in\Pset_+:\rho_f(\x)>\rho_B-\eta\}$ and its mirror image lie in $\{\Wt>W_E+c\}$, where $2c>0$ is the minimum of $\Wt-W_E$ on the arc $\Lambda_+=\Bbar_+[0,\tau_Z]$ (Lemma~\ref{lem:cover}); this is possible because the arcs of $\Pset_+$ with $\rho_f$ near $\rho_B$ end before $\tau_J(\rho_f)<\tau_Z$ and converge uniformly to $\Lambda_+$ (Lemma~\ref{lem:region}(a)). Let
\begin{equation}\label{eq:Aeps}
A_\varepsilon=\bigl\{\x\in\overline{\Cset}_\pm:\ \rho_f(\x)\le\rho_B-\eta,\ W_P(\x)\le\Wt(\x)+\varepsilon\bigr\},
\end{equation}
with $\rho_f$ and $W_P$ taken in each half with that half's branch. Since $\sigma\ge0$, $\Wt\le W_E+(R-y_E)/(\nu-1)$, and $W_E<y_E/(\nu-1)$ (Section~\ref{sec:univ}); hence $\Wt<-\varepsilon\le W_P-\varepsilon$ on the neighbourhood $\{R<y_E-(\nu-1)(W_E+\varepsilon)\}$ of the apex, which therefore does not meet $A_\varepsilon$ for $\varepsilon<y_E/(\nu-1)-W_E$, the range assumed from now on. So $A_\varepsilon$ is compact, being defined by a continuous inequality on the image of the compact strip $0\le\rho_f\le\rho_B-\eta$, $0\le\tau\le\tau_D(\rho_f)$ with a neighbourhood of the apex removed; it contains $\Pset\cap\{\rho_E\le\rho_f\le\rho_B-\eta\}$ and $E$.

\begin{theorem}[Feedback verification in F1]\label{thm:feedback}
Assume~\eqref{eq:F1}, and let $W$ be given by~\eqref{eq:W}.
\begin{enumerate}[label=(\roman*)]
\item \emph{Pursuer's guarantee.} If the pursuer plays $v_p=-1$, $\omega_p=-\sgn x$, with Filippov sliding on $x=0$, then $t_f\ge W(\x_0)$ for every $\x_0\in K$ and every measurable evader control.
\item \emph{Evader's guarantee.} If $\x_0\in\Pset$ and the evader plays the primary feedback $\mathbf u=\nu(C,-S)^\top$, $\theta=\alpha+W_P(\x)$, of the half containing $\x_0$ (the right one on the axis), then $t_f\le W(\x_0)$ for every measurable pursuer control. If $\x_0\in\Tset$ and the evader plays $\mathbf u_T$ until the first time the state enters $A_\varepsilon$, and from then on the primary feedback of the half containing the state at that time, then $t_f\le W(\x_0)+\varepsilon$ for every measurable pursuer control.
\end{enumerate}
Both feedbacks define nonanticipative strategies; hence $V_E\le W\le V_P$ on $K$.
\end{theorem}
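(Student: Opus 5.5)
The proof follows the verification scheme of Theorem~1 of Part~I, separately for each player. For the pursuer's guarantee (i), I will show that along every Filippov trajectory of the relay the function $t+W(\x(t))$ cannot decrease before exit. The argument runs region by region. In $\Pset$ the primary value $W_P$ is smooth with $\nabla W_P=(-C,S)^\top/g$ (Lemma~\ref{lem:prim}), and the argument of Part~I, Theorem~1(i), applies verbatim: under $(-1,-1)$ the Hamiltonian inequality gives $\frac{d}{dt}W_P\ge-1$. On the axis below $E$ with sliding, the subsolution test of Proposition~\ref{prop:E} gives $\lambda_y\le1/(\nu-1)$, which is exactly the condition that $\dot y\ge1-\nu$ decreases the value at rate at most one. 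In $\Tset$, Proposition~\ref{prop:trib}(a) gives $\frac{d}{dt}\Wt\ge-1$.

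The crossings are handled with Lemma~\ref{lem:P}. Across $\Dpt^\pm$ and at $E$, $W$ is continuous and equals the lower envelope of the two branches, so the minimum of two functions, each decreasing at rate at most one, decreases at rate at most one. Across the barrier $\Bbar$, Proposition~\ref{prop:barfast}(iv) shows that under $(-1,-1)$ the state cannot pass from the far side to the lens side. The only possible crossings therefore go upward in $W$, which is harmless. At $B_\pm$, where $W$ is discontinuous, I would invoke the invariance argument: the far side, including $B_\pm$, is invariant while the state is in the cone. Exit from the cone through $\NUP$ is impossible, because the usable-part inequality $\rho\dot\varphi\le\nu-\rho-\sin\alpha$ makes the $\NUP$ non-exitable. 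This gives $t_f\ge W(\x_0)$.

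For the evader's guarantee (ii), the case $\x_0\in\Pset$ follows from Part~I, Theorem~1(ii). The key facts are that the primary feedback satisfies $\frac{d}{dt}W_P\le-1$ against any pursuer control, and that the state stays in $\overline{\Cset}_\pm$ until it reaches $\UP$. The latter holds because the chart is bounded by $\UP$, the axis and $\Bbar_+$, and Lemma~\ref{lem:prim} shows that these primaries do not fold and have $\nu S<1$, so the margins hold. For $\x_0\in\Tset$, Proposition~\ref{prop:trib}(b) gives $\frac{d}{dt}\Wt\le-1$ under $\mathbf u_T$. The main obstacle is to show that the state actually reaches $A_\varepsilon$, and does not instead escape toward the barrier side or get trapped.

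I would argue this step as follows. By Lemma~\ref{lem:cover}, $\Wt>W_E$ on $\Tset$ with equality only at $E$, and $\Wt$ decreases at unit rate, so the trajectory must leave $\Tset$ in finite time. Trajectories cannot reach the strip $\Sigma_\eta$ near the barrier before entering $A_\varepsilon$. The reason is that $\Sigma_\eta$ lies in $\{\Wt>W_E+c\}$, while the semipermeability of $\Bbar$ under $-\nu\mathbf n$ and the value ordering keep such states outside. I expect the separation lemma announced in the introduction to be needed at exactly this point. When the state first enters $A_\varepsilon$, we have $W_P\le\Wt+\varepsilon\le W(\x_0)-t+\varepsilon$, and the primary guarantee then gives $t_f\le W(\x_0)+\varepsilon$. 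Well-posedness after the switch holds because $A_\varepsilon$ is compact and the primary feedback is Lipschitz on the chart.

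Finally, both feedbacks define nonanticipative strategies. For the relay this follows from Lemma~1 of Part~I. For the evader it follows from Lipschitz feedback plus a first hitting time, which is a stopping time. Hence $V_E\le W+\varepsilon$ for every $\varepsilon$, and also $W\le V_P$.
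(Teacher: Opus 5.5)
Your outline follows the paper's architecture, but the step you yourself flag as the main obstacle is not closed, and the argument you sketch for it fails. You claim that the path from $\Tset$ cannot reach $\Sigma_\eta$ before $A_\varepsilon$, because $\Bbar$ is semipermeable under $-\nu\mathbf n$. But the evader is playing $\mathbf u_T$, not $-\nu\mathbf n$, the pursuer is arbitrary, and the claim is false. On $\Bbar_+$ one has $\mathbf n^\top\mathbf u_T=-\nu\sin\delta$ with $\delta=\theta-\sigma$ and $|\delta|<\pi/2$. Against the admissible control $(v_p,\omega_p)=(1,1)$ this gives $\mathbf n^\top\dot\x=-\nu\sin\delta-\sin\theta-\psi=-\nu(1+\sin\delta)<0$. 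So the pursuer can push a far-side state near $\Lambda_+$ into the lens, where $\rho_f>\rho_B-\eta$, i.e.\ into $\Sigma_\eta$ and not into $A_\varepsilon$ (similarly across $\Dpt^+$ near $Z_+$), and possibly back into $\Tset$ later. An argument built on the first exit from $\Tset$ therefore cannot work. The missing idea is a fixed-time argument. The estimate \eqref{eq:WTdot} holds on all of $\{y>0\}$, not only in $\Tset$, so at $t^*=\Wt(\x_0)-W_E$ one has $\Wt(\x(t^*))\le W_E$ wherever the state is, unless it has already left $K$. Lemma~\ref{lem:cover} then excludes $\Tset$. The inclusion $\Sigma^\pm_\eta\subset\{\Wt>W_E+c\}$, which you quote, excludes the strips at that instant; it is not meant to keep the path out of them. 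By \eqref{eq:Pdecomp} and $E\in A_\varepsilon$, either $\x(t^*)\in A_\varepsilon$, or $\x(t^*)\in\Omega_0\setminus A_\varepsilon$ at distance at least $\delta$ from $\Gamma$ (Lemma~\ref{lem:sep}(b)). In the second case Lemma~\ref{lem:sep}(c), with $\x_0\notin\Omega_0$, shows that the path met $A_\varepsilon$ earlier. Passages through $\Sigma_\eta$ are then harmless.

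The pursuer's part also has a flaw at $E$, where $W$ is \emph{not} locally $\min(W_P,\Wt)$. On the axis below $E$, $\Wt(0,y_D(\tau))-\tau=W_E+(y_D(\tau)-y_E)/(\nu-1)-\tau$ vanishes at $\tau_*$ and has derivative $Cf(\tau)/((\nu-1)S^2)>0$ for $\tau<\tau_*$. Hence $\Wt<W_P=W$ there: the tributary formula is not genuine ($s<y_E$) on the core. This is why Lemma~\ref{lem:P}(b) asserts $W_P\le\Wt$ only for $\rho_f\ge\rho_E$. The paper instead uses a gluing argument (Lemma~\ref{lem:mono}(i)). $W$ is continuous and Lipschitz on each of the relatively closed sets $\overline{\Pset}$ and $\overline{\Tset}$ covering a convex neighbourhood, hence Lipschitz there. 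So $W\circ\x$ is absolutely continuous, and its a.e.\ derivative is that of the branch in force on each set of times. Likewise, Proposition~\ref{prop:barfast}(iv) is only a pointwise inequality on the curve. The no-crossing property needs the Gr\"onwall estimate of Lemma~\ref{lem:inv} for the signed distance to the $C^{1,1}$ curve formed by $\Bbar_+[0,\tau_I)$ and the $\NUP$. That estimate uses $q>0$ and the fact that the relay plays exactly $(-1,-1)$ in $\{x\ge x_Z\}$. The jump is then handled through $W=\min(\widehat W_P,\Wt)$ with $\widehat W_P=+\infty$ off the chart. Finally, in the $\Pset$ case, $\nu S<1$ is needed not only for the axis margin but also for the sign of the $\Delta_\omega$ coefficient in $\dot\rho_f$, which is what keeps the state away from the barrier.
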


\begin{theorem}[Value of the game in F1]\label{thm:value}
Assume~\eqref{eq:F1}. Then $V_E=V_P=W$ at every state of $K$, the barrier included. Each player can guarantee $W$ to arbitrary precision by a sampled state feedback that does not observe the opponent's current control.
\end{theorem}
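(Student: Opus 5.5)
Theorem~\ref{thm:feedback} already gives $V_E\le W\le V_P$ on $K$. It therefore remains to prove the reverse inequalities $V_P\le W$ and $V_E\ge W$. I would obtain both from sampled counter-controls, as in Appendix~B of Part~I. The pairing there turns a sample-and-hold guarantee into a bound on the opposite value. The plan is to show two things for every $\x_0\in K$ and every $\varepsilon>0$. First, against any nonanticipative pursuer strategy $\gamma$, the evader has a sampled feedback, with step $h$ small, that achieves $t_f\le W(\x_0)+\varepsilon$; this gives $V_P\le W$. Second, against any evader strategy $\beta$, the pursuer has a sampled feedback achieving $t_f\ge W(\x_0)-\varepsilon$; this gives $V_E\ge W$. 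Since neither sampled feedback uses the opponent's current control, the final assertion of the theorem follows from the same construction.

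\emph{Pursuer side.} I would sample the relay: hold $(v_p,\omega_p)=(-1,-\sgn x)$ on each interval $[kh,(k+1)h)$, using the sign at the sampling time. The key estimate is that along sampled play $W$ decreases at rate at most $1+O(h)$, up to an error at most $\varepsilon$ over the finite horizon.
\begin{itemize}
\item On $\Tset$ I would use the classical $C^{1,1}$ function $\Wt$ of Proposition~\ref{prop:trib}(a). Near the axis the sign chosen at the sampling time may be wrong, but $\nabla\Wt$ is Lipschitz and $\Phi_\omega=O(x^2)$ near $x=0$, so a wrong sign costs $O(h^2)$ per step.
\item On $\Pset$ I would use Lemmas~2--6 of Part~I verbatim, since they need only $q>0$ on $[0,\tau_I)$, which holds by Lemma~\ref{lem:prim}.
\item Across the barrier $\Bbar$, $W$ jumps up toward the far side. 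The pursuer's $(-1,-1)$ keeps $\mathbf n^\top\dot\x\ge0$ (Proposition~\ref{prop:barfast}(iv)), so the state cannot cross from the far side to the lens side, up to an $O(h)$ drift that costs at most $\varepsilon$ by continuity at $Z_\pm$.
\item At $B_\pm$, where $W$ is discontinuous, I would use Lemma~\ref{lem:P}(c): approaching $B_\pm$ from $\Pset$ gives value near $0$, and the only concern is a state starting at $B_\pm$ itself. There I would use the invariance argument announced in the introduction. The sampled relay keeps the state on the far side, and $\Wt$ is continuous there, so the lower bound $\Wt(B_\pm)-\varepsilon$ is kept.
\end{itemize}

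\emph{Evader side.} I would sample the evader's feedback of Theorem~\ref{thm:feedback}(ii): hold $\mathbf u_T$ on $\Tset$, switch at the first sampling instant at which the state lies in $A_\varepsilon$, and then hold the primary heading $\nu(C,-S)$.
\begin{itemize}
\item The tributary phase is robust by \eqref{eq:WTdot}, since the derivative of $\Wt$ is at most $-1$ against every pursuer control and $\mathbf u_T$ is Lipschitz.
\item The switch needs the separation lemma. Because $A_\varepsilon$ is compact and lies in $\{\rho_f\le\rho_B-\eta\}$, it has positive distance from the barrier arc $\Lambda_\pm$ and from $\partial\Pset\setminus\Dpt$. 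So one sampling step after entry the state is still in a region where the primary branch is defined and $W_P\le\Wt+2\varepsilon$.
\item The primary phase uses the sampled version of Theorem~1(ii) of Part~I, which carries over as on the pursuer side.
\end{itemize}
Starting on the barrier, the evader plays $-\nu\mathbf n$ to stay on the far side and treats the state as in $\Tset$.

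The main obstacle I expect is the pursuer's bound at and near $B_\pm$ and along $\Bbar$. There $W$ is discontinuous, not Lipschitz, at the end of the barrier, and an evader could try to exploit the sampling delay to slip across to the lens side, where the value is small. I would first try a strict semipermeability margin: replace $\Bbar_+$ by the nearby level curve of $\Wt$, which lies at least $c$ above $W_E$ by Lemma~\ref{lem:cover}, and show that the far-side region remains invariant under the sampled relay up to a loss of order $h$. I would then let $h\to0$ and $\varepsilon\to0$ to obtain $V_E=V_P=W$ on all of $K$, the barrier included.
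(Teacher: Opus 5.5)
Your architecture is the paper's: Theorem~\ref{thm:feedback} for $V_E\le W\le V_P$, a sampled relay for $V_E\ge W$, a sampled tributary-then-primary evader for $V_P\le W$, and the pairing of Appendix~B of Part~I. The pursuer's side, however, breaks at exactly the step you flag.

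You allow the sampled relay an $O(h)$ drift across $\Bbar$ and charge it at most $\varepsilon$ ``by continuity at $Z_\pm$''. But the jump $\Wt(\Bbar_+(\tau))-\tau$ vanishes only at $Z_\pm$ and is of order one elsewhere; at $B_+$ it equals $\Wt(B_+)>W_E$. Any drift into the lens therefore loses an $O(1)$ amount. Your remedy does not work either. There is no strict semipermeability margin, since on $\Bbar_+$ the evader's $-\nu\mathbf n$ gives $\mathbf n^\top\dot\x=0$ exactly against $(-1,-1)$. Level curves of $\Wt$ are not semipermeable at all, because by~\eqref{eq:WTdot} $\mathbf u_T$ crosses them downward at unit rate against every pursuer control.

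The missing observation is that near the jump there is no sampling delay to exploit. $\Lambda_\pm\subset\{|x|\ge x_Z\}$ with $x_Z>0$, so once $Mh<x_Z/2$ the samples that govern the control near $\Lambda_\pm$ all have the sign of $x$ there, and the sampled relay plays exactly $(-1,-1)$. Lemma~\ref{lem:inv} (Gr\"onwall on the signed distance $D$) then preserves $D\ge0$ and $D>0$ along the sampled trajectory itself, with no loss, and from $\{D\ge0\}$ near $B_\pm$ the state cannot leave $K$.

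The paper packages this in a contradiction argument rather than a rate estimate (Lemma~\ref{lem:sampP}). Sampled-relay trajectories with $h_n\to0$ that exit by $T<W(\x_0)$ converge uniformly to a Filippov relay solution, because the dynamics contain no products of the two players' controls, and Lemma~\ref{lem:mono} applies to that limit. The only delicate exit point is $B_\pm$, split into three cases: $D<0$ near $t^*$; $D>0$ at an earlier time, handled by the exact invariance above; and $D\equiv0$, a slide along $\Bbar_+$ starting from the lens or at $Z_+$, where the lens trace or continuity at $Z_+$ gives $t^*\ge W(\x_0)$.

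Passing to the limit also removes a second defect of your direct estimate, and the evader's side has three further gaps. These are listed below.

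\emph{Axis chattering.} On $\De$ the primary branches have $\Phi_\omega=-\psi/g$ with $\psi=yC>0$ at the axis. Each interval on which the sampled sign is wrong therefore costs an $O(1)$ rate for up to a time $h$. The bound ``rate at most $1+O(h)$'' is false there; the loss is recovered only when the state returns toward the axis, which a per-crossing rate bound does not capture.

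\emph{Part~I does not carry over verbatim.} Relay sliding on $\De$ needs $\lambda_y\le1/(\nu-1)$, i.e.\ Proposition~\ref{prop:E}. The primary feedback needs $\nu S<1$ for its axis margin (Lemmas~\ref{lem:prifb} and~\ref{lem:unif}).

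\emph{Role of the separation lemma.} On the evader's side it is not needed to control the position one step after entering $A_\varepsilon$. It is needed to ensure that some sampling instant falls in $A_\varepsilon$ at all. The band of $A_\varepsilon$ around $E$ is very thin, because $W_P-\Wt$ vanishes to second order there, and a state running down $\U$ can pass $E$ between samples and enter $\Omega_0$ unswitched. The paper fixes a compact $Q\subset\{y>0\}$ before choosing $M$ and $L$, and takes $h<\delta/M$ in Lemma~\ref{lem:sep}(c).

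\emph{Uniform threshold after the switch.} The switching point depends on the pursuer's play, so you need a sampling threshold that is uniform over it. This is Lemma~\ref{lem:unif} on the strip $\rho_f\le\rho_B-\eta$, which your sketch does not supply.

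\emph{Barrier states.} These already lie in $\Tset$ and need no special treatment: $\mathbf u_T$ works there. The control $-\nu\mathbf n$ pushes toward the lens, not the far side.
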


In contrast to \PI{Theorem~2}, there is no exceptional state. The barrier arcs end at $Z_\pm$, off the axis, so a sampled relay never switches near them, and the axis points $E$ and $I$ carry no jump. The value is discontinuous only across $\Bbar$ and at its end points $B_\pm$, where it is positive although $B_\pm\in\partial K$, because from the far side of the barrier the evader cannot reach the thin usable part next to $B_\pm$. In particular, in regime F1 the lower envelope of the admissible branches (Lemma~\ref{lem:P}(b)) is the value of the game.

\subsection{Proof of Theorem~\ref{thm:feedback}}
\begin{lemma}[Primary feedback for $\nu>1$]\label{lem:prifb}
Assume~\eqref{eq:F1}. From any point of $\Cset_+$ or of its edges on $\UP_+$ and on the axis, the right-branch primary feedback gives $t_f\le W_P(\x_0)$ against every pursuer control. Along the trajectory $\rho_f$ does not increase, and the state does not return to the axis.
\end{lemma}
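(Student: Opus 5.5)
We follow the proof of Theorem~1(ii) of Part~I and replace each use of $\nu\le1$ by the F1 facts of Lemma~\ref{lem:prim}. The approach is a decrease argument for $W_P$ along the closed-loop trajectory. On $\Cset_+$, $W_P$ is smooth with $\nabla W_P=(-C,S)^\top/g$, and the feedback is $\mathbf u=\nu(C,-S)^\top=-\nu\,\nabla W_P/\|\nabla W_P\|$. For an arbitrary admissible $(v_p,\omega_p)$ we compute $\frac{d}{dt}W_P$ by~\eqref{eq:reduced}:
\[
g\,\frac{d}{dt}W_P=-\nu-v_pS+\omega_p\,(-Cy-Sx).
\]
Writing the point as in~\eqref{eq:primary}, one has $Cy+Sx=\psi$, so
\[
g\,\frac{d}{dt}W_P=-\nu-v_pS-\omega_p\psi .
\]
Lemma~\ref{lem:prim} gives $\psi>0$ and $S>0$ on the chart. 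The bow tie then makes the right side at most $-\nu+S+\psi$. By~\eqref{eq:primary}, $\psi=\rho_f-S+\sin\alpha$, so $-\nu+S+\psi=\rho_f-\rho_B=-g$. Hence $\frac{d}{dt}W_P\le-1$, with equality exactly at the saddle $(-1,-1)$.

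In the same way we compute $\frac{d}{dt}\rho_f$ from~\eqref{eq:rhof}, i.e.\ $\rho_f=(x+1)S+yC-\sin\alpha$ evaluated at $\tau=W_P(\x)$. The $\tau$-derivative contributes $G_\tau$-type terms that cancel by the implicit relation $G=0$. What remains is the linear expression
\[
\dot\rho_f=\Delta_v\cdot(\text{coefficient})+\Delta_\omega\cdot(\text{coefficient}),
\]
with coefficients built from $C$, $\eta$ and $g$, where $\Delta_v,\Delta_\omega\ge0$ as in Proposition~\ref{prop:trib}. Part~I showed these coefficients are nonpositive using only $\theta<\pi/2$, $q>0$ and $\psi>0$. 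Lemma~\ref{lem:prim} supplies all three on $\Cset_+$ (in particular $\theta<\theta_I<\pi/2$ and $q>0$ on $[0,\tau_I)$). This gives $\dot\rho_f\le0$.

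The trajectory is then confined. Since $\rho_f$ does not increase, it cannot reach the arc of $\Bbar_+$, which is the level $\rho_f=\rho_B$. It cannot reach the axis either: the axis contact is the level $\tau=\tau_D(\rho_f)$, and $\tau_D$ is increasing in $\rho_f$ (the heights of axis contacts increase, \PI{Proposition~6(c)}), while $W_P$ decreases at least at unit rate and $\rho_f$ does not increase. Together these give $\tau_D(\rho_f(t))-W_P(t)$ nondecreasing, and it is positive initially (zero only if the start is on the axis, where the first instant moves the state into the right side because $x_\tau<0$ by Lemma~\ref{lem:primfast}). The state therefore stays in $\Cset_+$ and $W_P$ reaches $0$ by time $W_P(\x_0)$. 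At $W_P=0$ the state is on $\UP_+$, or at the apex as $\rho_f\to0$, and there the $\UP$ inequality $\rho\dot\varphi\le\nu-\rho-\sin\alpha$ with the evader's outward push gives strict exit immediately, exactly as in Part~I. Hence $t_f\le W_P(\x_0)$. Starting points on the $\UP_+$ edge are handled by the same usable-part argument.

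The main obstacle is the sign of $\dot\rho_f$ and the no-return to the axis. In Part~I these are tied to $q>0$, and for $\nu>1$ they must be checked to hold only up to $\tau_I$. We will check that the coefficients of $\Delta_v$ and $\Delta_\omega$ reduce to $-C\,(\ldots)$ and $\eta$-type terms whose signs follow from $\theta<\pi/2$, $\eta<0$ and $\psi>0$. If a coefficient instead involves $q$, we will use $q>0$ on $[0,\tau_I)$ together with $W_P<\tau_I$ on the chart. Well-posedness of the closed loop follows from the Lipschitz property of the feedback on compact subsets of the chart, as in Part~I.
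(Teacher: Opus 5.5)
Your computation $g\,\tfrac{d}{dt}W_P=-\nu-v_pS-\omega_p\psi\le-g$ is correct, and it is the decrease the paper uses. The gap is the claim $\dot\rho_f\le0$, which is exactly where~\eqref{eq:F1} has to enter. Carrying out the differentiation you describe (the $\tau$-term does not cancel; it contributes $q\,\dot W_P$) gives
\[
\dot\rho_f=-\Bigl(C+\frac{qS}{g}\Bigr)\Delta_v-\Bigl(\eta+\frac{q\psi}{g}\Bigr)\Delta_\omega .
\]
The first coefficient is negative because $q>0$. In the second, the term $-\eta=C-q\ge0$ has the wrong sign and competes with $-q\psi/g$. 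So $\theta<\pi/2$, $q>0$, $\eta<0$ and $\psi>0$ do not decide its sign.

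The paper writes $-(g\eta+q\psi)=gC-q(\nu-S)=(C-q)(\nu-1/S)-xC/S$. This is $\le0$ because $x\ge0$ and, crucially, because $\nu S<1$ on $\Cset_+$. That inequality is the last assertion of Lemma~\ref{lem:prim}, and it comes from $\theta<\theta_I\le\theta_c$ together with $\nu\sin\theta_c\le1$. For $\nu\le1$ it is automatic, which is why Part~I used $\nu\le1$ at this step.

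Regime F2a shows that your listed ingredients cannot suffice. There $q>0$, $\theta<\pi/2$ and $\psi>0$ still hold on the chart, yet this coefficient is positive near the axis wherever $\nu S>1$ (proof of Theorem~\ref{thm:F2a}(d)). Your confinement away from the barrier level $\rho_f=\rho_B$ rests on $\dot\rho_f\le0$, so the bound $t_f\le W_P(\x_0)$ is not established either.

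Your no-return argument also fails as written. $\tau_D$ increases with $\rho_f$ and $\rho_f(t)$ does not increase, so $\tau_D(\rho_f(t))$ is nonincreasing. The two facts you combine therefore pull $\tau_D(\rho_f(t))-W_P(t)$ in opposite directions, and its monotonicity does not follow.

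Use a pointwise margin instead, as the paper does. For every admissible pursuer control, $\dot x=\nu C+\omega_py\ge\nu C-y$. At a chart point, $\nu C-y=gC+qS=-x_\tau$, and on the axis this equals $[g+S(1-\nu S)]/C>0$. Your appeal to $x_\tau<0$ at the starting point is this same quantity. It has to be used as a bound that holds uniformly over pursuer controls at every axis contact, not only at time zero. Then the state enters $\{x>0\}$ and never returns.
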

\begin{proof}
This is \PI{Theorem~1(ii)}, whose proof uses $\nu\le1$ in two places, both restored by $\nu S<1$ on $\Cset_+$ (Lemma~\ref{lem:prim}). First, in (28) of Part~I the coefficient of $\Delta_\omega$ is $gC-q(\nu-S)=(C-q)(\nu-1/S)-xC/S$, where $C-q=\int_0^{W}(\nu-\sin(\alpha+s))\,ds\ge0$, $\nu-1/S<0$ and $x\ge0$; so $\dot\rho_f\le0$. Second, on the axis $\dot x\ge\nu C-y=[g+S(1-\nu S)]/C>0$, so the state enters $\{x>0\}$ and does not return. The decrease $\dot W_P\le-1$ of (27) of Part~I uses only $H^*=0$ and $\psi>0$, and the remaining steps do not use $\nu$.
\end{proof}

\paragraph{The pursuer} Let $\x(\cdot)$ be a relay solution \PI{Lemma~1}; for $\nu>1$ it is defined while $y>0$, i.e.\ in $K\setminus\{\mathbf 0\}$. Let $\Lambda_+=\Bbar_+[0,\tau_Z]$, the closed barrier arc. Since $x$ decreases along $\Bbar_+$, $\Lambda_+\subset\{x\ge x_Z\}$ with $x_Z>0$. Near $\Lambda_+$ let $D$ be the signed distance to the curve formed by $\Bbar_+[0,\tau_I)$ and, near $B_+$, the $\NUP$, positive on the side into which $\mathbf n$ points (the $\Tset$ side). This curve is $C^{1,1}$: it is analytic along $\Bbar_+$, which is regular on $[0,\tau_I)$ because $q>0$ there, and it is tangent to the $\NUP$ at $B_+$ \PI{Eq.~(19)}.

\begin{lemma}[Local invariance near the barrier]\label{lem:inv}
There are a neighbourhood $N$ of $\Lambda_+$ in $\{x>x_Z/2\}$ and $L>0$ such that, whenever the pursuer plays the constant control $(-1,-1)$ while the state stays in $N\cap K$, $\frac{d}{dt}D(\x(t))\ge-L|D(\x(t))|$ a.e., whatever the evader does. Hence $D\ge0$ and $D>0$ are preserved there; in particular the state does not cross $\Bbar_+$ near $\Lambda_+$ from the side $D>0$, and from $\{D\ge0\}$ near $B_+$ it does not leave $K$.
\end{lemma}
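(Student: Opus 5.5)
The plan is to write $\frac{d}{dt}D(\x(t))$ as $\nabla D^\top\dot\x$ and compare it with the semipermeability inequality of Proposition~\ref{prop:barfast}(iv). The comparison is made at the foot point on the curve.

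\emph{Regularity of $D$.} The reference curve $\Gamma_D=\Bbar_+[0,\tau_I)\cup\NUP$ is $C^{1,1}$ near the compact arc $\Lambda_+$. It is regular on $[0,\tau_Z]$ because $q\ge q(\tau_Z)>0$ there, and it is tangent to the $\NUP$ at $B_+$. Its curvature is therefore bounded on a neighbourhood of $\Lambda_+$. By the standard tubular-neighbourhood result for $C^{1,1}$ curves, there is a tube $N$ of some radius $r_0$ around $\Lambda_+$ on which the following hold:
\begin{itemize}
\item the nearest-point projection $\pi(\x)$ is unique and Lipschitz;
\item $D$ is $C^{1,1}$;
\item $\nabla D(\x)=\mathbf n(\pi(\x))$, the unit normal at the foot, pointing to the $\Tset$ side.
\end{itemize}
Along $\Bbar_+$ this normal is $\mathbf n(\theta)$. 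Along the $\NUP$ it is the normal to $M_+$ pointing into $K$; since the ray $M_+$ is straight, that normal is constant and agrees with $\mathbf n(\alpha)$ at $B_+$. Shrinking $r_0$ gives $N\subset\{x>x_Z/2\}$, because $\Lambda_+\subset\{x\ge x_Z\}$.

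\emph{The inequality at the foot.} Fix the pursuer control $(-1,-1)$ and let $\mathbf p=\pi(\x)$. Then
\[
\tfrac{d}{dt}D=\mathbf n(\mathbf p)^\top\bigl(\mathbf u+(y,\,1+x)\bigr).
\]
On the curve itself, $\mathbf n(\mathbf p)^\top(\mathbf u+(y_p,1+x_p))\ge-\nu+\mathbf n(\mathbf p)^\top(y_p,1+x_p)\ge0$ for every $\|\mathbf u\|\le\nu$.
\begin{itemize}
\item On $\Bbar_+$ this is Proposition~\ref{prop:barfast}(iv): the bracket equals $-\nu+\sin\theta+\psi=0$.
\item On the $\NUP$ it is the computation of Part~I (Eq.~(8)). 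For $(v_p,\omega_p)=(-1,-1)$ on the right ray, the inward normal velocity is at least $\rho-\nu+\sin\alpha\ge0$ for $\rho\ge\rho_B$.
\end{itemize}
The drift $(y,1+x)$ is affine with Lipschitz constant $1$, and $\|\mathbf u\|\le\nu$ is independent of the state. Hence
\[
\tfrac{d}{dt}D\ge\mathbf n(\mathbf p)^\top\bigl(\mathbf u+(y_p,1+x_p)\bigr)-\|\x-\mathbf p\|\ge-|D|,
\]
since $\|\x-\mathbf p\|=|D|$. So $L=1$ would already work. Any extra constant from the Lipschitz dependence of $\mathbf n$ is not needed, because $\mathbf n$ is evaluated at the same foot $\mathbf p$ in both terms. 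The inequality holds a.e. for absolutely continuous trajectories because $D$ is $C^1$.

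\emph{Consequences.} By Gronwall's inequality applied to $D$ and to its negative part:
\begin{itemize}
\item If $D(\x(t_0))\ge0$, then $\frac{d}{dt}D\ge -L D^{-}$ gives $\frac{d}{dt}(D^-)\le L D^-$ with $D^-(t_0)=0$, so $D\ge0$ persists while the state stays in $N\cap K$.
\item If $D(\x(t_0))>0$, then $D(t)\ge D(t_0)e^{-L(t-t_0)}>0$.
\end{itemize}
Near $\Lambda_+$, $\{D=0\}$ is $\Bbar_+$, so the state cannot cross $\Bbar_+$ from $\{D>0\}$. Near $B_+$, $\{D\ge0\}$ lies inside $K$, because there the curve is $M_+$ continued by $\Bbar_+$ and the positive side faces the interior. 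Preserving $D\ge0$ therefore keeps the state in $K$.

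\emph{Main obstacle.} I expect the hardest step to be the uniform $C^{1,1}$ tubular structure at the junction point $B_+$. There the curvature jumps from that of the involute to zero, so $D$ is only $C^{1,1}$ and not $C^2$. I would handle this by checking that the one-sided curvatures are bounded: the involute has curvature $1/q$, and $q\ge q(\tau_Z)>0$. I would then invoke the reach (positive-reach) property of $C^{1,1}$ curves to get a unique projection in a uniform tube. Only $C^1$ smoothness of $D$ and the foot-point identity are needed, not second derivatives.
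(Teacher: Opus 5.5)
Your proof is correct and follows essentially the paper's argument: the signed distance $D$ to the $C^{1,1}$ curve $\Bbar_+[0,\tau_I)\cup\NUP$, semipermeability at the foot point from Proposition~\ref{prop:barfast}(iv) and Eq.~(8) of Part~I, a Lipschitz comparison giving $\frac{d}{dt}D\ge-L|D|$, and Gr\"onwall. The paper instead uses a Lipschitz constant of $m(\x)=\min_{\mathbf u}\nabla D^\top\dot\x$; evaluating $\nabla D=\mathbf n(\mathbf p)$ at the foot, as you do, is a harmless refinement.

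One small slip: under $(-1,-1)$ the drift is $(-y,1+x)$, not $(y,1+x)$. It is this sign that makes the bracket on $\Bbar_+$ equal $\sin\theta+\psi$. Moreover, this drift is a rotation about $(-1,0)$, so $\mathbf n(\mathbf p)^\top\bigl(f(\x)-f(\mathbf p)\bigr)=0$ when $\x-\mathbf p=D\,\mathbf n(\mathbf p)$. Hence in fact $\frac{d}{dt}D\ge0$ throughout the tube, i.e.\ $L=0$ works.
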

\begin{proof}
This is \PI{Lemma~3(b)}, case $P\ne c$. The function $m(\x)=\min_{\|\mathbf u\|\le\nu}\nabla D^\top\dot\x(\x,\mathbf u,-1,-1)$ is Lipschitz, vanishes on $\Bbar_+$ by Proposition~\ref{prop:barfast}(iv), and is at least $\rho-\rho_B\ge0$ on the $\NUP$ \PI{Eq.~(8)}. So $m(\x)\ge-L|D(\x)|$, and Gr\"onwall's inequality gives the preservation. The region $\{D\ge0\}$ near $B_+$ lies between $\Bbar_+$ and the $\NUP$, inside $K$.
\end{proof}

Near $\Lambda_+$, $W$ has a convenient representation. Let $\widehat W_P=W_P$ on $\Cset_+\cup\UP_+$, with $W_P=0$ on $\UP_+$, the continuous extension of the chart value \PI{Lemma~2(a)}, and $\widehat W_P=+\infty$ elsewhere. On a neighbourhood $N'\subset N$ of $\Lambda_+$,
\begin{equation}\label{eq:minrep}
W=\min\bigl(\widehat W_P,\Wt\bigr),
\end{equation}
because on the lens side and on $\UP_+$ near $B_+$, $W=W_P\le\Wt$, in the channel near $Z_+$ $\Wt<W_P$, and beyond the barrier, on it and at $B_+$, $\widehat W_P=+\infty$ (Lemma~\ref{lem:P}).

\begin{lemma}[$W$ along relay solutions]\label{lem:mono}
Along every relay solution starting in $K$, $\varphi(t)=W(\x(t))+t$ is nondecreasing while $\x(t)\in K$, and $W(\x(t))\to0$ as $t\uparrow t_f$ if $t_f<\infty$. Hence $t_f\ge W(\x_0)$.
\end{lemma}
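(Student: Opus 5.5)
The plan is to prove the monotonicity of $\varphi(t)=W(\x(t))+t$ piece by piece. On each region where $W$ is given by a single smooth branch we differentiate along the relay solution. We then check that $\varphi$ cannot drop when the state passes from one branch to another or meets the barrier. The terminal limit $W(\x(t))\to0$ and the bound $t_f\ge W(\x_0)$ follow at the end.

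\emph{Smooth pieces.} On the interior of $\Pset_\pm$, $W=W_P$. Under the relay with $x\neq0$ the pursuer plays $(-1,-\sgn x)$, which is the saddle control of the primaries. Since $H^*(\x,\nabla W_P)=0$, we get $\frac{d}{dt}W_P\ge-1$ for every evader control. This is (27) of Part~I, which uses only $H^*=0$ and $\psi>0$, and both hold on $\Cset_+$ by Lemma~\ref{lem:prim}. On the interior of $\Tset$, $W=\Wt$, and Proposition~\ref{prop:trib}(a) gives $\frac{d}{dt}\Wt\ge-1$. This includes Filippov sliding on the universal line $\U$.

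On the axis segment below $E$ the relay slides only if the evader keeps $x=0$. There $W_P$ and its mirror image meet with equal values, and the subsolution test of Proposition~\ref{prop:E} ($\lambda_y\le\Lambda_U$) ensures that the sliding velocity $\dot y=u_y+1$ still gives $\frac{d}{dt}W\ge-1$. Concretely, $W$ along the axis is $\tau$ as a function of $y_D$, with $d\tau/dy_D=\lambda_y\le1/(\nu-1)$, and $\dot y\ge1-\nu$.

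Since $W$ is locally Lipschitz away from $\Bbar\cup\{B_\pm\}$ (Lemma~\ref{lem:P}(a)), we argue as follows. At a.e.\ time, the state is either in the open set where a single branch is active, or on the dispersal lines $\Dpt^\pm$ or the axis. On $\Dpt^\pm$ we use $W=\min(W_P,\Wt)$ (Lemma~\ref{lem:P}(b)). Each branch decreases at rate at most $1$ along the trajectory, and a minimum of two functions with this property inherits it. This gives a nonincreasing-rate bound for $W(\x(t))+t$ as an absolutely continuous function on any interval that avoids the barrier.

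\emph{Barrier.} This is the main obstacle, since $W$ jumps up across $\Bbar$. Here I would use the representation~\eqref{eq:minrep} on $N'$ together with Lemma~\ref{lem:inv}. While the state is in $N'$ with $x>0$, the relay is exactly $(-1,-1)$. So a state with $D\ge0$ (far side, on the barrier, or at $B_+$) keeps $D\ge0$ and never sees the lower value $\widehat W_P$. Its $W$ is $\Wt$ throughout, and monotonicity follows from the tributary estimate.

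A state on the lens side may cross to the far side. That crossing only raises $W$, from $\tau$ to $\Wt$, and an upward jump does not spoil the claim that $\varphi$ is nondecreasing. Formally, $\varphi$ is the minimum of the two functions $\widehat W_P+t$ and $\Wt+t$. The first is nondecreasing while finite, and the second is nondecreasing everywhere, so $\varphi$ is nondecreasing.

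Near $Z_\pm$ and the arcs from $Z_\pm$ to $I$ there is no jump. Near $B_\pm$, Lemma~\ref{lem:inv} also keeps states with $D\ge0$ inside $K$. Hence exit through the $\UP$ is possible only from the lens side, where $W\to0$.

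\emph{Terminal limit.} If $t_f<\infty$, the state leaves $K$ through $\UP\cup\{\mathbf 0\}$. It cannot leave through the $\NUP$, because of the inward angular velocity of the relay (Eq.~(8) of Part~I). The apex is excluded while $y>0$, and otherwise $W\to0$ there as well. Approach to $\UP$ happens from $\Pset$ by Lemma~\ref{lem:region}(c), where $W_P\to0$ continuously; the $B_\pm$ case was handled above.

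Hence $W(\x_0)=\varphi(0)\le\lim_{t\uparrow t_f}\varphi(t)=t_f$. If $t_f=\infty$ there is nothing to prove.
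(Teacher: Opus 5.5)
Your plan has the same architecture as the paper's proof. It uses the unit-rate bounds for $W_P$ and $\Wt$ under the relay, the sliding estimate on the axis below $E$ from $\lambda_y\le1/(\nu-1)$, the representation \eqref{eq:minrep} with Lemma~\ref{lem:inv} near the barrier, and exit through $\UP\cup\{\mathbf 0\}$ from $\Pset$.

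The gap is in the step that turns a.e.\ derivative bounds into monotonicity. You need $W\circ\x$ to be absolutely continuous on time intervals away from the barrier: a merely continuous function whose a.e.\ derivative is $\ge-1$ can still make $\varphi$ decrease. You get absolute continuity from ``$W$ is locally Lipschitz away from $\Bbar\cup\{B_\pm\}$ (Lemma~\ref{lem:P}(a))''. That lemma gives only continuity. Moreover, the claim is false at $Z_\pm$, which are not in the open arcs $\Bbar$. On $\Pset_+$ we have $\nabla W_P=(-C,S)^\top/g$ with $g=\rho_B-\rho_f$, and points of $\Pset_+$ with $g\to0$ accumulate at $Z_+$. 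So $W$ is continuous but not Lipschitz there. The points $Z_\pm$ therefore belong to the barrier case on the closed arcs $\Lambda_\pm$, not to your ``no jump'' case. Your minimum argument on $N'$ does cover them, since $N'$ is a neighbourhood of $Z_+$.

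Your substitute for a Lipschitz argument at the junctions is $W=\min(W_P,\Wt)$ near $\Dpt^\pm$. This is valid at interior points of $\Dpt^\pm$, but not at $E$, which relay solutions reach by sliding down $\U$. On the axis below $E$, $\partial_yW_P=\lambda_y<1/(\nu-1)=\partial_y\Wt$ (Proposition~\ref{prop:E}), and the two values agree at $E$. Hence $W=W_P>\Wt$ there, and $W$ is not the minimum of the two branches in any neighbourhood of $E$.

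The paper closes both holes with a gluing step. Take a convex neighbourhood $V$ of a point off $\Lambda$. On $V\cap K$, $W$ is continuous and coincides with the locally Lipschitz function $a(x,y)=W_P(|x|,y)$ on the relatively closed set $V\cap\overline\Pset$, and with $\Wt$ on $V\cap\overline\Tset$; these two sets cover $V\cap K$. A continuous function that is $L$-Lipschitz on each of finitely many relatively closed sets covering a convex set is $L$-Lipschitz. A density-point argument then shows that at almost every time with $\x(s)\in\overline\Pset$ (resp.\ $\overline\Tset$), the derivative of $W\circ\x$ equals that of $a\circ\x$ (resp.\ $\Wt\circ\x$). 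This treats $\Dpt^\pm$, $E$ and the axis uniformly.

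One more step is needed to assemble the pieces. Because $\varphi$ can jump up, the local statements must be combined through the criterion that every $t$ has a right neighbourhood on which $\varphi\ge\varphi(t)$ and a left one on which $\varphi\le\varphi(t)$. With the gluing step and this criterion in place, your argument becomes the paper's.
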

\begin{proof}
A function on an interval is nondecreasing if every $t$ has a right neighbourhood on which it is $\ge\varphi(t)$ and a left neighbourhood on which it is $\le\varphi(t)$. (If $\varphi(a)>\varphi(b)$, $a<b$, then one of the two properties fails at $c=\sup\{s\in[a,b]:\varphi(s)\ge\varphi(a)\}$.) We verify this local property according to the position of $\x(t)$.

(i) $\x(t)\notin\Lambda=\Lambda_+\cup\Lambda_-$. If $\x(t)\notin\overline\Pset$, then $W=\Wt$ near $\x(t)$, and Proposition~\ref{prop:trib}(a) gives the local property. Otherwise let $a(x,y)=W_P(|x|,y)$, the primary value with the branch of each side; it is defined, continuous and locally Lipschitz on a neighbourhood of $\overline{\Pset}\setminus\Lambda$, because there $\rho_f$ stays away from $\rho_B$ and the chart extends across the axis and the $\UP$ \PI{Lemma~2(a)}. Choose a convex neighbourhood $V$ of $\x(t)$ that does not meet $\Lambda$ and on which $a$ and $\Wt$ are $L$-Lipschitz. On $V\cap K$, which is convex, $W$ is continuous (Lemma~\ref{lem:P}(a)), equals $a$ on the relatively closed set $V\cap\overline\Pset$ and $\Wt$ on the relatively closed set $V\cap\overline\Tset$, and these two sets cover $V\cap K$.

\emph{Gluing.} A continuous function on a convex set that is $L$-Lipschitz on each of finitely many relatively closed sets covering it is $L$-Lipschitz. Indeed, on a segment $[\mathbf p,\mathbf q]$ every point $\mathbf z$ has a neighbourhood that meets only the sets containing $\mathbf z$, since the others are closed; for $\mathbf w$ in that neighbourhood, $\mathbf w$ and $\mathbf z$ lie in a common set, so $|W(\mathbf w)-W(\mathbf z)|\le L\|\mathbf w-\mathbf z\|$. The restriction of $W$ to the segment therefore has Dini derivatives bounded by $L\|\mathbf q-\mathbf p\|$, and hence is Lipschitz with that constant. So $W$ is $L$-Lipschitz on $V\cap K$, and $W\circ\x$ is absolutely continuous while $\x$ stays in $V$.

\emph{Derivative.} Let $S_P$ and $S_T$ be the sets of times at which $\x(s)$ lies in $\overline\Pset$ or in $\overline\Tset$. On $S_P$, $W\circ\x=a\circ\x$, so at almost every $s\in S_P$ (every density point at which both are differentiable) their derivatives agree; likewise with $\Wt$ on $S_T$. It remains to show that $a\circ\x$ and $\Wt\circ\x$ decrease at most at unit rate at almost every time of $S_P$ and $S_T$, respectively. For $\Wt$ this is Proposition~\ref{prop:trib}(a). For $a$: off the axis, $a$ coincides near $\x(s)$ with the branch of that side, which satisfies $H^*=0$ with the relay as saddle control, so $\frac{d}{ds}a\ge-1$. At almost every time with $x(s)=0$ the solution slides, $\dot x=0$, and $\frac{d}{ds}a(\x(s))=\partial_ya\,\dot y=\lambda_y(u_y+1)$. Points of $\overline\Pset$ on the axis have $y\le y_E$, where $\lambda_y\le1/(\nu-1)$ (Proposition~\ref{prop:E}), so $\frac{d}{ds}a\ge-\lambda_y(\nu-1)\ge-1$. Hence $\varphi$ is nondecreasing near $t$, and in particular has the local property.

(ii) $\x(t)\in\Lambda_+$, including $B_+$ and $Z_+$. Use~\eqref{eq:minrep}. $\Wt(\x(\cdot))+s$ has the local property by Proposition~\ref{prop:trib}(a). So does $\widehat W_P(\x(\cdot))+s$: on $\Cset_+\cup\UP_+$, where $W_P$ is continuous and satisfies $H^*=0$ with the relay as saddle control, it decreases at most at unit rate; leaving $\Cset_+$ across the $\BUP$ curve makes it jump to $+\infty$; and by Lemma~\ref{lem:inv} the state cannot enter $\Cset_+$ near $\Lambda_+$, whose boundary there is the $\BUP$ curve, crossed from its side $D>0$. A minimum of two functions with the local property has it.

\emph{Exit.} The relay solution does not leave $K$ through the $\NUP$ \PI{Lemma~3(a)} or near $B_\pm$ from $\{D\ge0\}$ (Lemma~\ref{lem:inv}), so it leaves through $\UP\cup\{\mathbf 0\}$, approached from $\Pset$. There $W=W_P$ near the exit point, and $W_P$ extends continuously by $0$ to $\UP\cup\{\mathbf 0\}$ \PI{Lemma~2(a)}, so $W\to0$. Then $t_f=\lim_{t\uparrow t_f}\varphi(t)\ge\varphi(0)=W(\x_0)$.
\end{proof}

\paragraph{The evader} For $\x_0\in\Pset$ the claim is Lemma~\ref{lem:prifb}. For $\x_0\in\Tset$ we need a separation property. Let $\Omega_0$ be the union of the image of $\{0<\rho_f<\rho_E,\ 0<\tau<\tau_D(\rho_f)\}$, its mirror image, the axis segment $\{0\}\times(0,y_E)$, the segments of $\UP_\pm$ with $\rho<\rho_E$ and the apex. Let $\Gamma=\Gamma_E^+\cup\Gamma_E^-$, where $\Gamma_E^+$ is the $\rho_E$-primary from $\UP_+$ to $E$.

\begin{lemma}[Separation]\label{lem:sep}
(a) $\Omega_0\cap\Tset=\emptyset$, $\Omega_0$ is open relative to $K$, and its boundary relative to $K$ is $\Gamma$. (b) $\Gamma\subset A_\varepsilon$, and there is $\delta>0$ such that $\{\x\in\Omega_0:\operatorname{dist}(\x,\Gamma)<\delta\}\subset A_\varepsilon$. (c) If a path in $K$ with speed at most $M$ starts outside $\Omega_0$ and later lies in $\Omega_0$ at distance $\ge\delta$ from $\Gamma$, it lies in $A_\varepsilon$ on a time interval of length $\ge\delta/M$ before that.
\end{lemma}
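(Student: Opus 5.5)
For (a) I will work in the parameter set of the primary chart. $\Omega_0$ minus its axis, $\UP$ and apex pieces is $\Pi$ of $\{0<\rho_f<\rho_E,\ 0<\tau<\tau_D(\rho_f)\}$. Since $\tau_P=\tau_D$ on $(0,\rho_E]$, this set lies in $\Sigma_P$, so by Lemma~\ref{lem:region}(b) its image lies in $\Pset_+^\circ$, and hence is disjoint from $\Tset_+$. The axis segment below $E$, the $\UP$ segments with $\rho<\rho_E$ and the apex belong to $\Pset$ by definition, so $\Omega_0\cap\Tset=\emptyset$.

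For relative openness I use Lemma~\ref{lem:region}(c). Every point of $\UP_+$, of the apex and of the axis below $E$ has a neighbourhood in $K\cap\{x\ge0\}$ covered by the chart. Because the chart extends across these edges, and the labels $\rho_f$ near such a point are close to its own label, which is $<\rho_E$, a small neighbourhood stays in $\Omega_0$. On the axis I take the union with the mirror chart. The heights of the axis contacts increase with $\rho_f$ by \PI{Proposition~6(c)}, so the contacts with label $<\rho_E$ are exactly $\{0\}\times(0,y_E)$. The boundary is then $\Pi$ of the edge $\rho_f=\rho_E$, $0\le\tau\le\tau_*$, together with its mirror image; the other edges map into $\Omega_0$ itself or are its limit points in the apex. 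This gives $\partial_K\Omega_0=\Gamma$, where I use the injectivity of $\Pi$ on the chart (Lemma~\ref{lem:prim}).

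For (b), I first show $\Gamma\subset A_\varepsilon$. We have $\rho_f=\rho_E\le\rho_B-\eta$ by the choice of $\eta$, and $W_P\le\Wt$ on the $\rho_E$-primary because $F(\rho_E,\tau)\ge0$ for $\tau\le\tau_*$ (from $F_\tau<0$ and $F(\rho_E,\tau_*)=0$). The inequality is therefore strict, with slack, off $E$. For the $\delta$-neighbourhood, the main obstacle is the point $E$ and the edge near $\UP$, where the slack $\Wt-W_P$ tends to zero. I will use that $W_P-\Wt$ is continuous on the compact closure of $\Omega_0$ near $\Gamma$, and that it is $\le0$ on $\Gamma$. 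Then $\{W_P-\Wt<\varepsilon\}$ is a relatively open set containing the compact $\Gamma$, so it contains a uniform $\delta$-neighbourhood, and $\rho_f<\rho_E$ holds automatically in $\Omega_0$. Near the apex, $\Gamma$ meets $\UP_+$ at $\rho=\rho_E$, away from the apex, so compactness is not spoiled. The same argument applies on the mirror half.

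For (c), let the path start outside $\Omega_0$ and lie at time $t_1$ in $\Omega_0$ at distance $\ge\delta$ from $\Gamma$. Let $t_0=\sup\{t<t_1:\x(t)\notin\Omega_0\}$. By (a) and continuity, $\x(t_0)\in\partial_K\Omega_0=\Gamma$, and $\x(t)\in\Omega_0$ on $(t_0,t_1]$. Since $\operatorname{dist}(\x(t),\Gamma)$ is $M$-Lipschitz in $t$, the path remains within distance $<\delta$ of $\Gamma$ on $[t_0,t_0+\delta/M)$, and this interval lies before $t_1$. On that interval the path lies in $\Omega_0\cup\Gamma$ within $\delta$ of $\Gamma$, and hence in $A_\varepsilon$ by (b).
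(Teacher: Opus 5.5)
Your proof is correct and follows essentially the paper's own argument: the chart-edge analysis for (a), continuity of $\rho_f$, $W_P$ and $\Wt$ on a compact neighbourhood of $\Gamma$ away from the apex for (b), and the last time before $t_1$ at which the path is outside $\Omega_0$ for (c). One slip does not affect the argument: the slack does not tend to zero at the end of $\Gamma_E^+$ on $\UP_+$, since there $W_P=0$ and $\Wt-W_P=F(\rho_E,0)>0$, so $E$ is the only point of $\Gamma$ with zero slack.
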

\begin{proof}
(a) By Lemma~\ref{lem:prim}, $\Omega_0\cap\{x\ge0\}$ is the image of the open parameter set under an injective local diffeomorphism, whose edges are $\UP_+\cap\{\rho<\rho_E\}$ ($\tau=0$), the apex ($\rho_f\to0$), $\Gamma_E^+$ ($\rho_f\to\rho_E$) and the axis segment below $E$ ($\tau\to\tau_D$). The edges other than $\Gamma_E^+$ belong to $\Omega_0$ and have relative neighbourhoods in $K$ contained in $\Omega_0$ (on the axis after gluing the mirror image; on $\UP$ and at the apex because $K$ ends there). (b) On $\Gamma_E^+$, $\rho_f=\rho_E<\rho_B-\eta$ and $F(\rho_E,\tau)\ge0$ for $\tau\le\tau_*$, i.e.\ $W_P\le\Wt$. The functions $\rho_f$, $W_P$ and $\Wt$ are uniformly continuous on a compact neighbourhood of $\Gamma_E^+$ in $\overline{\Cset}_+$ on which $\rho_f$ stays below $\rho_B-\eta$, which gives $\delta$; the left half is symmetric. (c) Let $t_2$ be the first time with $\x\in\Omega_0$ and $\operatorname{dist}(\x,\Gamma)\ge\delta$, and $t_1=\sup\{t<t_2:\x(t)\notin\Omega_0\}$. By (a), $\x(t_1)\in\Gamma$, and on $(t_1,t_2)$ the path is in $\Omega_0$ within $\delta$ of $\Gamma$, hence in $A_\varepsilon$, and covers a distance $\ge\delta$.
\end{proof}

By construction,
\begin{equation}\label{eq:Pdecomp}
\Pset\setminus A_\varepsilon\subset\Sigma^\pm_\eta\cup\Omega_0 .
\end{equation}
Let $\x_0\in\Tset$. (1) Before the switch, $\Wt(\x(t))\le\Wt(\x_0)-t$ by~\eqref{eq:WTdot}, which holds on all of $\{y>0\}$; $\mathbf u_T$ is Lipschitz, so the closed loop is well posed. (2) The switch occurs by $t^*=\Wt(\x_0)-W_E$ unless the state has left $K$. Indeed, $\Wt(\x(t^*))\le W_E$, so $\x(t^*)\notin\Tset\cup\Sigma^\pm_\eta$ by Lemma~\ref{lem:cover} and the choice of $\eta$. As $E\in A_\varepsilon$, \eqref{eq:Pdecomp} leaves only $\x(t^*)\in\Omega_0\setminus A_\varepsilon$, at distance $\ge\delta$ from $\Gamma$ by Lemma~\ref{lem:sep}(b); since $\x_0\notin\Omega_0$, Lemma~\ref{lem:sep}(c) shows that the path met $A_\varepsilon$ earlier. (3) After the switch at $t_1$, $\x(t_1)\in A_\varepsilon\subset\overline\Cset_\pm$, and Lemma~\ref{lem:prifb} gives $t_f\le t_1+W_P(\x(t_1))\le t_1+\Wt(\x(t_1))+\varepsilon\le\Wt(\x_0)+\varepsilon$. Exits through the $\NUP$ before the switch only shorten $t_f$.

The relay is nonanticipative by Lemma~1 of Part~I. The switching evader is nonanticipative because the closed loop has a unique solution for each pursuer control and the switching time is the hitting time of a closed set by that solution. As in Corollary~1 of Part~I, this gives $V_E\le W\le V_P$. \qed

\begin{remark}[The boundary of F1]\label{rem:equality}
On $\nu\sin\theta_c=1$ the $\BUP$ characteristic reaches the axis exactly at its cusp, $I=P_c$, and $q(\tau_I)=0$. The proofs above never use $q(\tau_I)>0$: they use $q>0$ on $[0,\tau_I)$, on the chart, whose retro-times stay below $\tau_D(\rho_f)<\tau_I$, and on the barrier arc, which ends at $\tau_Z<\tau_I$. Theorems~\ref{thm:feedback} and~\ref{thm:value} therefore hold on the closed region~\eqref{eq:F1}. At $I$ the value is the smooth tributary value, so the cusp of the $\BUP$ characteristic leaves no trace in $W$.
\end{remark}

\section{Regime F2}\label{sec:F2}
In regime F2, $\nu\sin\theta_c>1$, the $\BUP$ characteristic ends at its cusp $P_c$ inside the open right half of $K$ (Proposition~\ref{prop:barfast}(iii)), and $q$ changes sign before some primaries reach the axis. Primaries launched sufficiently close to $B_+$ then bend back toward the ray before reaching the front axis, so that distinct arcs of the family overlap (Fig.~\ref{fig:fast}). The Jacobian $\rho_B-\rho_f$ does not vanish, so this is not a focal phenomenon; the overlap comes from arcs that turn back after their minimal bearing, and we call the part of such an arc beyond its minimal bearing its \emph{folded} part.

\subsection{Where the jump ends}\label{sec:F2split}
The geometric classification of Proposition~\ref{prop:barfast}(iii) does not decide which branch carries the value near the cusp. Let $F(\rho_B,\tau)=\Wt(\Bbar_+(\tau))-\tau$ as in Section~\ref{sec:F1}. The formula for $F_\tau$ in Proposition~\ref{prop:F1}(ii) holds along every primary, and on $[0,\tau_c]$ we have $\alpha\le\theta\le\theta_c<\pi/2$ and $0\le\sigma<\pi/2$, so $F(\rho_B,\cdot)$ decreases strictly on $[0,\tau_c]$; the proof of $F(\rho_B,0)>0$ given there uses only Propositions~\ref{prop:E} and~\ref{prop:trib}. Hence exactly one of the following holds, apart from the equality case $\Wt(P_c)=\tau_c$, which is treated with F2b in Section~\ref{sec:F2b}.
\begin{itemize}
\item \emph{F2a}: $\Wt(P_c)<\tau_c$. Then $F(\rho_B,\cdot)$ has a unique root $\tau_Z\in(0,\tau_c)$, and $\Wt<\tau$ on $\Bbar_+(\tau_Z,\tau_c]$.
\item \emph{F2b}: $\Wt(P_c)>\tau_c$. Then $\Wt>\tau$ on $\Bbar_+[0,\tau_c]$.
\end{itemize}
The two cases are separated by a single curve.

\begin{proposition}[The second threshold]\label{prop:alphaZ}
For every $\nu>1$, $\Delta(\alpha)=\Wt(P_c)-\tau_c$ increases strictly on $(\alpha_F(\nu),\pi/2)$, from a negative value to the positive limit $\bigl(\sqrt{\nu^2-1}-\arccos(1/\nu)\bigr)/(\nu-1)$. Hence it has exactly one root $\alpha_Z(\nu)$, F2a is $\alpha_F(\nu)<\alpha<\alpha_Z(\nu)$, and F2b is $\alpha_Z(\nu)<\alpha<\pi/2$.
\end{proposition}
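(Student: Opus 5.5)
The plan is to reduce $\Delta$ to an explicit function of $\alpha$, using the closed-form inversion~\eqref{eq:tribinv} at the cusp. At $P_c$ we have $|x|+1=\nu\sin\theta_c$ and $y=\nu\cos\theta_c$. Hence $R=\sqrt{y^2+(x+1)^2-1}=\sqrt{\nu^2-1}$, independently of $\alpha$. The identity $(x+1)\cos\sigma-y\sin\sigma=1$ becomes $\nu\sin(\theta_c-\sigma)=1$. In F2 we have $x(P_c)>0$, i.e.\ $\theta_c>\arcsin(1/\nu)$, and $\theta_c<\pi/2$. Choosing the branch $0<\sigma<\pi/2$ this gives $\sigma=\theta_c-\arcsin(1/\nu)$. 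Substituting into~\eqref{eq:WT} and using $\theta_c=\alpha+\tau_c$, $\nu\tau_c=\cos\alpha$, I expect
\[
(\nu-1)\Delta(\alpha)=h(\alpha)+\sqrt{\nu^2-1}+\arcsin\frac1\nu-\alpha-\cos\alpha,\qquad h(\alpha)=(\nu-1)W_E-y_E .
\]

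\emph{End values.} At $\alpha=\alpha_F(\nu)$ we have $\nu\sin\theta_c=1$, so $P_c=I$ lies on the axis, by Remark~\ref{rem:equality} and Proposition~\ref{prop:barfast}(iii). Proposition~\ref{prop:F1}(i), which holds on the closed region~\eqref{eq:F1}, then gives $\Wt(P_c)=W_U(y_I)<\tau_I=\tau_c$, so $\Delta<0$. By continuity of all the quantities involved, the value just to the right of $\alpha_F$ is still negative. As $\alpha\uparrow\pi/2$, $f$ in~\eqref{eq:fE} has $f(0)=\nu\cos\alpha\sin\alpha/(1+\sin\alpha)\to0$ while $f'$ stays bounded away from $0$, so $\tau_*\to0$. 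Then $W_E\to0$ and $y_E=\nu C_*/(1+S_*)\to0$, and the formula tends to $\bigl(\sqrt{\nu^2-1}+\arcsin(1/\nu)-\pi/2\bigr)/(\nu-1)$. This is the stated positive limit $\bigl(\sqrt{\nu^2-1}-\arccos(1/\nu)\bigr)/(\nu-1)$, and it is positive because $\tan\beta>\beta$ for $\beta=\arccos(1/\nu)$.

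\emph{Monotonicity.} This is the main obstacle. Since $\frac{d}{d\alpha}(-\alpha-\cos\alpha)=-(1-\sin\alpha)$, I need
\[
h'(\alpha)>1-\sin\alpha\qquad\text{on }(\alpha_F,\pi/2).
\]
I would parametrize by $\theta_*=\alpha+\tau_*$. From $y_E=\nu C_*/(1+S_*)$ one gets $dy_E/d\theta_*=-\nu/(1+S_*)$, so
\[
h'=(\nu-1)\tau_*'+\frac{\nu}{1+S_*}\,(1+\tau_*').
\]
Implicit differentiation of $f(\tau_*;\alpha)=0$ gives $\tau_*'=-f_\alpha/f_\tau$. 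Here $f_\tau=S_*\bigl[1-\nu(S_*+2)/(S_*+1)\bigr]<0$ is known from Proposition~\ref{prop:E}, and $f_\alpha=-\sin\alpha+f_\tau/S_*$ once the $\theta$-dependence is separated. This yields $1+\tau_*'=\sin\alpha/f_\tau\cdot(\ldots)$, and I would try to show $h'-(1-\sin\alpha)>0$ by clearing the positive denominator $-f_\tau(1+S_*)$ and using $f(\tau_*)=0$ to eliminate $\nu\tau_*$. A natural alternative, if that algebra is opaque, is to differentiate $\Delta=F(\rho_B,\tau_c)$ geometrically. The cusp moves along the circle of radius $\nu$, and the envelope-type formulas in the proof of Proposition~\ref{prop:F1}(ii),(v) give $\partial_\alpha$ of $\Wt$ and of $\tau_c$. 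The variation of $W_E$ is handled by the fact that $E$ sits where $\lambda_y=\Lambda_U$, so the $y_E$-variation of $W_U$ cancels to first order.

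Once strict monotonicity and the two signs are established, the intermediate value theorem gives the unique root $\alpha_Z(\nu)$, and the description of F2a and F2b follows from the dichotomy stated before the proposition.
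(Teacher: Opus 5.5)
\textbf{Gap: the monotonicity step is not established.} Your reduction $(\nu-1)\Delta=h(\alpha)+\sqrt{\nu^2-1}+\arcsin(1/\nu)-\alpha-\cos\alpha$ and both end-point arguments match the paper's proof. The strict monotonicity, however, is the real content of the proposition, and you only describe a plan for it. The one concrete formula you give for it is wrong.

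Write $f(\tau;\alpha)=\cos\alpha-\nu\tau+g(\alpha+\tau)$ with $g(\theta)=\nu\cos\theta\sin\theta/(1+\sin\theta)-\cos\theta$. Then $f_\alpha=-\sin\alpha+g'$ and $f_\tau=-\nu+g'$, so $f_\alpha=f_\tau+\nu-\sin\alpha$, not $-\sin\alpha+f_\tau/S_*$. With your formula you get $1+\tau_*'=1-1/S_*+\sin\alpha/f_\tau<0$, so $\theta_*=\alpha+\tau_*$ would decrease with $\alpha$. Your target quantity $h'-(1-\sin\alpha)=K_*(1+\tau_*')-(\nu-\sin\alpha)$ (see below) would then be negative. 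Carried out as written, the plan gives the opposite sign.

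\textbf{How to close it.} The correct $f_\alpha$ finishes the proof in two lines, along the paper's route.
\begin{itemize}
\item By the formula for $f'$ in Proposition~\ref{prop:E}, $-f_\tau=S_*K_*$ with $K_*=(\nu-1)+\nu/(1+S_*)$. Hence $1+\tau_*'=(\nu-\sin\alpha)/(-f_\tau)=(\nu-\sin\alpha)/(S_*K_*)$.
\item Your own expression $h'=(\nu-1)\tau_*'+\nu(1+\tau_*')/(1+S_*)$ equals $K_*(1+\tau_*')-(\nu-1)$, so the factor $K_*$ cancels.
\item This gives $h'-(1-\sin\alpha)=(\nu-\sin\alpha)(1-S_*)/S_*>0$, since $0<S_*<1$. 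That is the paper's $\Delta'(\alpha)=(\nu-\sin\alpha)(1-S_*)/((\nu-1)S_*)$.
\end{itemize}
You do not need to eliminate $\nu\tau_*$ using $f(\tau_*)=0$.

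\textbf{Your envelope alternative.} It also works once made precise. We have $-h(\alpha)=y_D(\tau_*;\alpha)-(\nu-1)\tau_*$, where $y_D=(C-\cos\alpha+\nu\tau)/S$ is the axis-contact height. At $\tau_*$, $\partial_\tau y_D=g/S=\nu-1$, which is exactly $\lambda_y=\Lambda_U$. So $h'=-\partial_\alpha y_D(\tau_*;\alpha)=(\nu-\sin\alpha)/S_*-(\nu-1)$, using $C_*y_E=\nu(1-S_*)$. This gives the same result without differentiating $f$ at all.
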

\begin{proof}
At $P_c$, $q=0$, so the identities $\nu\sin\delta+q\cos\delta=1$ and $\nu\cos\delta-q\sin\delta=R$ of Proposition~\ref{prop:F1}(v), $\delta=\theta-\sigma$, give $R_c=\sqrt{\nu^2-1}$ and $\sigma_c=\theta_c-\beta$ with $\beta=\arcsin(1/\nu)$. By~\eqref{eq:WT}, $\Delta=\tau_*+(R_c-y_E-\sigma_c)/(\nu-1)-\tau_c$. Write $\theta_*=\alpha+\tau_*$, $S_*=\sin\theta_*$, $C_*=\cos\theta_*$. Equation~\eqref{eq:fE} reads $\cos\alpha+\nu\alpha-\nu\theta_*+\nu C_*S_*/(1+S_*)-C_*=0$; its derivative in $\theta_*$ is $-S_*K_*$ with $K_*=(\nu-1)+\nu/(1+S_*)$, and its derivative in $\alpha$ is $\nu-\sin\alpha$, so $d\theta_*/d\alpha=(\nu-\sin\alpha)/(S_*K_*)$, while $dy_E/d\alpha=-\nu/(1+S_*)\,d\theta_*/d\alpha$. With $\tau_c=\cos\alpha/\nu$ and $\sigma_c=\alpha+\tau_c-\beta$ this gives
\[
\Delta'(\alpha)=\frac{(\nu-\sin\alpha)(1-S_*)}{(\nu-1)S_*}>0 .
\]
At $\alpha_F$, $\Delta<0$ by Proposition~\ref{prop:F1}(i). As $\alpha\uparrow\pi/2$, $\tau_*$, $y_E$ and $\tau_c$ tend to $0$, which gives the limit; it is positive because $\sqrt{\nu^2-1}=\tan\gamma>\gamma=\arccos(1/\nu)$.
\end{proof}

For example, $\alpha_Z=40.3^\circ$, $26.8^\circ$, $20.6^\circ$, $14.7^\circ$, $7.8^\circ$ and $4.0^\circ$ for $\nu=1.01$, $1.05$, $1.1$, $1.2$, $1.5$ and $2$ (Fig.~\ref{fig:regimes}). Since $\Delta$ is continuous and negative at the boundary of F1, F2a is a band of F2 next to F1. In F2a the tributary through $Z_+=\Bbar_+(\tau_Z)$ is genuine, $s(Z_+)>y_E$ (Section~\ref{sec:F2a}, part (c) of the proof of Theorem~\ref{thm:F2a}). For example, $(\nu,\alpha)=(1.1,16^\circ)$ lies in F2, with $\nu\sin\theta_c=1.0054$, but $\Wt(P_c)=0.68401<\tau_c=0.87387$, with $s(P_c)-y_E=0.0129$; there $\tau_Z=0.73652$ and $Z_+=(0.01450,0.45125)$.

In F2a, therefore, the jump across $\Bbar_\pm$ ends at $Z_\pm$ before the cusp, as in F1. By Lemma~\ref{lem:barG} with $q(\tau_c)=0$, $G(\tau_c+\delta;P_c)=\nu(\delta-\sin\delta)$, which vanishes only at $\delta=0$, so every primary through a point near $P_c$ has retro-time near $\tau_c$, above $\Wt$ there; the tributary carries $W$ near $P_c$, and the cusp leaves no trace in it, as on the boundary of F1 (Remark~\ref{rem:equality}). Section~\ref{sec:F2a} shows that in F2a the synthesis of Section~\ref{sec:F1} is again the value of the game. Section~\ref{sec:F2b} concerns F2b, which contains all F2 examples used below: $\Wt(P_c)-\tau_c=0.468$, $0.549$ and $0.414$ for $(1.5,30^\circ)$, $(1.5,60^\circ)$ and $(2,15^\circ)$.

Table~\ref{tab:regimes} summarizes the regimes for $\nu>1$ and where each is solved.

\begin{table}[!htbp]
\centering\small
\caption{Regimes of the faster-evader game, $\nu>1$, $0<\alpha<\pi/2$. In every regime the pursuer's optimal strategy is the relay $(v_p,\omega_p)=(-1,-\sgn x)$, and each player can guarantee $W$ to arbitrary precision by sampled state feedback.}
\label{tab:regimes}
\renewcommand{\arraystretch}{1.15}\setlength{\tabcolsep}{4pt}
\begin{tabularx}{\linewidth}{@{}l>{\raggedright\arraybackslash}X>{\raggedright\arraybackslash}X>{\raggedright\arraybackslash}X l@{}}
\toprule
Regime & Condition & Jump across the $\BUP$ characteristic & Evader's strategy & Result\\
\midrule
F1 & $\nu\sin\theta_c\le1$ ($\alpha\le\alpha_F$) & ends at $Z_\pm$, before the axis contact $I$ & tributary feedback, one switch to the primary feedback & Thms.~\ref{thm:feedback}, \ref{thm:value}\\
F2a & $\nu\sin\theta_c>1$, $\Wt(P_c)<\tau_c$ ($\alpha_F<\alpha<\alpha_Z$) & ends at $Z_\pm$, before the cusp & as in F1 & Thm.~\ref{thm:F2a}\\
F2b & $\Wt(P_c)>\tau_c$ ($\alpha>\alpha_Z$) & runs to the cusps $P_{c\pm}$; folded primaries on the far side near them & committed primary heading, or tributary feedback then a committed core heading & Thm.~\ref{thm:F2b}\\
$\alpha=\alpha_Z$ & $\Wt(P_c)=\tau_c$ & runs to the cusps; tributary on the far side near them & as in F2b & Thm.~\ref{thm:F2b}\\
\bottomrule
\end{tabularx}
\end{table}

\subsection{The value in regime F2a}\label{sec:F2a}
In F2a we keep the definitions of Section~\ref{sec:F1} with $\tau_I$ replaced by $\tau_c$: for $0<\rho_f<\rho_B$ let $\tau_m(\rho_f)=\min(\tau_D(\rho_f),\tau_c)$, where $\tau_D=\infty$ if the primary does not reach the axis on $[0,\tau_c]$; the chart $\Cset_+$ is the image of $\{0<\rho_f<\rho_B,\ 0<\tau<\tau_m(\rho_f)\}$; and $\Pset_\pm$, $\Tset_\pm$, $\Dpt^\pm$, $Z_\pm$, $\Lambda_\pm$, $W$ and $A_\varepsilon$ are defined as in Section~\ref{sec:F1}.

\begin{theorem}[Value in regime F2a]\label{thm:F2a}
In regime F2a the conclusions of Theorems~\ref{thm:feedback} and~\ref{thm:value} hold: the relay guarantees $t_f\ge W$, the switching evader guarantees $t_f\le W+\varepsilon$, and $V_E=V_P=W$ at every state of $K$, the barrier included, each player being able to guarantee $W$ to arbitrary precision by sampled state feedback.
\end{theorem}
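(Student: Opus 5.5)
The plan is to rerun the F1 proof with $\tau_I$ replaced by $\tau_c$ throughout. At every point where that proof used $q>0$ on $[0,\tau_I)$, I will show that what was really needed is a margin that holds on retro-times below $\tau_c$, or below $\tau_Z<\tau_c$ on the barrier arc. In F2 the sign condition $\nu S<1$ may fail near the end of the chart. So the F1 argument cannot be copied verbatim in Lemma~\ref{lem:prifb}, and the work is to replace that one ingredient.

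\emph{(a) Chart and junction.} Every chart arc has retro-time $\tau<\tau_m(\rho_f)\le\tau_c$, so $q>0$ on it, and steps (a)--(d) of Proposition~6 of Part~I apply on the chart. They give injectivity, the root characterization of $W_P$ on $(0,\tau_c)$ and $x_\tau<0$. The formula of Proposition~\ref{prop:F1}(ii) gives $F_\tau<0$ on all arcs with $\theta\le\theta_c<\pi/2$. The F2a condition then yields a unique root $\tau_Z\in(0,\tau_c)$ on $\Bbar_+$. For $\rho_E<\rho_f<\rho_B$, $F$ must be negative at the end of the arc: at the axis contact this follows as in Proposition~\ref{prop:F1}(iv); if the arc instead ends at $\tau_c$ without axis contact, I compare with $\Bbar_+$ by continuity in $\rho_f$ together with monotonicity of $F$, using $F(\rho_B,\tau_c)<0$. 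This produces $\Dpt^\pm$ from $E$ to $Z_\pm$.

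\emph{(b) Admissibility of the tributary sheet.} Along $\Dpt^+$ the identity $ds/d\rho_f=(\nu-1)/R>0$ is $\nu$-free. Along $\Bbar_+$, \eqref{eq:sbar} shows that $s$ decreases on $[0,\tau_c]$, so $s\ge s(P_c)$. The key point is then $s(P_c)>y_E$. Since $\Wt(P_c)=W_E+(s(P_c)-y_E)/(\nu-1)+\sigma_c$, and $\Wt(P_c)<\tau_c$ in F2a, this bound does not follow directly. I will instead derive it from the fact that $\Dpt^+$ ends at $Z_+$ with $s(Z_+)>y_E$, combined with the decrease of $s$ beyond $Z_+$. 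That alone still leaves the arc $(\tau_Z,\tau_c]$. To cover it, I plan to show $s(P_c)-y_E>0$ on all of F2a by a monotonicity-in-$\alpha$ computation like Proposition~\ref{prop:alphaZ}, starting from the boundary of F1, where $s(I)=y_I>y_E$. Once this holds, the covering Lemma~\ref{lem:cover} goes through. Its minimum argument now places the boundary of $\Tset_+$ on $\Dpt^+$, $\Lambda_+$, the $\NUP$ and the axis, plus the arc of $\Bbar_+$ from $Z_+$ to $P_c$ and the primary through $P_c$ closing the chart; all of these have $s>y_E$.

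\emph{(c) Verification.} The pursuer part (Lemmas~\ref{lem:inv}, \ref{lem:mono}) uses only regularity of $\Bbar_+$ on $[0,\tau_Z]$ ($q>0$), the representation \eqref{eq:minrep}, and $\lambda_y\le1/(\nu-1)$ below $E$. All of these persist. The main obstacle is the evader's primary feedback, Lemma~\ref{lem:prifb}, where $\nu S<1$ gave $\dot\rho_f\le0$ and no return to the axis. Instead of the sign condition, I will use the margins. The evader only needs the primary feedback from states in $A_\varepsilon$, which have $\rho_f\le\rho_B-\eta$ and retro-time $\le\tau_J<\tau_Z$, together with states in $\Pset$. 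From these states I will show that $W_P$ decreases at unit rate by \eqref{eq:primary} and $H^*=0$. I will also show that the trajectory cannot reach the edge $\tau=\tau_c$ of the chart, because $W_P$ decreases, nor the barrier, because $\rho_f$ is bounded away from $\rho_B$ on the relevant region by continuity up to $\tau_Z$. If $\rho_f$ increases, it can only move toward arcs that still end before $\tau_c$. Near the axis, $\dot x\ge\nu C-y>0$ holds at contacts below the height where $q>0$, which is exactly the retro-time margin. With this replacement, the separation Lemma~\ref{lem:sep} and the sampled-feedback pairing of Appendix~\ref{app:value} carry over unchanged and give $V_E=V_P=W$.
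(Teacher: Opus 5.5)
Your architecture is the paper's: rerun Section~\ref{sec:F1} with $\tau_I$ replaced by $\tau_c$ and repair the places where \eqref{eq:F1} was used. However, the one genuinely new ingredient of F2a is missing.

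In Lemma~\ref{lem:prifb}, the loss of $\nu S<1$ destroys $\dot\rho_f\le0$. Your replacement is that $\rho_f$ is bounded away from $\rho_B$ ``on the relevant region by continuity up to $\tau_Z$''. That is a static statement about starting states. It says nothing about the trajectory, along which $\rho_f$ can now increase, so nothing stops the state from drifting to the $\BUP$ curve, where $g\to0$ and the chart degenerates.

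What is needed is a dynamic margin at the barrier. Fix $\bar\tau\in(\tau_Z,\tau_c)$. For small $\varepsilon$ the starting retro-time is at most $\bar\tau$, and $W_P=\tau$ decreases, so $q\ge\cos\alpha-\nu\bar\tau>0$ along the trajectory. In $\dot\rho_f=-(C+qS/g)\Delta_v+[gC-q(\nu-S)]\Delta_\omega/g$ both coefficients are then negative once $g<g_0=(\nu-1)(\cos\alpha-\nu\bar\tau)$. Hence $\rho_f\le\max(\rho_f(\x_0),\rho_B-g_0)<\rho_B$, and the first-exit argument runs on a compact part of the chart.

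The same repair is needed in the sampled Lemma~\ref{lem:unif}, which you say carries over unchanged. It uses $\nu S<1$ for the axis margin, and $\dot\rho_f\le0$ of the exact feedback on the whole strip. In F2a you must:
\begin{enumerate}
\item truncate the strip at some $\hat\tau<\tau_c$;
\item take the axis margin from $\nu C-y=(q-C(1-\nu S))/S$, which is positive for $\nu S>1$ because $q>0$;
\item use $\dot\rho_f\le C_0h$ only on the band $\max(\bar\rho,\rho_B-\hat g_0)\le\rho_f\le\hat\rho$ next to the barrier.
\end{enumerate}
The last step suffices because crossing that band takes longer than the horizon once $h$ is small.

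There are three smaller points.

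First, in (b) you misplace the boundary of $\Tset_+$. Because $\tau_J$ increases from $\tau_*$ to $\tau_Z$, $\Pset_+$ never reaches the arc $\Bbar_+(\tau_Z,\tau_c]$. Nor does it reach the edge $\tau=\tau_c$ of the chart, which is a segment from $P_c$ toward the axis, not a primary. Both lie inside $\Tset_+$, so the minimum argument of Lemma~\ref{lem:cover}, using $\nabla s\neq0$ in the open quadrant, already handles them. Thus $s(P_c)>y_E$ is a consequence of the covering, not an input, and your monotonicity-in-$\alpha$ detour is unnecessary. The bound you already have, $s\ge s(Z_+)>y_E$ on $\Lambda_+$, is all that is used.

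Second, in (a) you need $\tau_*<\tau_c$, so that $E$ ends the $\rho_E$-arc inside the truncated chart. It follows by contradiction. Otherwise $F(\rho_E,\tau_c)\ge0$, and $\partial F/\partial\rho_f=\cos(\theta_c-\sigma)/(\nu-1)>0$ along the segment $\x(\cdot,\tau_c)$ would give $F(\rho_B,\tau_c)>0$, contrary to F2a.

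Third, injectivity of the truncated chart should be proved directly rather than by citing Part~I's Proposition~6. That proposition describes the chart as bounded by the $\BUP$ arc up to its axis contact, which no longer applies here. The direct argument is that each arc is a graph over $x$, with $\partial y/\partial\rho_f=g/(gC+qS)>0$ at fixed $x$.
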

\begin{proof}
Section~\ref{sec:F1} and Appendix~\ref{app:value} use~\eqref{eq:F1} only in four places: the chart (Lemma~\ref{lem:prim}), the junction (Proposition~\ref{prop:F1}(i), (iii), (iv)), the barrier part of the covering (Proposition~\ref{prop:F1}(v) and Lemma~\ref{lem:cover}), and the sign $\nu S<1$ in the primary feedback (Lemmas~\ref{lem:prifb} and~\ref{lem:unif}). The other arguments use only $q>0$ on the retro-times involved, the regularity and semipermeability of $\Bbar_+$ near $\Lambda_+$ (Proposition~\ref{prop:barfast}), Propositions~\ref{prop:E} and~\ref{prop:trib}, and the geometry near $E$, and they go through verbatim with $\tau_I$ replaced by $\tau_c$. We replace the four ingredients; throughout, $q>0$ on $[0,\tau_c)$ and $\theta\le\theta_c<\pi/2$.

\emph{(a) Junction.} The formula for $F_\tau$ in Proposition~\ref{prop:F1}(ii) gives $F_\tau<0$ on the arcs of $[0,\tau_c]$ before their axis contact, for every $\rho_f$, and $\tau_Z\in(0,\tau_c)$ is the root of Section~\ref{sec:F2split}. First, $\tau_*<\tau_c$. Otherwise the $\rho_E$-primary is still in $\{x\ge0\}$ at $\tau_c$, and $F(\rho_E,\tau_c)\ge F(\rho_E,\tau_*)=0$; the points $\x(\rho_f,\tau_c)=P_c-(\rho_B-\rho_f)(\sin\theta_c,\cos\theta_c)^\top$, $\rho_E\le\rho_f\le\rho_B$, form a segment in $\{x\ge0,y>0\}$ along which $\partial F/\partial\rho_f=\cos(\theta_c-\sigma)/(\nu-1)>0$ by~\eqref{eq:gradT}, so $F(\rho_B,\tau_c)>0$, contrary to F2a. Let $\rho_E<\rho_f<\rho_B$; $F(\rho_f,0)>0$ as in Proposition~\ref{prop:F1}. If $\tau_D(\rho_f)\le\tau_c$, the axis contact lies above $E$ and $F<0$ there, by the integration in Proposition~\ref{prop:F1}(i): the contacts at retro-times $\tau\in[\tau_*,\tau_c]$ have $g=\nu-S-y_DC>0$, since $g=0$ would place $\Bbar_+$ on the axis before its cusp, contrary to Proposition~\ref{prop:barfast}(iii), and $\lambda_y>1/(\nu-1)$ because $f<0$ there. If $\tau_D(\rho_f)>\tau_c$, then $\x(\rho_f,\tau_c)=P_c-(\rho_B-\rho_f)(\sin\theta_c,\cos\theta_c)^\top$ lies on the segment from $P_c$ toward the axis point $(0,\cot\theta_c)$, along which $\Wt$ decreases at the rate $\cos(\theta_c-\sigma)/(\nu-1)>0$ away from $P_c$, by~\eqref{eq:gradT}; hence $F(\rho_f,\tau_c)<F(\rho_B,\tau_c)<0$. In both cases $F(\rho_f,\cdot)$ has a unique root $\tau_J(\rho_f)\in(0,\tau_m(\rho_f))$, and the rest of Proposition~\ref{prop:F1}(iv) and the first part of (v) follow as before. By the formula for $d\tau_J/d\rho_f$ in the proof of (v), $\tau_J$ increases from $\tau_*$ to $\tau_Z$, so $\tau<\tau_Z$ on $\Pset_+$.

\emph{(b) Chart.} On the parameter set of $\Cset_+$, $g>0$ and $q>0$, so $x_\tau=-gC-qS<0$ and each arc is a graph over $x$; at fixed $x$, $\partial y/\partial\rho_f=g/(gC+qS)>0$. For $x>0$, the labels whose arcs pass over $x$ form the interval $x/\sin\alpha<\rho_f<r(x)$, where $r(x)=\min(\rho_B,\bar r(x))$ and $\bar r(x)$ solves $x(\bar r,\tau_c)=x$, because $x(\rho_f,\tau_c)$ increases with $\rho_f$ and the arc covers $x$ exactly when $x(\rho_f,\tau_m)<x<\rho_f\sin\alpha$. The arcs are therefore strictly ordered at each $x$, so the primary map is injective on the parameter set, and, with Jacobian $g>0$, a diffeomorphism onto $\Cset_+$. Hence $W_P$ is smooth on $\Cset_+$ with $\nabla W_P=(-C,S)^\top/g$, the same ordering shows that the lens-side trace on $\Bbar_+$ is $\tau$, and Lemma~\ref{lem:region} and its proof hold with $\tau_D$ replaced by $\tau_m$. The chart extends across the axis and the $\UP$ as before.

\emph{(c) Covering.} On $\Dpt^+$, $s>y_E$ by (a) and Proposition~\ref{prop:F1}(v), and $s(Z_+)>y_E$ as the limit of the increasing function $s$ along $\Dpt^+$. By~\eqref{eq:sbar}, $s$ decreases along $\Bbar_+$ on $[0,\tau_Z]$, so $s>y_E$ on $\Lambda_+$, and in particular at $B_+$. The argument on the $\NUP$, \eqref{eq:NUPid}, does not use~\eqref{eq:F1}. Hence Lemma~\ref{lem:cover} holds, with its last bound replaced by $\Wt>W_E$ on $\Lambda_+$, which is all that the choice of $\Sigma_\eta$ uses.

\emph{(d) Primary feedback.} In F2a the sign $\nu S<1$ may fail on $\Pset_+$ near $Z_\pm$, and with it the monotonicity of $\rho_f$: the coefficient $(C-q)(\nu-1/S)-xC/S$ of $\Delta_\omega$ in $\dot\rho_f$ is then positive near the axis. Lemma~\ref{lem:prifb} uses the monotonicity only to keep the state in the chart, away from the axis and from the $\BUP$ curve, and two facts valid at every retro-time $\tau<\tau_c$ do this instead. Fix $\bar\tau\in(\tau_Z,\tau_c)$; for $\varepsilon$ small, $\tau\le\bar\tau$ on $\Pset_+$ and on $A_\varepsilon$, because $\tau_J(\rho_B-\eta)<\tau_Z$ and $F_\tau$ is bounded away from zero on compact sets. Under the primary feedback $\tau=W_P$ decreases, so $\tau\le\bar\tau$ along the trajectory.
\begin{itemize}
\item \emph{Axis.} At an axis contact with retro-time $\tau<\tau_c$,
\begin{equation}\label{eq:axismargin}
\nu C-y=\frac{g+S(1-\nu S)}{C}=\frac{q-C(1-\nu S)}{S}>0,
\end{equation}
by the first form if $\nu S\le1$, since $g>0$, and by the second if $\nu S>1$, since $q>0$. So $\dot x\ge\nu C-y>0$ there against every pursuer control, and the state enters $\{x>0\}$ and does not return to the axis.
\item \emph{Barrier.} By the decomposition of $\dot\rho_f$ in Lemma~\ref{lem:prifb}, $\dot\rho_f=-(C+qS/g)\Delta_v+[gC-q(\nu-S)]\Delta_\omega/g$, and both coefficients are negative when $g<g_0=(\nu-1)(\cos\alpha-\nu\bar\tau)$, because $q\ge\cos\alpha-\nu\bar\tau>0$, $\nu-S\ge\nu-1$ and $C\le1$. Hence $\rho_f$ does not increase while $\rho_f>\rho_B-g_0$, and $\rho_f\le\max(\rho_f(\x_0),\rho_B-g_0)<\rho_B$ along the trajectory.
\end{itemize}
With these two facts, the first-exit argument of Lemma~\ref{lem:prifb} applies on the compact part $\{\rho_f\le\max(\rho_f(\x_0),\rho_B-g_0),\ \tau\le\bar\tau\}$ of the closed chart, and $W_P$ decreases at least at unit rate, which uses only $H^*=0$ and $\psi>0$; so $t_f\le W_P(\x_0)$. In Lemma~\ref{lem:unif} (Appendix~\ref{app:value}; $\overline\Cset(\bar\rho)$, $C_0$ and $T_0$ are defined there) the strip $\overline\Cset(\bar\rho)$ is replaced by the image of $\{0\le\rho_f\le\hat\rho,\ 0\le\tau\le\min(\tau_D(\rho_f),\hat\tau)\}$, with $\bar\tau<\hat\tau<\tau_c$, $\tilde\rho=\max(\bar\rho,\rho_B-\hat g_0)$, $\hat g_0=(\nu-1)(\cos\alpha-\nu\hat\tau)$ and $\hat\rho=(\tilde\rho+\rho_B)/2$. The axis margin~\eqref{eq:axismargin} has a positive minimum on this compact set, and on $\tilde\rho\le\rho_f\le\hat\rho$ the held feedback gives $\dot\rho_f\le C_0h$, so the state needs a time $(\hat\rho-\tilde\rho)/(C_0h)>T_0$ to reach $\rho_f=\hat\rho$ once $h$ is small; the rest of the proof is unchanged.
\end{proof}

Theorem~\ref{thm:F2a} needs no description of the folded primaries: they pass only through $\Tset$, where the value is $\Wt$. On the arcs of $\Bbar_\pm$ from $Z_\pm$ to the cusps both sides carry $\Wt$, so, as on the boundary of F1, the cusp leaves no trace in the value. Together with Section~\ref{sec:F1}, the escape time is therefore the value of the game for every $\nu>1$ and $\alpha<\alpha_Z(\nu)$, that is, wherever the jump across the $\BUP$ characteristic ends before the characteristic reaches the axis or its cusp. The replacements in (d) apply in F1 as well and give a second proof of Lemma~\ref{lem:prifb} there.

\subsection{The value in regime F2b}\label{sec:F2b}
\paragraph{The synthesis} Proposition~\ref{prop:E}, Proposition~\ref{prop:trib} and the local tests hold in F2 without change. We take as $W$ the smallest value carried by the admissible branches at a point: primaries before or after their fold, retained up to their first axis contact or, for primaries that do not cross the axis, their first return to $M_+$ (Lemma~\ref{lem:retained}), and genuine tributaries. On the evader's dispersal line $\Dpt$ between a retained right primary and a right tributary, $H^*\ge0$ on the whole segment between the two one-sided gradients, which is the test for a minimum of two branches, because both one-sided gradients have $\lambda_y>0$ and $\Phi_\omega<0$: on the tributary by~\eqref{eq:gradT}, and on the retained primary because $\psi>0$ there, folded arcs included (Lemma~\ref{lem:retained}). The pursuer control $(-1,-1)$ maximizes the Hamiltonian on the whole segment between them, so $H^*$ is concave there and vanishes at both ends. The resulting structure consists of:
\begin{itemize}
\item the axis dispersal line $\De$ from the apex to $E$, and the universal line $\U$ above $E$;
\item the dispersal line $\Dpt$, which leaves $E$ tangentially to the axis and ends either on the far side of the barrier, at a point $Y_+$ where the far-side value passes from the tributary to the folded primary (e.g.\ $(\nu,\alpha)=(1.5,30^\circ)$ and $(2,15^\circ)$), or on the $\NUP$ (e.g.\ $(1.5,60^\circ)$, at $\rho\simeq1.12\,\rho_B$);
\item the barrier, the whole arc of $\Bbar_\pm$ from $B_\pm$ to the cusps $P_{c\pm}$, across which $W$ jumps; in F2b the lens trace $\tau$ stays below $\Wt$ along it, by the dichotomy above.
\end{itemize}
Primaries that return to $M_+$ after their fold are truncated at the return point, which lies on the $\UP$ for the $\BUP$ characteristic and may lie on the $\NUP$ for primaries launched farther from $B_+$.

\begin{figure}[!htbp]
\centering
\includegraphics[width=\textwidth]{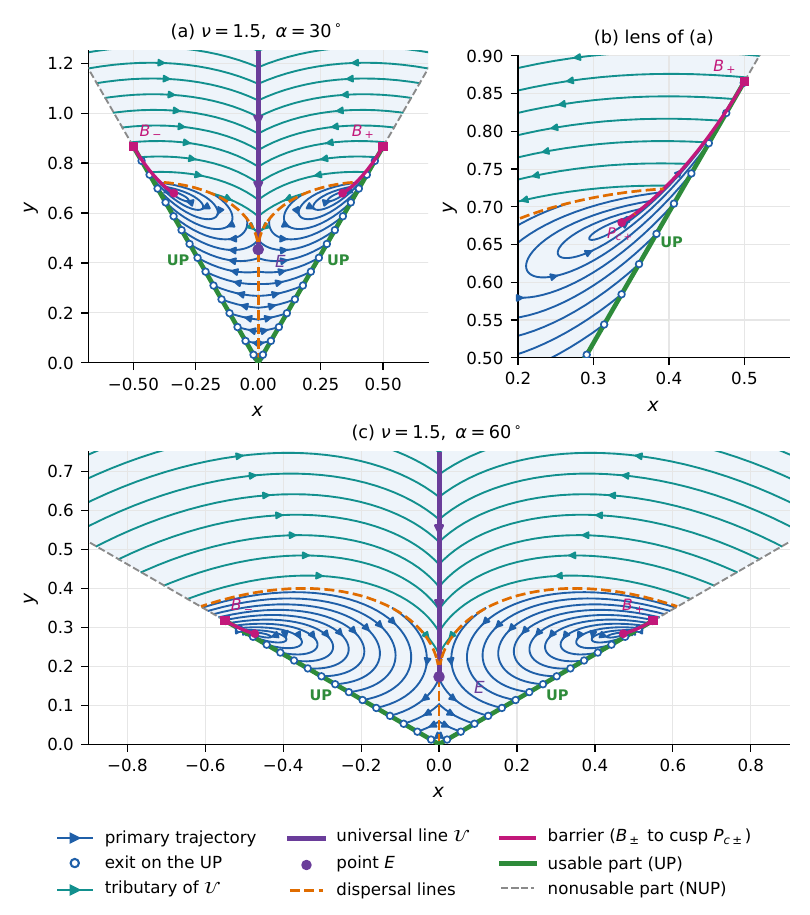}
\caption{Synthesis in regime F2b (Theorem~\ref{thm:F2b}). (a) $(\nu,\alpha)=(1.5,30^\circ)$; (b) enlargement of the lens next to the barrier; (c) $(1.5,60^\circ)$, with $\Dpt$ ending on the $\NUP$. Forward-time arrows: primaries blue, tributaries teal. Universal line $\U$: purple; dispersal lines: orange dashed; barrier: magenta, from $B_\pm$ (squares) to the cusps (dots); $\UP$: green; $\NUP$: gray dashed. Branches are retained where their values are smallest.}
\label{fig:fast}
\end{figure}

\paragraph{Values across the barrier and at its ends} On the side of $\Bbar_+$ facing the usable ray, the primary value tends to $\tau$. The far-side limit $W_{\rm out}$ is carried by tributaries or folded primaries and gives the jump $\Delta W=W_{\rm out}-\tau$. Near the cusp it is carried by the folded primary: as in F2a, every primary through a point near $P_c$ has retro-time near $\tau_c$, while $\Wt$ exceeds $\tau_c$ by a fixed margin there, and primaries of the left family stay in the left half. The folded primary gives $\Delta W=3\epsilon+O(\epsilon^3)$ at $\Bbar_+(\tau_c-\epsilon)$, i.e.\ square-root decay with the distance to $P_c$, while a transverse approach to $P_c$ from the far side has a cube-root law (Appendix~\ref{app:cusp}); the jump along the barrier is tabulated in \OR{S9}. As in F1, we set $W=W_{\rm out}$ on the barrier, the value of the side to which the pursuer can confine the state (Proposition~\ref{prop:barfast}(iv)). At $P_c$ both one-sided limits equal $\tau_c$, and the folded branch merges with the $\BUP$ characteristic; we set $W(P_{c\pm})=\tau_c$, which Theorem~\ref{thm:F2b} identifies as the value there.

\paragraph{Forward play} Figure~\ref{fig:fwd} follows one optimal play for $(1.5,30^\circ)$ from the tributary leaving $\U$ at $U_0=(0,0.95)^\top$, at retro-time $0.3$, with $W=1.7519$. The pursuer plays $(v_p,\omega_p)=(-1,-1)$ for $0.3$, $(-1,0)$ for $0.9923$ along $\U$ and $(-1,-1)$ for $0.4596$ along the primary from $E$, all ordinary Reeds--Shepp controls; integrating the global kinematics reproduces the reduced trajectories to $5\times10^{-10}$ (\OR{S6}).

\begin{figure}[!htbp]
\centering
\includegraphics[width=\textwidth]{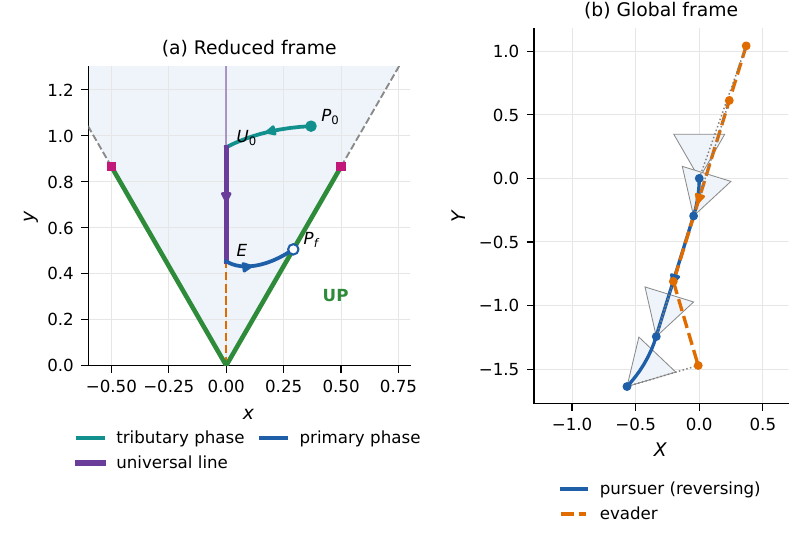}
\caption{An optimal play for $\nu=1.5$, $\alpha=30^\circ$, in forward time. (a) Reduced frame: tributary phase to the universal line at $U_0$ (teal), straight approach along $\U$ to $E$ (purple), and primary phase from $E$ to the escape point on the $\UP$ (blue). (b) Global frame: the reversing pursuer (solid) and the evader (dashed), with the line of sight and the field of view at the start, at $U_0$, at $E$ and at escape.}
\label{fig:fwd}
\end{figure}

\paragraph{The barrier up to its cusp} In F1 and F2a the barrier ends at $Z_\pm$, where $q>0$, and Lemma~\ref{lem:inv} uses a curvature bound for $\Bbar_+$. In F2b the curvature $1/q$ of $\Bbar_+$ is unbounded near $P_c$, and we use instead the scalar function~\eqref{eq:Gscalar}.

\begin{lemma}[Semipermeability up to the cusp]\label{lem:Mfun}
Let $\nu>1$, $0<\tau_0<\tau_c$ and $\mathbf b=\Bbar_+(\tau_0)$. There are a neighbourhood $U$ of $\mathbf b$ and $r>0$ such that, for $\x\in U$, $G(\cdot\,;\x)$ has a unique local maximizer $\tau^*(\x)$ in $(\tau_0-r,\tau_0+r)$, depending smoothly on $\x$. The function $M(\x)=G(\tau^*(\x);\x)$ vanishes on $\Bbar_+$, is negative on its far side and positive on its lens side in $U$, and $M(\x(t))$ is nonincreasing while $\x(t)\in U$ whenever the pursuer plays $(-1,-1)$, whatever the evader does.
\end{lemma}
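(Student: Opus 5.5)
The plan is to read $M$ as the envelope of the family $G(\tau;\cdot)$. On $\Bbar_+$ the maximizer is nondegenerate: the second derivative there equals $-\nu$, which does not depend on the curvature $1/q$. The semipermeability then becomes an exact inequality $\dot M\le0$, so no Gr\"onwall step, and hence no curvature bound, is needed.

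\emph{Step 1: nondegenerate maximum.} From~\eqref{eq:Gscalar} and~\eqref{eq:rhof}, $G_\tau(\tau;\x)=\rho_B-\rho_f(\tau;\x)$ with $\rho_f(\tau;\x)=(x+1)\sin\theta+y\cos\theta-\sin\alpha$. Differentiating once more gives
\[
G_{\tau\tau}(\tau;\x)=-\bigl((x+1)\sin\theta+y\cos\theta\bigr)=-\bigl(\rho_f(\tau;\x)+\sin\alpha\bigr).
\]
The point $\mathbf b=\Bbar_+(\tau_0)$ lies on the characteristic with launch parameter $\rho_B$ at retro-time $\tau_0$. Hence $G(\tau_0;\mathbf b)=0$, $G_\tau(\tau_0;\mathbf b)=0$ and $G_{\tau\tau}(\tau_0;\mathbf b)=-(\rho_B+\sin\alpha)=-\nu<0$. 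By continuity I choose $r>0$ and a neighbourhood $U$ with $G_{\tau\tau}\le-\nu/2$ on $(\tau_0-r,\tau_0+r)\times U$. Shrinking $U$, the implicit function theorem applied to $G_\tau=0$ yields a smooth root $\tau^*(\x)$ in that interval. It is the unique critical point there, because $G_\tau$ is strictly decreasing in $\tau$, so it is the unique local maximizer. Since $G$ is analytic, $M$ is smooth. The hypothesis $\tau_0<\tau_c$ is used only to keep $\mathbf b$ a regular point of $\Bbar_+$ with $q(\tau_0)>0$, and the neighbourhood $U$ may shrink as $\tau_0\uparrow\tau_c$.

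\emph{Step 2: zero set and signs.} By the envelope identity, $\nabla M(\x)=\nabla_\x G(\tau^*;\x)=(\cos\theta^*,-\sin\theta^*)^\top=-\mathbf n(\theta^*)$, which has unit length. Suppose $M(\x)=0$ for some $\x\in U$. Then $G(\tau^*;\x)=0$ and $G_\tau(\tau^*;\x)=0$, i.e.\ $\rho_f=\rho_B$, so by the elimination leading to~\eqref{eq:Gscalar} the point $\x$ is the point of the $\rho_B$-characteristic at retro-time $\tau^*$. Conversely, every point $\Bbar_+(\tau)$ with $|\tau-\tau_0|$ small lies in $U$ and has $\tau^*=\tau$ and $M=0$. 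So $\{M=0\}\cap U=\Bbar_+\cap U$. Since $\nabla M=-\mathbf n\neq0$, $M$ changes sign across this curve. It decreases in the direction $\mathbf n$, so $M<0$ on the far side and $M>0$ on the lens side, after shrinking $U$ so that $U\setminus\Bbar_+$ has exactly these two components.

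\emph{Step 3: monotonicity under $(-1,-1)$.} Along an absolutely continuous trajectory in $U$, $M\circ\x$ is absolutely continuous and $\frac{d}{dt}M=-\mathbf n(\theta^*)^\top\dot\x$. With $(v_p,\omega_p)=(-1,-1)$, \eqref{eq:reduced} gives $\dot\x=(u_x-y,\;u_y+1+x)$. Therefore
\[
\tfrac{d}{dt}M=\cos\theta^*u_x-\sin\theta^*u_y-\bigl((x+1)\sin\theta^*+y\cos\theta^*\bigr)\le\nu-\bigl(\rho_f(\tau^*;\x)+\sin\alpha\bigr).
\]
The condition $G_\tau(\tau^*;\x)=0$ means $\rho_f(\tau^*;\x)=\rho_B=\nu-\sin\alpha$, so the right-hand side is $0$ at every point of $U$, not only on $\Bbar_+$. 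Hence $M(\x(t))$ is nonincreasing while $\x(t)\in U$, whatever the evader does.

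\emph{Main obstacle.} The step needing care is Step 1: it must give a single maximizer, smooth in $\x$, on a fixed interval uniformly over $U$. Only this makes the envelope formula valid and rules out spurious zeros of $M$. The uniform bound $G_{\tau\tau}\le-\nu/2$ handles this, and checking the identification $G_{\tau\tau}=-(\rho_f+\sin\alpha)$ is the first thing I would do.
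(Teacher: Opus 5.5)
Your route is the paper's. You get a nondegenerate local maximizer from the implicit function theorem and use the envelope identity $\nabla M=\nabla_{\x}G(\tau^*;\x)=-\mathbf n(\theta^*)$ for the signs. You also observe that $G_\tau(\tau^*;\x)=0$ forces $(x+1)\sin\theta^*+y\cos\theta^*=\rho_B+\sin\alpha=\nu$ at \emph{every} point of $U$, so that $\frac{d}{dt}M\le0$ holds exactly under $(-1,-1)$. Steps~2 and~3 are correct.

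Step~1, however, rests on a wrong derivative. Differentiating $G_\tau=\nu-(x+1)\sin\theta-y\cos\theta$ once more gives
\[
G_{\tau\tau}(\tau;\x)=-(x+1)\cos\theta+y\sin\theta=-\bigl(G(\tau;\x)+q(\tau)\bigr).
\]
Your expression $-(\rho_f+\sin\alpha)$ is $G_\tau-\nu$, not $G_{\tau\tau}$. At $\mathbf b$ the correct formula gives $G_{\tau\tau}(\tau_0;\mathbf b)=-q(\tau_0)$, not $-\nu$. This agrees with Lemma~\ref{lem:barG}, $G(\tau_0+\delta;\mathbf b)=\nu(\delta-\sin\delta)-q(\tau_0)(1-\cos\delta)$, which is how the paper obtains the nondegenerate maximum.

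Hence your guiding claim, that the second derivative does not depend on the curvature $1/q$, is false. Its modulus is exactly $q(\tau_0)$, the radius of curvature of $\Bbar_+$ at $\mathbf b$. The hypothesis $\tau_0<\tau_c$ therefore enters precisely here, through $q(\tau_0)>0$, and not merely to make $\mathbf b$ a regular point.

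Your identity would also prove too much. It would give $G_{\tau\tau}=-\nu$ at $(\tau_c,P_c)$ as well, hence a nondegenerate maximizer at the cusp. In fact $G(\tau_c+\delta;P_c)=\nu(\delta-\sin\delta)$ is strictly increasing in $\delta$ and has no local maximum at $\delta=0$, which is why the lemma stops short of $P_c$.

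The repair is one line: choose $r$ and $U$ with $G_{\tau\tau}\le-q(\tau_0)/2$ on $(\tau_0-r,\tau_0+r)\times U$, and your Steps~1--3 then go through unchanged. This also gives the correct reason why $r$ and $U$ must shrink as $\tau_0\uparrow\tau_c$. The lemma needs only the pointwise nondegeneracy $q(\tau_0)>0$, not a uniform curvature bound. No Gr\"onwall step is needed because $\dot M\le0$ holds identically in $U$, not because the second derivative is uniform.
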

\begin{proof}
By Lemma~\ref{lem:barG}, $G(\tau_0+\delta;\mathbf b)=\nu(\delta-\sin\delta)-q(\tau_0)(1-\cos\delta)$, which has a nondegenerate local maximum $0$ at $\delta=0$, since $q(\tau_0)>0$; the implicit function theorem gives $\tau^*$. Since $G_\tau(\tau^*;\x)=0$, \eqref{eq:rhof} gives $\rho_f(\theta^*,\x)=\rho_B$, $\theta^*=\alpha+\tau^*$, and $\nabla M=\partial_\x G(\tau^*;\x)=-\mathbf n(\theta^*)$. Hence $M$ decreases across $\Bbar_+$ in the direction of $\mathbf n$, which gives the signs. With $(v_p,\omega_p)=(-1,-1)$, \eqref{eq:reduced} and~\eqref{eq:rhof} give $\mathbf n(\theta^*)^\top\dot\x=\mathbf n^\top\mathbf u+(x+1)\sin\theta^*+y\cos\theta^*=\mathbf n^\top\mathbf u+\rho_B+\sin\alpha\ge0$, so $\frac{d}{dt}M=-\mathbf n(\theta^*)^\top\dot\x\le0$.
\end{proof}

No curvature bound enters: $U$ shrinks as $\tau_0\to\tau_c$, but a crossing from the far side to the lens happens at a single barrier point, so the relay prevents every such crossing other than at $P_c$. At $P_c$ the envelope $W$ is continuous: $G(\tau_c+\delta;P_c)=\nu(\delta-\sin\delta)$ vanishes only at $\delta=0$, so all primary values near $P_c$ are near $\tau_c$, while $\Wt$ exceeds $\tau_c$ by a fixed margin. Near the tip the scalar function~\eqref{eq:Gscalar} also has an exact meaning. With $z=(x+1,y)^\top$, the control $(-1,-1)$ turns~\eqref{eq:reduced} into $\dot z=Az+\mathbf u$, where $A$ generates rotations, so the largest value of $\ell=x\cos\alpha-y\sin\alpha$, the signed distance to the line of $M_+$, that the evader can reach at time $T$ is
\begin{equation}\label{eq:Gsupport}
\sup_{\|\mathbf u\|\le\nu}\ell(\x(T))=G(T;\x_0),
\end{equation}
attained by $\mathbf u(s)=\nu e^{-A(T-s)}(\cos\alpha,-\sin\alpha)^\top$ (\OR{S9}). At the cusp, $G(T;P_c)=\nu\bigl((T-\tau_c)-\sin(T-\tau_c)\bigr)<0$ for $0\le T<\tau_c$: against $(-1,-1)$, which the relay plays near $P_c$, an evader starting at $P_c$ cannot even reach the line of $M_+$ before $\tau_c$. In particular, no manoeuvre from $P_c$ reaches a right-primary state $\x$ at time $T$ with $T+W_P(\x)<\tau_c$, since following that primary would then exit through $M_+$ before $\tau_c$. This certificate is local: it concerns the constant control $(-1,-1)$ and exits through the right ray, not trajectories on which the relay changes side.

\paragraph{Verification} We now show that $W$ is the value in F2b. The argument does not need a global description of the dispersal lines between unfolded primaries, folded primaries and tributaries, whose topology differs between the examples. It rests on four ingredients: an explicit parameter domain for all retained primaries (Lemma~\ref{lem:retained}); an evader control that commits in advance to the heading of a primary and guarantees its time against every pursuer control, folded arcs included (Lemma~\ref{lem:commit}); an auxiliary envelope in which the tributary formula~\eqref{eq:WT} is used without the condition $s\ge y_E$, for which both guarantees are proved first (Propositions~\ref{prop:auxub} and~\ref{prop:relayF2b}); and a covering argument that identifies this envelope with $W$ (Proposition~\ref{prop:coverF2b}). Throughout, $b=\rho_f+\sin\alpha$, so that $g=\nu-b$ and, by~\eqref{eq:primary}, the right primary is $\x(\rho_f,\tau)=(bS-1+qC,\ bC-qS)^\top$, and $d=y\sin\alpha-x\cos\alpha\ge0$ is the distance to the line of $M_+$; along a primary, $d=\cos\alpha-b\sin\tau-q\cos\tau$ and $d_\tau=g\cos\tau+q\sin\tau$. A primary is \emph{retained} at retro-time $T$ if its arc on $[0,T]$ stays in the closed right half of $K$, i.e.\ it is admissible in the sense above.

\begin{remark}[Two verification arguments]\label{rem:twoproofs}
The argument below is not specific to F2b. Committed headings, the auxiliary envelope and the covering argument could also verify $W$ in F1 and F2a; there they would need a version of Lemma~\ref{lem:retained} for primaries that reach the axis before folding, and the treatment of the end $Z_\pm$ of the jump given by Lemma~\ref{lem:inv}. Numerically, the auxiliary envelope~\eqref{eq:auxV} defined below coincides with $W$ in F1 and F2a as well (\OR{S9}). We keep the proof of Section~\ref{sec:F1} for those regimes because it gives more: the evader's guarantee is a state feedback, the tributary feedback with a single switch to a primary, with sampled versions (Theorem~\ref{thm:value}), whereas a committed heading is open-loop once chosen. In F2b the feedback argument breaks down near the continuation of the barrier, where a folded branch need not persist under feedback, and the committed heading avoids this question.
\end{remark}

\begin{lemma}[Retained primaries]\label{lem:retained}
Let $\nu\sin\theta_c>1$.
\begin{enumerate}[label=(\alph*)]
\item A primary retained up to $T$ has $T<\pi/2$, $S>0$ and $\psi=xS+yC>0$ on $[0,T]$; if it reaches $\theta=\pi/2$, then $b>1$.
\item $A(\tau)=q-C+\nu CS$ has a unique zero $\tau_a$ on $(0,\pi/2)$, and $\tau_c<\tau_a<\pi/2-\alpha$. With $S_a=\sin(\alpha+\tau_a)$, $C_a=\cos(\alpha+\tau_a)$, $b_a=S_a+\nu C_a^2$ and $\rho_a=b_a-\sin\alpha$, one has $\rho_E<\rho_a<\rho_B$, $b_a>1$ and $\tau_*<\tau_a$.
\item For $0<\rho_f<\rho_a$ the primary is retained up to its first axis contact $\tau_D(\rho_f)<\tau_a$ and no further; for $\rho_a\le\rho_f<\rho_B$ it never crosses the axis and is retained up to its first return to $M_+$, at a retro-time below $\pi/2$. Hence a root $T$ of~\eqref{eq:Gscalar} at a state of the right half of $K$, with label $\rho_f$ from~\eqref{eq:rhof}, is retained if and only if $0\le T<\pi/2$, $0<\rho_f<\rho_B$, and [$T\le\tau_a$ or $\rho_f\ge\rho_a$]. The primary $\rho_a$ touches the axis at $(0,\nu C_a)$ at $\tau_a$ and stays in the right half; its continuation beyond the tangency is admitted in the auxiliary construction below but is never selected (Lemma~\ref{lem:axiscomp}).
\end{enumerate}
\end{lemma}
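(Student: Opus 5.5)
The plan is to work throughout with the explicit parametrization $\x(\rho_f,\tau)=(bS-1+qC,\ bC-qS)^\top$, $b=\rho_f+\sin\alpha$, together with $d=\cos\alpha-b\sin\tau-q\cos\tau$ and $d_\tau=g\cos\tau+q\sin\tau$. Two monotonicity facts do most of the work. First, $\partial x/\partial\rho_f=S>0$ at fixed $\tau$, so the arcs are ordered in $x$ by their label. Second, as in the proof of Proposition~\ref{prop:E}, $d_\tau/\sin\tau=g\cot\tau+q$ is decreasing, so along each arc $d$ rises and then falls at most once.

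For (a), I would use the identity $\psi=b-S=xS+yC$, which follows from~\eqref{eq:primary}. While the arc stays in the closed right half of $K$ with $\theta<\pi/2$, we have $x\ge0$, $y>0$ and $S,C>0$, so $\psi>0$. To show $T<\pi/2$, I would evaluate the arc at $\theta=\pi/2$, i.e.\ $\tau=\pi/2-\alpha$. There $q=\cos\alpha-\nu(\pi/2-\alpha)<0$, $x=b-1$ and $d=(1-b)\cos\alpha-q\sin\alpha$. Retention up to that point would force $b\ge1$ (from $x\ge0$, with strict inequality by continuity of $\psi>0$). I then expect the unimodality of $d$ and $d(0)=0$ to show that $d$ has already returned to zero strictly before $\pi/2-\alpha$. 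Checking this inequality with $b<\nu$ and $\nu\sin\theta_c>1$ is the step I am least sure of. If the direct evaluation fails, I would instead compare with the $\BUP$ arc $b=\nu$, whose return to $M_+$ occurs before $\pi/2$.

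For (b), the zero of $A$ should encode axis tangency. The two conditions $x=0$ and $x_\tau=-gC-qS=0$, after eliminating $q$, give $b=S+\nu C^2$ and $q=(1-bS)/C$, and consistency of these is exactly $A=0$. For the zero itself: $A(0)=\nu\sin\alpha\cos\alpha>0$, $A(\pi/2-\alpha)=q<0$, and $A'=S(1-2\nu S)$, so $A$ increases and then decreases, giving a unique zero $\tau_a$. Since $A(\tau_c)=C_c(\nu S_c-1)>0$ in F2, we get $\tau_c<\tau_a$.
\begin{itemize}
\item $b_a<\nu$ is equivalent to $S_a<\nu S_a^2$, i.e.\ $\nu S_a>1$, which holds because $\theta_a>\theta_c$.
\item $b_a>1$ is equivalent to $\nu C_a^2>1-S_a$, i.e.\ $\nu(1+S_a)>1$, which is immediate.
\item For $\tau_*<\tau_a$, write $f=A-\nu CS^2/(1+S)<A$, so $f(\tau_a)<0$, and the sign pattern of $f$ in Proposition~\ref{prop:E} gives $\tau_*<\tau_a$.
\end{itemize}
To get $\rho_E<\rho_a$, I would compare the axis contacts: the $\rho_E$-arc meets the axis at $\tau_*<\tau_a$, while the $\rho_a$-arc only touches it at $\tau_a$. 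The ordering in $x$ then gives $\rho_E<\rho_a$.

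For (c), the $x$-ordering of arcs is again the tool. For $\rho_f<\rho_a$ we have $x(\rho_f,\tau_a)<x(\rho_a,\tau_a)=0$, so the arc crosses the axis strictly before $\tau_a$. By the argument of Proposition~\ref{prop:E} the arc stays in the cone until then, and it is not retained beyond the crossing. For $\rho_f\ge\rho_a$, the inequality $x(\rho_f,\tau)\ge x(\rho_a,\tau)$ reduces the claim to showing $x(\rho_a,\cdot)\ge0$ on $[0,\pi/2)$, with its only zero at the tangency $\tau_a$. This is the main obstacle. I would attack it through $x_\tau=-gC-qS$: $x_\tau/C=-g-q\tan\theta$ is increasing while $q$ decreases through negative values beyond $\tau_c$, so $x_\tau$ changes sign only once, from negative to positive, and the double zero at $\tau_a$ is therefore a minimum. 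These arcs are thus retained until $d$ returns to zero, which by unimodality happens once and, by (a), before $\pi/2$. Finally, the characterization of retained roots of~\eqref{eq:Gscalar} follows by combining the two cases, with the label read off from~\eqref{eq:rhof}.
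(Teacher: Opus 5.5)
\textbf{There is a genuine gap in (a).} You read ``$T<\pi/2$'' as $\theta<\pi/2$ and try to show that every retained arc is back on $M_+$ before $\tau_h=\pi/2-\alpha$. That is false in F2, and the statement itself allows arcs to reach $\theta=\pi/2$.

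\emph{Counterexample.} Your value $d(\tau_h)=(1-b)\cos\alpha-q(\tau_h)\sin\alpha$ is positive whenever $1<b<1-q(\tau_h)\tan\alpha$. Take $(\nu,\alpha)=(1.5,60^\circ)$, where $b_a\approx1.02$, and the arc with $b=1.2$. At $\theta=\pi/2$ it has $x=0.2$, $y=-q(\tau_h)\approx0.285$ and $d\approx0.147$. It keeps $x\ge0$ because $b>b_a$, and it keeps $d>0$ on $(0,\tau_h]$ by unimodality. So it is retained beyond $\theta=\pi/2$.

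\emph{The fallback runs the wrong way.} At fixed $\tau$, $d=\cos\alpha-b\sin\tau-q\cos\tau$ decreases in $b$. So the $\BUP$ arc has the smallest $d$ and returns first. That bounds the other return times from below, not from above.

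\emph{Consequences.} Your proof of $\psi>0$ uses $C>0$, so it leaves $\theta\ge\pi/2$ open. The appeal to (a) for the return time in (c) is unsupported. Your justification of $b>1$ (``continuity of $\psi>0$'') also fails, since $\psi=x=b-1$ at $\theta=\pi/2$ is compatible with $b=1$.

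\textbf{The missing step} is to evaluate $d$ at retro-time $\tau=\pi/2$, where it equals $\cos\alpha-b$.
\begin{enumerate}
\item An arc that does not reach $\theta=\pi/2$ ends before $\tau_h<\pi/2$.
\item An arc that does reach it has $x=b-1\ge0$ at $\tau_h$. The case $b=1$ is excluded because then $x=S-1+qC<0$ just before $\tau_h$, as $q(\tau_h)<0$.
\item Hence $b>1>\cos\alpha$ and $d(\pi/2)<0$. The arc ends before retro-time $\pi/2$, and $\theta<\pi/2+\alpha<\pi$ gives $S>0$.
\item For $\theta\ge\pi/2$, your identity $\psi=b-S$ gives $\psi\ge b-1>0$.
\end{enumerate}
In (c), the same bound with $b\ge b_a>1$ places the single return to $M_+$ before retro-time $\pi/2$. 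There you also need $x_\tau=-gC-qS>0$ once $C\le0$, because your monotonicity argument for $x_\tau/C$ stops at $C=0$.

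\textbf{A smaller omission in (b).} Uniqueness of the zero of $A$ is claimed on $(0,\pi/2)$, but your sign analysis of $A'$ covers only $[0,\tau_h]$. Beyond $\tau_h$, $S$ decreases, so $A'$ can change sign only from negative to positive. You therefore also need $A(\pi/2)=\cos\alpha+\sin\alpha-\nu(\pi/2+\sin\alpha\cos\alpha)<0$.

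\textbf{What is correct.} The remaining steps are sound: $A=0$ as the tangency condition, the bounds on $b_a$, the identity $f=A-\nu CS^2/(1+S)$, and the ordering of the arcs in $x$ by label. Together they amount to the paper's threshold $a(\tau)=(1-qC)/S$ with $a'=A/S^2$, written in a different form.
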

\begin{proof}
(a) At $\tau_h=\pi/2-\alpha$, $x=b-1$ and $q(\tau_h)=\cos\alpha-\nu\tau_h<0$, because $\pi/2-\alpha>\cos\alpha$. An arc retained up to $\tau_h$ needs $b\ge1$, and $b=1$ is excluded because then $x=S-1+qC<0$ just before $\tau_h$. At $\tau=\pi/2$ the formula for $d$ gives $d=\cos\alpha-b<0$, so a retained arc ends before $\pi/2$; hence $S>0$. Before $\theta=\pi/2$, $\psi=xS+yC>0$ in the right quadrant; afterwards $\psi=b-S\ge b-1>0$.
(b) The condition $x\ge0$ reads $b\ge a(\tau)=(1-qC)/S$, and $a'=A/S^2$, $A'=S(1-2\nu S)$, so $A$ has at most one interior maximum on $[0,\tau_h]$. Since $A(0)=\nu\cos\alpha\sin\alpha>0$, $A(\tau_c)=C_c(\nu S_c-1)>0$, with $S_c=\sin\theta_c$ and $C_c=\cos\theta_c$, and $A(\tau_h)=q(\tau_h)<0$, $A$ has a unique zero there, after $\tau_c$; on $[\tau_h,\pi/2]$, $A'$ can change sign only from negative to positive and $A(\pi/2)=\cos\alpha+\sin\alpha-\nu(\pi/2+\sin\alpha\cos\alpha)<0$, so $A<0$ there. Thus $a$ has its unique maximum $a(\tau_a)=b_a$, where $q=C_a-\nu C_aS_a$. Since $S_a>S_c>1/\nu$, $\nu-b_a=S_a(\nu S_a-1)>0$ and $b_a-1=(1-S_a)(\nu(1+S_a)-1)>0$. In~\eqref{eq:fE}, $f(\tau_a)=-\nu C_aS_a^2/(1+S_a)<0$, so $\tau_*<\tau_a$, and $\rho_E<\rho_a$ because $a$ increases on $[0,\tau_a]$.
(c) As $a(0)=\sin\alpha$, a label with $\sin\alpha<b<b_a$ has exactly one first axis contact on the increasing branch of $a$; before it, $x>0$, $y>0$ (the contact height $y_D=(C-q)/S$ is positive because $(q-C)'=S-\nu<0$), and $d>0$, because $d_\tau/\sin\tau=g\cot\tau+q$ decreases, so $d$ increases and then decreases, and $d>0$ at the contact. After the contact $x<0$ until $a$ falls below $b$ again, beyond $\tau_a$. For $b\ge b_a$, $x\ge0$ on $(0,\pi/2)$, and $d(\pi/2)=\cos\alpha-b<0$ with the same single maximum gives exactly one return to the ray. The test follows, because the state itself supplies $x\ge0$ and $d\ge0$ at $T$.
\end{proof}

Points of the axis carry, besides tributaries, only primaries that end there, at their axis contact.

\begin{lemma}[Axis comparison]\label{lem:axiscomp}
Along the axis contacts, $y_D=(C-q)/S$ increases on $[0,\tau_a]$ from $0$ to $\nu C_a$, and
\[
F_D(\tau)=W_E+\frac{y_D(\tau)-y_E}{\nu-1}-\tau\le0\qquad(0\le\tau\le\tau_a),
\]
with equality only at $\tau_*$. Along the primary $\rho_a$ after its tangency with the axis at $\tau_a$, $\Wt<\tau$.
\end{lemma}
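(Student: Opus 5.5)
The plan is to handle the three claims in order, reusing the axis-contact computations from the proof of Proposition~\ref{prop:E} and the function $a(\tau)=(1-qC)/S$ from the proof of Lemma~\ref{lem:retained}(b).

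\emph{Monotonicity of $y_D$.} The proof of Proposition~\ref{prop:E} gives $y_D'=g/S$. Here $g=\nu-S-y_DC$ is the value of $\rho_B-\rho_f$ for the primary whose axis contact is at retro-time $\tau$. At a contact we have $x=0$, so the label satisfies $b=a(\tau)$. Hence
\[
g=\nu-a(\tau)=\frac{\nu S-1+qC}{S}.
\]
By Lemma~\ref{lem:retained}(b), $a$ increases on $[0,\tau_a]$ to its maximum $b_a<\nu$, so $g\ge\nu-b_a>0$ there. Since $S>0$, $y_D$ increases strictly on $[0,\tau_a]$. The endpoint values are direct. At $\tau=0$ we have $C=\cos\alpha=q$, so $y_D(0)=0$. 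At $\tau_a$ the relation $A(\tau_a)=0$, i.e.\ $q=C_a-\nu C_aS_a$, gives $y_D(\tau_a)=\nu C_a$.

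\emph{Sign of $F_D$.} Differentiate and substitute $y_D'=g/S$ and the identity $g-(\nu-1)S=(C/S)f(\tau)$ from the proof of Proposition~\ref{prop:E}:
\[
F_D'(\tau)=\frac{g-(\nu-1)S}{(\nu-1)S}=\frac{C\,f(\tau)}{(\nu-1)S^2}.
\]
Lemma~\ref{lem:retained}(b) gives $\tau_a<\pi/2-\alpha$. So on $[0,\tau_a]$ we have $C>0$, and $\tau$ lies in the domain of~\eqref{eq:fE}, where $f>0$ before $\tau_*$ and $f<0$ after it. Thus $F_D$ increases strictly up to $\tau_*$ and decreases strictly afterwards. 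Its value at $\tau_*$ is $W_E+(y_E-y_E)/(\nu-1)-\tau_*=0$, because $y_D(\tau_*)=y_E$ and $W_E=\tau_*$. Hence $F_D\le0$, with equality only at $\tau_*$, and Lemma~\ref{lem:retained}(b) ensures $\tau_*<\tau_a$ lies in the interval. The expression $W_E+(y_D-y_E)/(\nu-1)$ is $\Wt$ at the contact point: on the axis $\sigma=0$ and $R=y$ in~\eqref{eq:WT}, even below $E$. So $F_D(\tau)$ is exactly $F(\rho_f,\tau)=\Wt-\tau$ at the contact.

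\emph{The continuation of the $\rho_a$-primary.} At the tangency, $F(\rho_a,\tau_a)=F_D(\tau_a)<0$, since $\tau_a\ne\tau_*$. Beyond it, the primary stays in the closed right half by Lemma~\ref{lem:retained}(c), with $y>0$ away from the apex. There the right-branch gradient~\eqref{eq:gradT} is valid, so the computation in Proposition~\ref{prop:F1}(ii) applies verbatim and gives
\[
F_\tau=-\frac{\nu}{\nu-1}\bigl(1-\sin(\theta-\sigma)\bigr)\le0.
\]
Therefore $F$ remains negative, i.e.\ $\Wt<\tau$, up to the return to $M_+$. The main point to check is that the continued arc does not pass through the apex or leave $\{x\ge0,\ y>0\}$, where~\eqref{eq:gradT} would fail. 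I would settle this with Lemma~\ref{lem:retained}(a),(c): the arc stays in the cone, touches the axis only at the tangency point $(0,\nu C_a)$ with $\nu C_a>0$, and returns to $M_+$ before $\theta=\pi/2$.
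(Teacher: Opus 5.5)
Your proof is correct and follows the paper's argument. It uses $y_D'=g/S$ with $g>0$, and then $F_D'=Cf(\tau)/((\nu-1)S^2)$, so $F_D$ has its unique maximum $0$ at $\tau_*<\tau_a$. The last claim follows from $F(\rho_a,\tau_a)<0$ together with the nonincreasing $F_\tau$ of Proposition~\ref{prop:F1}(ii) along the $\rho_a$-primary. You fill in details the paper leaves implicit: $g=\nu-a(\tau)\ge\nu-b_a>0$, the endpoint values of $y_D$, the identification of $F_D$ with $\Wt-\tau$ on the axis, and why the continued arc stays in $\{x\ge0,\ y>0\}$. The route is the same.
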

\begin{proof}
At a contact $g=\nu-S-y_DC>0$ by Lemma~\ref{lem:retained}, and $y_D'=g/S>0$. Differentiation gives $F_D'=Cf(\tau)/((\nu-1)S^2)$, so $F_D$ has its unique maximum at $\tau_*$, where it vanishes. At $\tau_a$, $F_D<0$, and by Proposition~\ref{prop:F1}(ii) $\Wt-\tau$ does not increase along the primary $\rho_a$ while it stays in the right quadrant.
\end{proof}

\begin{lemma}[Committed heading]\label{lem:commit}
Let $\x_0=\x(\rho_f,T)$ with $0<\rho_f<\rho_B$ be a retained primary state. If the evader plays the heading prescribed in the reduced frame, $\mathbf u(t)=\nu\bigl(\cos(\alpha+T-t),-\sin(\alpha+T-t)\bigr)^\top$ on $[0,T]$ and then $\nu(\cos\alpha,-\sin\alpha)^\top$, then $t_f\le T$ against every measurable pursuer control.
\end{lemma}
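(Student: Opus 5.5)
The plan is to certify the open-loop heading with the scalar function~\eqref{eq:Gscalar}, exactly as in the support identity~\eqref{eq:Gsupport}, but now against an arbitrary pursuer control. Along the closed loop define $h(t)=G(T-t;\x(t))$ for $0\le t\le T$, where $\theta(t)=\alpha+T-t$ is the committed heading angle. Then $h(0)=G(T;\x_0)=0$, because $\x_0=\x(\rho_f,T)$ lies on the primary at retro-time $T$. At $t=T$, $h(T)=G(0;\x(T))=(x+1)\cos\alpha-y\sin\alpha-\cos\alpha=\ell(\x(T))$ is the signed distance to the line of $M_+$. It therefore suffices to show that $h$ is nondecreasing while the state is in $K$. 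Then either the state leaves $K$ before $T$, or $\ell(\x(T))\ge0$, i.e.\ $\x(T)$ is on or beyond $M_+$. In the latter case the constant outward heading $\nu(\cos\alpha,-\sin\alpha)^\top$ played from then on gives $\dot\ell\ge\nu-|v_p|\cos\alpha\cdot0-\dots>0$; concretely $\dot\ell=\nu-v_p\sin\alpha\cdot(-1)\ldots$, and I would check from~\eqref{eq:reduced} that $\dot\ell\ge\nu-1>0$ at points of $M_+$ with $\rho\le\rho_B$ against every bow-tie control. This yields a strict exit at time $T$, or immediately after with $t_f\le T$ in the infimum sense.

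The derivative is a short computation, and I expect it to follow the pattern of~\eqref{eq:WTdot}. We have $\partial_\x G=(\cos\theta,-\sin\theta)=-\mathbf n(\theta)$ and $G_\tau=\nu-(x+1)\sin\theta-y\cos\theta$ by~\eqref{eq:rhof}. With $\mathbf u=\nu(\cos\theta,-\sin\theta)^\top$ and $\dot\x$ from~\eqref{eq:reduced},
\[
\dot h=-G_\tau+\partial_\x G^\top\dot\x=(xS+yC)\,\Delta_\omega+S\,\Delta_v ,
\]
where $\Delta_v=v_p+1\ge0$ and $\Delta_\omega=\omega_p+1\ge0$, for every admissible $(v_p,\omega_p)$. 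When the pursuer plays $(-1,-1)$, $\dot h\equiv0$, which recovers the equality case of~\eqref{eq:Gsupport}. The nonnegativity of $\dot h$ therefore reduces to two facts along the actual trajectory: $S=\sin\theta(t)\ge0$, and $\tilde\psi=xS+yC\ge0$.

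The first fact is immediate from Lemma~\ref{lem:retained}(a). A retained root has $T<\pi/2$, so $\theta\in[\alpha,\alpha+T]\subset(0,\pi)$ and $S>0$. The second fact is the main obstacle, since Lemma~\ref{lem:retained}(a) gives $\psi>0$ only along the primary itself, not along the perturbed trajectory. Writing $\x=\rho(\sin\varphi,\cos\varphi)$, we get $\tilde\psi=\rho\cos(\theta-\varphi)$. This is nonnegative on the right half of $K$ whenever $\theta\le\pi/2$, which covers $T\le\pi/2-\alpha$, and also covers $\theta>\pi/2$ as long as $\varphi\ge\theta-\pi/2$.

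I would argue as follows. As long as the state stays in $\{x\ge0\}$, use $b>1$ from Lemma~\ref{lem:retained}(a) in the regime $\theta>\pi/2$. If the state stays right of the line through the origin with direction $(\cos\theta,-\sin\theta)$, then $\tilde\psi\ge0$. The first time this fails, $\tilde\psi=0$ and $h\ge0$ there. I would then try to show that the state is in fact already outside $K$ at that moment, using $\ell\ge h-(\text{terms in }\tau)$, that is, $\ell(\x)\ge G(T-t;\x)-\nu(T-t)+\ldots$, together with the explicit bound $G(\tau;\x)\le\ell(\x)+\nu\tau-(1-\cos\tau)\ldots$ near that line.

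If the trajectory crosses into the left half, a sign change of $\tilde\psi$ is the only danger. Here I would first try to show that the bearing constraint $\varphi\ge-\alpha$ suffices whenever $\theta+\alpha\le\pi/2$. In the remaining case I would fall back on a Gr\"onwall-type comparison: $\dot h\ge-2|\tilde\psi|^-$, with $\tilde\psi^-$ vanishing until the state reaches a region that lies outside $K$. Once nonnegativity is established, absolute continuity of $h$ (the trajectory is Lipschitz and $G$ is smooth) gives monotonicity, and the conclusion $t_f\le T$ follows as above.
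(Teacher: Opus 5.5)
Your certificate $h(t)=G(T-t;\x(t))$ and the identity $\dot h=\Delta_v S+\Delta_\omega(xS+yC)$ match the paper's proof. The proposal, however, stops at the step you call the main obstacle. Nothing in it establishes $\psi=xS+yC\ge0$ along the perturbed trajectory. The mechanism you sketch, waiting for $\psi$ to vanish and then showing the state has already left $K$, is also not what works. Two ingredients are missing.

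\emph{The state never enters the left half.} Compare with the unperturbed play $\x^0(t)=\x(\rho_f,T-t)$ against $(-1,-1)$. The error $\mathbf e=\x-\x^0$ satisfies $\dot{\mathbf e}=A_0\mathbf e+(\Delta_\omega y,\,-\Delta_v-\Delta_\omega x)^\top$, with $A_0$ the rotation generator, so
\[
x(t)-x^0(t)=\int_0^t\bigl[\cos(t-r)\,\Delta_\omega y+\sin(t-r)\,(\Delta_v+\Delta_\omega x)\bigr]\,dr .
\]
Both kernels are nonnegative because $t-r\le T<\pi/2$. In $K$ we have $y\ge0$, and $\Delta_\omega\le2$, so $x\ge x^0-2\int_0^t x^-\,dr$. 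Since the primary is retained, $x^0\ge0$, and Gr\"onwall gives $x^-\equiv0$, i.e.\ $x\ge x^0\ge0$. Hence $\psi\ge0$ whenever $\theta\le\pi/2$, and your left-half case analysis is unnecessary.

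\emph{The regime $\theta>\pi/2$.} This is an initial time interval, present only if $b>1$. Use the label $(x+1)\sin\theta+y\cos\theta$, which equals $b=\rho_f+\sin\alpha$ at $t=0$. Its derivative there is $-\Delta_v\cos\theta+\Delta_\omega(y\sin\theta-x\cos\theta)\ge0$, using $x,y\ge0$ and $\cos\theta\le0$. So $\psi\ge b-1>0$, and $\psi$ never vanishes in this regime.

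The endgame also has a gap. If the state is still in $K$ at time $T$ with $\ell(\x(T))\ge0$, it lies on $M_+$ or at the apex. You have not shown that it lies on the usable part; on the \NUP{} the pursuer can hold it. Your bound $\dot\ell\ge\nu-1$ on $M_+$ is also wrong. From~\eqref{eq:reduced}, under the outward heading, $\dot\ell=\nu+\omega_p\rho+v_p\sin\alpha\ge\rho_B-\rho$, which vanishes at $B_+$.

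The paper closes this by rigidity. Since $h(0)=0$, $h$ is nondecreasing and $h(T)=\ell(\x(T))\le0$ in $K$, we get $h\equiv0$, so $\dot h=0$ a.e. Because $S>0$ and $\psi>0$ except at the apex, this forces $\Delta_v=\Delta_\omega=0$ a.e. The trajectory is therefore the primary itself and ends at the usable point $\rho_f(\sin\alpha,\cos\alpha)^\top$. There the outward speed is at least $g=\rho_B-\rho_f>0$ against every pursuer control, so strict exit times occur arbitrarily soon after $T$.
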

\begin{proof}
Let $\x^0(t)=\x(\rho_f,T-t)$ be the play against $(-1,-1)$, and write $\Delta_v=v_p+1\ge0$, $\Delta_\omega=\omega_p+1\ge0$, $\mathbf e=\x-\x^0$. Then $\dot{\mathbf e}=A_0\mathbf e+(\Delta_\omega y,\ -\Delta_v-\Delta_\omega x)^\top$ with $A_0=\bigl(\begin{smallmatrix}0&-1\\1&0\end{smallmatrix}\bigr)$, so
\[
x(t)-x^0(t)=\int_0^t\bigl[\cos(t-r)\Delta_\omega y+\sin(t-r)(\Delta_v+\Delta_\omega x)\bigr]dr .
\]
The kernels are nonnegative because $T<\pi/2$ (Lemma~\ref{lem:retained}). While the state is in $K$, $y\ge0$, so $x\ge x^0-2\int_0^t x^-\,dr$ with $x^-=\max(-x,0)$; since $x^0\ge0$, Gr\"onwall's inequality gives $x^-=0$, and then $x\ge x^0\ge0$. Let $\theta(t)=\alpha+T-t$ and $\psi=x\sin\theta+y\cos\theta$. While $\theta<\pi/2$, $\psi\ge0$, with equality only at the apex. While $\theta\ge\pi/2$, which happens first and only if $b>1$, $h=(x+1)\sin\theta+y\cos\theta$ satisfies $\dot h=-\Delta_v\cos\theta+\Delta_\omega(y\sin\theta-x\cos\theta)\ge0$ and $h(0)=b$, so $\psi=h-\sin\theta\ge b-1>0$. Now let $L(t)=G(T-t;\x(t))$. Differentiating~\eqref{eq:Gscalar} along~\eqref{eq:reduced} gives
\begin{equation}\label{eq:Lcert}
\dot L=\Delta_v\sin\theta+\Delta_\omega\psi\ge0 .
\end{equation}
Since $L(0)=0$ and $L(T)=x(T)\cos\alpha-y(T)\sin\alpha$, the state is on or beyond the line of $M_+$ at $T$ unless it has left $K$ before. If it is still in $K$ at $T$, then $L(T)=0$, so $\dot L=0$ a.e., which forces $\Delta_v=\Delta_\omega=0$ a.e., and the state is at the usable point $\rho_f(\sin\alpha,\cos\alpha)$, where the outward speed under $\nu(\cos\alpha,-\sin\alpha)$ is at least $g>0$ against every pursuer control; so strict exit points occur arbitrarily soon after $T$.
\end{proof}

The same certificate gives the evader's guarantee at the cusps. From $P_c$, play the heading of Lemma~\ref{lem:commit} with $T=\tau_c+\delta$, $0<\delta<\pi/2-\theta_c$: the play against $(-1,-1)$ stays, for small $\delta$, near the barrier arc from $P_c$ to $B_+$, where $x\ge x(P_c)>0$, so the comparison applies and $\psi>0$ because $\alpha+T<\pi/2$; by Lemma~\ref{lem:barG}, $L(0)=G(\tau_c+\delta;P_c)=\nu(\delta-\sin\delta)>0$, so $L(T)>0$ and the state has left $K$ before $T$. Hence the evader guarantees $\tau_c+\varepsilon$ from $P_{c\pm}$ for every $\varepsilon>0$.

\paragraph{The core and the auxiliary envelope} Let $\Omega_0$ be the open region of $K$ swept by the primaries with $0<\rho_f<\rho_E$ before their axis contacts, with their mirror images, the axis below $E$, the segments of $\UP_\pm$ with $\rho<\rho_E$ and the apex, as in Section~\ref{sec:F1}; its boundary $\Gamma_E$ consists of the $\rho_E$-primaries from $\UP_\pm$ to $E$. On these arcs $q$ may change sign, since $\tau_*$ may exceed $\tau_c$, but $x_\tau/S=-q-g\cot\theta$ increases ($\theta<\pi/2$) and is negative at the axis contact, so $x_\tau<0$, the arcs are graphs over $x$ ordered by $\rho_f$, and $\Omega_0$ carries a single primary value $P_0$, locally Lipschitz up to $\Gamma_E$ and extended evenly. For $\x=\x(\rho_f,T)\in\Omega_0$ and $0<\eta\le T$,
\[
G(T-\eta;\x)=-g\eta-b(\eta-\sin\eta)-q(T)(1-\cos\eta)<0,
\]
directly if $q(T)\ge0$, and otherwise because $x_\tau(T)<0$ gives $-q(T)<g\cot(\alpha+T)$ and $\cot(\alpha+T)(1-\cos\eta)<\eta$; so $P_0$ is the earliest root of~\eqref{eq:Gscalar} and every other admissible primary through $\Omega_0$ takes longer. On $\Gamma_E$, $P_0\le\Wt$, with equality only at $E$, by the sign of $F_\tau$ in Proposition~\ref{prop:F1}(ii). Moreover $\Wt\ge m>0$ on $K\setminus\Omega_0$: $\Wt\to\infty$ at infinity, $\partial_y\Wt=\cos\sigma/(\nu-1)>0$ excludes interior minima, $\Wt\ge W_E$ on the axis above $E$, $\Wt$ increases along $M_+$ beyond $\rho_E$, and $\Wt>P_0\ge0$ on $\Gamma_E\setminus\{E\}$. Let $p(\x)$ be the smallest time of the retained primaries through $\x$ (Lemma~\ref{lem:retained}), together with the continuation of the primary $\rho_a$ beyond its axis tangency, with $p=+\infty$ if there is none, and set
\begin{equation}\label{eq:auxV}
V=P_0\ \text{on}\ \Omega_0,\qquad V=\min(p,\Wt)\ \text{on}\ K\setminus(\Omega_0\cup\{P_{c\pm}\}),\qquad V(P_{c\pm})=\tau_c,
\end{equation}
with $V=0$ on $\UP\cup\{\mathbf 0\}$. No condition $s\ge y_E$ is imposed on $\Wt$.

\begin{proposition}[Evader's bound]\label{prop:auxub}
From every state the evader guarantees $V+\varepsilon$ for every $\varepsilon>0$, and $V$ itself except at the cusps.
\end{proposition}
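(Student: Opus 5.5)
The proof splits according to which term in the definition~\eqref{eq:auxV} of $V$ attains the minimum at the initial state $\x_0$. On $\UP\cup\{\mathbf 0\}$ there is nothing to prove. At the cusps, the certificate after Lemma~\ref{lem:commit} already gives $\tau_c+\varepsilon$. On $\Omega_0$, $V=P_0$ is the time of a retained primary with label below $\rho_E$. By Lemma~\ref{lem:retained}(c) this primary is retained up to its axis contact, since $\rho_E<\rho_a$. Lemma~\ref{lem:commit} then shows that the committed heading guarantees $t_f\le P_0(\x_0)$ against every pursuer control. The same argument applies at any state of $K\setminus(\Omega_0\cup\{P_{c\pm}\})$ where $V=p(\x_0)<\infty$: the minimizing primary is retained, possibly folded, and Lemma~\ref{lem:commit} applies verbatim. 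One point needs care. If the minimum is attained by the continuation of the $\rho_a$-primary beyond its tangency, that continuation has $x\ge0$ and stays in the right half, so the comparison argument of Lemma~\ref{lem:commit} still applies. Alternatively, Lemma~\ref{lem:axiscomp} gives $\Wt<\tau$ there, so the tributary term is smaller and this case never occurs.

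The remaining case is $V(\x_0)=\Wt(\x_0)<p(\x_0)$ outside $\Omega_0$. Here no condition $s\ge y_E$ is imposed, but \eqref{eq:WTdot} holds on all of $\{y>0\}$, so it does not matter whether the tributary is genuine. The evader plays the Lipschitz feedback $\mathbf u_T$, which gives $\Wt(\x(t))\le\Wt(\x_0)-t$ while $\x(t)\in K$. Exit through $\partial K$ only helps. Since $\Wt\ge m>0$ on $K\setminus\Omega_0$, the state must enter $\Omega_0$ by time $\Wt(\x_0)-m$ at the latest, unless it has left $K$ first. It enters $\Omega_0$ across $\Gamma_E$, because $\Gamma_E$ is the relative boundary of $\Omega_0$. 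At the first hitting time $t_1$ of $\Gamma_E$ (a closed set, so $t_1$ is well defined for each pursuer control), the evader switches to the committed heading of the $\rho_E$-primary through $\x(t_1)$. This gives $t_f\le t_1+P_0(\x(t_1))\le t_1+\Wt(\x(t_1))\le\Wt(\x_0)$, using $P_0\le\Wt$ on $\Gamma_E$.

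The main obstacle is this switch. The committed heading is open-loop once chosen and is chosen at $t_1$, so the resulting evader strategy must be checked to be nonanticipative. It is, because $t_1$ is a hitting time of the unique closed-loop solution. A second difficulty is that the $\rho_E$-arcs may have $q<0$ near $E$ when $\tau_*>\tau_c$. Lemma~\ref{lem:commit} still covers them, since they are retained primaries with $T<\pi/2$. However, a point of $\Gamma_E$ near $E$ has value close to $W_E$ on both sides, so no slack is needed there. The $\varepsilon$ is needed only when the hitting is approximated, for instance if one prefers to switch on a compact neighbourhood of $\Gamma_E$ as in the sets $A_\varepsilon$ of Section~\ref{sec:F1}, or at the cusps. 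Hence $V$ is guaranteed exactly except at $P_{c\pm}$, where only $V+\varepsilon$ is.
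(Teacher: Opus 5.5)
Your proposal is correct and follows the paper's proof. As there, you use the committed heading of Lemma~\ref{lem:commit} when $V$ is a retained-primary time and the cusp certificate at $P_{c\pm}$. Where $V=\Wt$ outside $\Omega_0$, you play $\mathbf u_T$ until the first hitting of $\Gamma_E$, which $\Wt\ge m$ forces, and then commit to the core primary, using $P_0\le\Wt$ on $\Gamma_E$. Your extra remarks on the $\rho_a$-continuation and on nonanticipativity of the switch are consistent with the paper.
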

\begin{proof}
If $V$ is a primary time, use Lemma~\ref{lem:commit}; at the cusps, use the preceding paragraph. If $V=\Wt$ outside $\Omega_0$, play $\mathbf u_T$ until the state leaves $K$ or first meets $\Gamma_E$, at $t_1$. Before $t_1$, $\Wt(\x(t))\le\Wt(\x_0)-t$ by~\eqref{eq:WTdot}, and $\Wt\ge m$ outside $\Omega_0$, so $t_1\le\Wt(\x_0)$. At $t_1$ commit to the core primary through $\x(t_1)$: $t_f\le t_1+P_0(\x(t_1))\le t_1+\Wt(\x(t_1))\le\Wt(\x_0)$.
\end{proof}

\begin{lemma}[No downward edge]\label{lem:noedge}
$V$ is locally Lipschitz on $K$ away from the closed barrier arcs $\Bbar_\pm[0,\tau_c]$ and continuous at $P_{c\pm}$; across the barriers it jumps up from the lens trace $\tau$ to the far-side value, including at $B_\pm$.
\end{lemma}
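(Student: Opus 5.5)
The plan is to view $V$ locally as a minimum of finitely many locally Lipschitz branches. Away from the closed barrier arcs this gives the Lipschitz claim. At the barrier and at $B_\pm$ we read off the one-sided traces, and at the cusps we use the certificate $G(\tau_c+\delta;P_c)=\nu(\delta-\sin\delta)$.

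\emph{Branches.} Each branch is locally Lipschitz on its closed domain. On $\Omega_0$, $V=P_0$, which is locally Lipschitz up to $\Gamma_E$; on $\Gamma_E$ we have $P_0\le\Wt$. Off $\Omega_0$, $\Wt$ is $C^1$ on $\{y>0\}$ by Proposition~\ref{prop:trib}. The primary part $p$ is the minimum over the retained roots of~\eqref{eq:Gscalar}, with the test of Lemma~\ref{lem:retained}(c).

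Near a point $\x_0$ off the barrier arcs, I would check three things.
\begin{itemize}
\item Every root $T$ with $G(T;\x_0)=0$ has $G_\tau=\rho_B-\rho_f\ne0$. If $\rho_f=\rho_B$, then $\x_0$ lies on $\Bbar_+$, which is excluded. So each root continues as a smooth function $T(\x)$, with label $\rho_f(\x)$.
\item Roots cannot appear from nowhere. A root with $T<\pi/2$ is bounded, so a limit of roots is a root. A new root can only enter through $\rho_f\to\rho_B$ or $\rho_f\to0$, or through a retention endpoint.
\item The retention constraint can switch on or off near $\x_0$ in three ways: at $T=\tau_a$ on the axis, at a return to $M_+$, or at $\rho_f=\rho_a$.
\end{itemize}
In each of these cases the value on both sides is continuous. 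At the axis, Lemma~\ref{lem:axiscomp} shows that axis primaries are never below $\Wt$ except at $E$, and the tangent continuation of $\rho_a$ is admitted, so the minimum does not drop when a branch is lost there. A return to $M_+$ gives a root with $T$ equal to the exit time; it is dropped only when the point itself leaves $K$. Hence $V$ is locally the minimum of finitely many $C^1$ functions, some defined only on closed pieces, and so it is Lipschitz by the gluing argument of Lemma~\ref{lem:mono}.

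\emph{Barrier and endpoints.} Fix $\mathbf b=\Bbar_+(\tau_0)$ with $0<\tau_0<\tau_c$. By Lemma~\ref{lem:Mfun}, $G(\cdot;\x)$ has a nondegenerate local maximum $M(\x)$ near $\tau_0$. On the lens side, $M>0$ gives two nearby roots, and the smaller one tends to $\tau_0$; this is the lens trace $\tau$, and it is retained since $q>0$. On the far side, $M<0$ and there is no root near $\tau_0$. So the far-side limit is the minimum of $\Wt$ and of the primaries with roots away from $\tau_0$. That limit exceeds $\tau_0$: by the F2b dichotomy $\Wt>\tau$ on $\Bbar_+[0,\tau_c]$, and the remaining roots are either larger or, if smaller, would have to belong to primaries crossing the barrier. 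I would exclude such roots by the ordering of arcs in Lemma~\ref{lem:retained} and the semipermeability of Proposition~\ref{prop:barfast}(iv). This gives an upward jump. At $B_\pm$ the lens trace tends to $0$ while $\Wt(B_+)>0$ by the covering on the $\NUP$,~\eqref{eq:NUPid}, so the jump is again upward.

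\emph{Cusp and main obstacle.} At $P_c$ we have $G(\tau_c+\delta;P_c)=\nu(\delta-\sin\delta)$, with its only zero at $\delta=0$. By continuity in $\x$, every root near $P_c$ with $T$ in a compact range lies near $\tau_c$. Roots far from $\tau_c$ stay bounded away from zero, because $G$ is bounded away from $0$ there, so they give only values near $\tau_c$ or none. Since $\Wt$ exceeds $\tau_c$ by a fixed margin near $P_c$, $V\to\tau_c=V(P_c)$. The step I expect to be hardest is the global exclusion above: showing that no retained primary with a root smaller than $\tau_0$ reaches the far side, so that the jump really goes up. I would try first to derive this from the monotonicity $\dot L\ge0$ in~\eqref{eq:Lcert}, applied with the pursuer fixed at $(-1,-1)$.
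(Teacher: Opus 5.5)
Your plan is the paper's: treat $V$ locally as a finite minimum of $\Wt$ and of smooth root branches of~\eqref{eq:Gscalar}, and check every place where a branch can be born or die. But you dismiss one such place with an incorrect claim. A root with $G_\tau=\rho_B-\rho_f=0$ places $\x_0$ on the $\BUP$ characteristic at some retro-time, not necessarily on the excluded arcs. The curve $\Bbar_+$ continues past its cusp: the points $\mathbf b=\Bbar_+(\tau_c+d)$, $d>0$, form a curve in the open right half of $K$ from $P_c$ back to the $\UP$, and on it the lemma asserts local Lipschitz continuity. There a pair of roots of $G$ merges. Its up-crossing member has label just below $\rho_B$, hence in $(\rho_a,\rho_B)$, so it is a retained folded primary that exists on only one side of the curve. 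If it carried the minimum, $V$ would jump across the curve.

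The missing step, case (iv) of the paper's proof, is to exhibit an earlier retained branch. By Lemma~\ref{lem:barG}, $G(\tau_c;\mathbf b)=\nu(\sin d-d\cos d)>0\ge G(0;\mathbf b)$, so there is an up-crossing root $T_0<\tau_c$ (unless $\mathbf b$ is on the $\UP$, where the branch of time zero applies). This root is simple: at a root $G_{TT}=-(G+q)=-q<0$ before $\tau_c$, so a root with $g=0$ there is a local maximum, not an up-crossing. Its label is positive, because $x\ge0$ gives $\rho_f+\sin\alpha\ge a(T_0)>a(0)=\sin\alpha$ in the notation of Lemma~\ref{lem:retained}. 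It is retained since $T_0<\tau_c<\tau_a$. Only then does the merging pair lie above $V$ nearby.

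At the cusp you show that every primary value near $P_c$ is near $\tau_c$. Together with $\Wt>\tau_c$ this gives $\liminf V\ge\tau_c$. Your own alternative ``or none'' blocks $\limsup V\le\tau_c$: at a nearby state with no retained root, $V=\Wt$, which stays a fixed margin above $\tau_c$. You need existence, which goes as follows.
\begin{enumerate}
\item The signs $G(\tau_c-\eta;P_c)<0<G(\tau_c+\eta;P_c)$ persist near $P_c$, so every nearby state has a sign-changing root in $(\tau_c-\eta,\tau_c+\eta)$.
\item A sign-changing root with $G_T=0$ needs $G_{TT}=-q=0$, i.e.\ $T=\tau_c$ and $\rho_f=\rho_B$, which singles out $P_c$ itself.
\item So every other nearby state has a simple up-crossing root near $\tau_c$, with label in $(\rho_a,\rho_B)$, retained by Lemma~\ref{lem:retained}(c).
\end{enumerate}

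The step you expect to be hardest on the barrier is local and immediate. At $\mathbf b=\Bbar_+(\tau_0)$ and for $\delta=T-\tau_0<0$, Lemma~\ref{lem:barG} gives $G(T;\mathbf b)=\nu(\delta-\sin\delta)-q(\tau_0)(1-\cos\delta)<0$. Hence $G(\cdot\,;\mathbf b)$ is bounded away from $0$ on $[0,\tau_0-\epsilon]$, and by uniform continuity in $\x$ no nearby state has a root there. Neither the ordering of arcs nor $\dot L\ge0$ is needed.
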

\begin{proof}
At a fixed state with $x\ge0$, $G_T=\nu-r\sin(\alpha+T+\beta)$, with $r=\|(x+1,y)\|$ and $\tan\beta=y/(x+1)$, has at most two zeros on $(0,\pi/2)$, so there are finitely many roots, and each root with $g>0$ continues smoothly. A branch can therefore appear or disappear only on the images of the edges of the parameter domain of Lemma~\ref{lem:retained}, or where two roots merge, $g=0$, i.e.\ on the $\BUP$ curve. (i) Launch points and ray returns lie on $\partial K$. (ii) Axis contacts: below $E$ the core branch meets its mirror image; above $E$, Lemma~\ref{lem:axiscomp} gives $\Wt<\tau_D$, so the ending branch is not the minimum. (iii) The continuation of the primary $\rho_a$: there $\Wt<\tau$ outside $\Omega_0$ by Lemma~\ref{lem:axiscomp}, and inside $\Omega_0$ the time exceeds $\tau_a>\tau_*\ge P_0$. (iv) The $\BUP$ curve beyond the cusp, $\mathbf b=\Bbar_+(\tau_c+d)$, $d>0$: Lemma~\ref{lem:barG} gives $G(\tau_c;\mathbf b)=\nu(\sin d-d\cos d)>0\ge G(0;\mathbf b)$, so there is an earlier up-crossing root $T_0<\tau_c$ (or $\mathbf b$ is on the $\UP$, where the branch of time zero applies); it has $g>0$, since a root with $g=0$ before $\tau_c$ is a local maximum of $G$, and $\rho_f>0$, because $a$ increases on $[0,\tau_c]$ from $a(0)=\sin\alpha$ and $x\ge0$ gives $b\ge a(T_0)>\sin\alpha$; as $T_0<\tau_c<\tau_a$, it is retained by Lemma~\ref{lem:retained}(c). So the pair of roots that merges on the continuation lies above $V$ nearby. (v) The barrier, $\mathbf b=\Bbar_+(\tau_0)$, $0<\tau_0<\tau_c$: $G(T;\mathbf b)=\nu(T-\tau_0-\sin(T-\tau_0))-q(\tau_0)(1-\cos(T-\tau_0))<0$ for $T<\tau_0$, so every other root is later than $\tau_0$, and $\Wt(\mathbf b)>\tau_0$, because $F(\rho_B,\cdot)$ decreases strictly on $[0,\tau_c]$ and $F(\rho_B,\tau_c)\ge0$; the lens branch exists only where $M>0$ (Lemma~\ref{lem:Mfun}), and the jump is upward. At $B_+$ the same holds with the barrier continued by the $\NUP$. On $\Gamma_E$, $P_0$ and the outside branches agree with~\eqref{eq:auxV} and are continuous. A finite minimum of locally Lipschitz branches is locally Lipschitz. At $P_c$, $G(T;P_c)=\nu((T-\tau_c)-\sin(T-\tau_c))$ vanishes only at $\tau_c$, so all roots in $[0,\pi/2]$ at states near $P_c$ tend to $\tau_c$, and their labels tend to $\rho_B$. Fix $\eta>0$ with $\tau_c+\eta<\pi/2$; the signs $G(\tau_c-\eta;P_c)<0<G(\tau_c+\eta;P_c)$ persist near $P_c$, so each nearby state has a sign-changing root in $(\tau_c-\eta,\tau_c+\eta)$. At a root with $G_T=0$, $G_{TT}=-(G+q)=-q$, so a sign-changing multiple root needs $G=G_T=G_{TT}=0$, which holds only at $P_c$; hence every nearby state other than $P_c$ has an up-crossing root with $g>0$. Its label lies in $(\rho_a,\rho_B)$ near $P_c$, so it is retained by Lemma~\ref{lem:retained}(c). With $\Wt(P_c)\ge\tau_c$ this gives $V\to\tau_c$.
\end{proof}

\begin{proposition}[Pursuer's bound]\label{prop:relayF2b}
Along every relay solution, $V(\x(t))+t$ is nondecreasing until exit; hence the relay guarantees $t_f\ge V(\x_0)$.
\end{proposition}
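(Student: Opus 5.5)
We argue as in Lemma~\ref{lem:mono}, now with $V$ of~\eqref{eq:auxV} in place of $W$. It suffices to show that $\varphi(t)=V(\x(t))+t$ has the local property used there: at every time there is a right neighbourhood on which $\varphi\ge\varphi(t)$ and a left neighbourhood on which $\varphi\le\varphi(t)$. We then show that $V\to0$ at exit. The verification splits by the position of $\x(t)$.

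\emph{Away from the closed barrier arcs.} By Lemma~\ref{lem:noedge}, $V$ is locally Lipschitz there, so $V\circ\x$ is absolutely continuous. Locally, $V$ is a finite minimum of branches: $P_0$, the finitely many retained primary roots of~\eqref{eq:Gscalar} continued smoothly, and $\Wt$. It is therefore enough to check $\frac{d}{dt}(\text{branch})\ge-1$ a.e. on the set of times where that branch is active.
\begin{itemize}
\item For $\Wt$ this is Proposition~\ref{prop:trib}(a), valid on all of $\{y>0\}$. No condition $s\ge y_E$ is needed, which is why the auxiliary envelope drops it.
\item For a primary branch off the axis, the relay equals the saddle control $(-1,-1)$ on that side. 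The branch satisfies $H^*=0$ with $\psi>0$ (Lemma~\ref{lem:retained}(a)), so $\Phi_\omega<0$ and $\lambda_y>0$, and the Isaacs identity gives rate $\ge-1$ against every $\mathbf u$.
\item On the axis during sliding, only $P_0$ below $E$ (with $\lambda_y\le1/(\nu-1)$ by Proposition~\ref{prop:E}) and $\Wt$ can be active. This follows from Lemma~\ref{lem:axiscomp} and item (ii) of the proof of Lemma~\ref{lem:noedge}. The computation $\lambda_y(u_y+1)\ge-1$ of Lemma~\ref{lem:mono} applies.
\end{itemize}
Gluing over the closed active sets is done exactly as in Lemma~\ref{lem:mono}.

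\emph{On the barrier $\Bbar_+(0,\tau_c)$ and at $B_+$.} Near $\mathbf b=\Bbar_+(\tau_0)$ we write $V=\min(\widehat p,V_{\rm out})$. Here $\widehat p$ is the lens branch, set to $+\infty$ where $M\le0$ in the notation of Lemma~\ref{lem:Mfun}, and $V_{\rm out}$ is the minimum of the far-side branches, which is locally Lipschitz and continuous across the barrier.
\begin{itemize}
\item $V_{\rm out}+t$ has the local property by the previous step.
\item $\widehat p+t$ has it because $M(\x(t))$ is nonincreasing under $(-1,-1)$ (Lemma~\ref{lem:Mfun}). The state therefore cannot pass from $\{M\le0\}$ into the lens, where $\widehat p$ is finite, so $\widehat p$ can only jump up to $+\infty$.
\item The relay plays $(-1,-1)$ there because $x(\mathbf b)>0$.
\end{itemize}
A minimum of two functions with the local property has it. At $B_+$ we use the same argument with the $\NUP$ continuing the curve, together with Lemma~\ref{lem:inv} to exclude exits from $\{D\ge0\}$.

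\emph{At the cusps.} This is the main obstacle: $U$ in Lemma~\ref{lem:Mfun} shrinks to a point and no curvature bound is available. We use continuity of $V$ at $P_c$ (Lemma~\ref{lem:noedge}) and the support identity~\eqref{eq:Gsupport}. If $\x(t)=P_c$, the right-neighbourhood inequality follows from the certificate after~\eqref{eq:Gsupport}: no state with $s-t+V<\tau_c$ is reachable in time $s-t$ under $(-1,-1)$. The far-side values are at least $\tau_c$ minus $o(1)$, and the lens values are bounded below by the certificate. The left-neighbourhood inequality follows from continuity together with the previous steps applied on the punctured neighbourhood, since the barrier crossings there are already excluded pointwise. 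Sliding on the axis does not occur near $P_c$, because $x(P_c)>0$.

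\emph{Exit.} The relay solution cannot leave through the $\NUP$ (Lemma~3(a) of Part~I) or near $B_\pm$ from the far side. It therefore exits through $\UP\cup\{\mathbf 0\}$ from a region where a primary or $P_0$ branch tends to $0$, so $V\to0$. Then $t_f\ge\lim_{t\uparrow t_f}\varphi(t)\ge\varphi(0)=V(\x_0)$.
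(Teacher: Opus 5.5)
Your proof is correct and follows the paper's argument step for step: unit-rate decrease of each branch under the relay (using $\psi>0$ on retained primaries, folded ones included), gluing via Lemma~\ref{lem:noedge}, the minimum with the lens branch extended by $+\infty$ and Lemma~\ref{lem:Mfun} at the barrier, and Lemma~\ref{lem:inv} at $B_\pm$. The only deviation is at the cusp, where the paper needs no certificate. There $\varphi=\tau_c+t$ on the closed set of times with $\x(t)=P_{c\pm}$, $\varphi$ is nondecreasing on each complementary interval, and continuity of $V$ at $P_{c\pm}$ joins these pieces, which gives both one-sided inequalities at once. Your use of~\eqref{eq:Gsupport} for the right neighbourhood is therefore redundant, though valid once stated precisely. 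The certificate bounds every right-primary branch, folded ones included, while $\Wt$ is handled by Proposition~\ref{prop:trib}(a). A bound of the form $\tau_c-o(1)$ would not suffice on its own.
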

\begin{proof}
On every retained primary $S>0$ and $\psi>0$ (Lemma~\ref{lem:retained}), so the Hamiltonian computation of Section~\ref{sec:F1} applies on folded arcs too: each primary branch decreases at most at unit rate under the relay, and so does $\Wt$ by Proposition~\ref{prop:trib}(a). On the axis, $V(0,y)$ is $P_0$ below $E$ and $W_U$ above, with slope at most $1/(\nu-1)$ (Proposition~\ref{prop:E}), and a sliding relay solution has $\dot y\ge-(\nu-1)$. With Lemma~\ref{lem:noedge}, the gluing argument of Lemma~\ref{lem:mono}(i) gives the local property away from the barriers and the cusps. Near a barrier point before the cusp, write $V$ as the minimum of the far-side branches and the lens branch extended by $+\infty$; by Lemma~\ref{lem:Mfun} the state cannot enter the lens from $M\le0$, so the argument of Lemma~\ref{lem:mono}(ii) applies, and near $B_\pm$ it applies verbatim. At the cusps $V$ is continuous, and on the closed set of times at which $\x(t)=P_{c\pm}$, $V(\x(t))+t=\tau_c+t$; continuity joins the intervals between such times. Exits through the $\NUP$ or at $B_\pm$ are excluded as in Section~\ref{sec:F1}, and at an exit through $\UP\cup\{\mathbf 0\}$, $V\to0$.
\end{proof}

\begin{proposition}[Covering]\label{prop:coverF2b}
If $\x\notin\Omega_0$ and $V(\x)=\Wt(\x)$, then $s(\x)\ge y_E$. Consequently $V=W$ on $K$.
\end{proposition}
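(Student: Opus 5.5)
The plan is to argue by contradiction. Rather than run a minimum principle for $s$ over the set $\{V=\Wt\}$, I would compare the two guarantees already proved for $V$: the relay bound of Proposition~\ref{prop:relayF2b} and the evader's bound of Proposition~\ref{prop:auxub}. Suppose $\x_0\in K\setminus\Omega_0$ lies in the right half, with $V(\x_0)=\Wt(\x_0)$ and $s(\x_0)<y_E$. On the axis $s=y$, and the axis below $E$ belongs to $\Omega_0$, so $\x_0$ is off the axis and $x_0>0$. Let $(s,\sigma)$ be its tributary parameters from~\eqref{eq:tribinv}.

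First I would follow the tributary in forward time. Against $(-1,-1)$ the evader's control $\mathbf u_T$ drives the state exactly along the tributary through $\x_0$. This path stays in $\{x>0\}$ until it reaches the axis at $(0,s)$ after time $\sigma$. There the relay coincides with $(-1,-1)$, so this play is a relay play, and $\Wt$ decreases exactly at unit rate, by~\eqref{eq:WTdot} with $\Delta_v=\Delta_\omega=0$. I then check that the path leaves $K$ only through $\Gamma_E$ or through $\UP_+$. The endpoint $(0,s)$, $s<y_E$, lies in $\Omega_0$ and $\x_0\notin\Omega_0$, so the path meets the relative boundary $\Gamma_E$ at a first time $t_1\le\sigma$, at a point $\mathbf z$, unless it exits through $\UP_+$ first. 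The possibility that it leaves $K$ through the $\NUP$ is excluded by the pursuer's side, since the relay keeps the state in $K$ there. An exit through the $\UP$ at time $t\le\sigma$ would already give $t_f<\Wt(\x_0)$, which is the contradiction sought below. Since a tributary touches the axis only at $\sigma=0$, I get $\mathbf z\ne E$. The remark before~\eqref{eq:auxV} then gives $P_0(\mathbf z)<\Wt(\mathbf z)=\Wt(\x_0)-t_1$.

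Second, I would have the evader switch at $t_1$ to the committed core heading of Lemma~\ref{lem:commit} from $\mathbf z$. This guarantees $t_f\le t_1+P_0(\mathbf z)<\Wt(\x_0)$ against every pursuer control, in particular against the relay. Proposition~\ref{prop:relayF2b} gives $t_f\ge V(\x_0)$ against the relay. Hence $V(\x_0)<\Wt(\x_0)$, a contradiction, so $s\ge y_E$ wherever the tributary branch is selected. The main technical point is to justify the two middle claims: that the forward path reaches $\Gamma_E$ rather than touching the apex, and that the first boundary point is not $E$. For both I would use the explicit form~\eqref{eq:trib}, where $x>0$ for $0<\sigma'\le\sigma$.

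Finally, for $V=W$ I would compare branch by branch. On $\Omega_0$ both $V$ and $W$ equal $P_0$, the earliest root. Outside $\Omega_0$, $V$ is the minimum over retained primaries, the $\rho_a$ continuation, and unrestricted $\Wt$. The continuation is never the minimum, by Lemma~\ref{lem:axiscomp} and case~(iii) of Lemma~\ref{lem:noedge}. By the first part, whenever $\Wt$ attains the minimum it is carried by a genuine tributary. Adding the non-genuine tributaries to the competing branches cannot lower the minimum, so $V$ equals the smallest value over the admissible branches that define $W$. At the cusps, and on the barriers where the far-side value is taken, the conventions agree.
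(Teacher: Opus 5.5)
Your argument for $s\ge y_E$ is essentially the paper's, run by contradiction. The paper shows that $V=\Wt$ along the tributary play against the relay, and concludes that this play cannot meet $\Gamma_E\setminus\{E\}$, where $P_0<\Wt$. You reach the same contradiction by committing to the core heading at the crossing point $\mathbf z$ and comparing with the relay bound $t_f\ge V(\x_0)$.

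Two details need repair, and both are fixed by the fact, stated before~\eqref{eq:auxV}, that $\Wt\ge m>0$ on $K\setminus\Omega_0$.
\begin{itemize}
\item For $s\le0$ the point $(0,s)$ is not in $\Omega_0$; for $s<0$ it is not even in $K$. So "the endpoint lies in $\Omega_0$" fails.
\item The bound $t\le\sigma$ alone does not give $t<\Wt(\x_0)$. Indeed $\Wt(\x_0)-\sigma=W_E-(y_E-s)/(\nu-1)$ has no definite sign when $s<y_E$.
\end{itemize}
Both are settled because $\Wt(\x(t))=\Wt(\x_0)-t$ along the play. The state must therefore leave $K\setminus\Omega_0$ by time $\Wt(\x_0)-m$. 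Any exit from $K$ before meeting $\Gamma_E$ then occurs at $t_f\le\Wt(\x_0)-m<V(\x_0)$, which contradicts the relay bound directly. This also disposes of the $\NUP$ and $B_\pm$ without separate arguments.

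The genuine gap is in the ``consequently'' part, on $\Omega_0$. You assert $W=P_0$ there because $P_0$ is the earliest primary root. But $W$ is the lower envelope of retained primaries \emph{and} genuine tributaries. Nothing you prove excludes a genuine tributary ($s\ge y_E$) through a point of $\Omega_0$ carrying $\Wt<P_0$. The sheet $\{s\ge y_E\}$ is not a priori disjoint from $\Omega_0$, and your first part assumes $\x\notin\Omega_0$, so it says nothing there.

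The paper closes this with its converse step. From any point with $s\ge y_E$, the evader plays $\mathbf u_T$ against the relay down the tributary to $(0,s)$. It then slides down $\U$ to $E$ and commits there to the $\rho_E$-primary (Lemma~\ref{lem:commit} with $T=\tau_*$). This play exits by time $W_E+\sigma+(s-y_E)/(\nu-1)=\Wt$. Proposition~\ref{prop:relayF2b} then gives $V\le\Wt$ at every point of the sheet, including points of $\Omega_0$. With this step added, which uses the same ingredients you already combine, your branch-by-branch comparison goes through as written.
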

\begin{proof}
Let the evader play $\mathbf u_T$ against the relay from $\x$. Then $\Wt(\x(t))+t=\Wt(\x)=V(\x)$ by~\eqref{eq:WTdot}, $V(\x(t))+t\ge V(\x)$ by Proposition~\ref{prop:relayF2b}, and $V\le\Wt$ outside $\Omega_0$, so $V=\Wt$ along the play until it meets $\Gamma_E$ or leaves $K$, which happens in finite time because $\Wt\ge m$. It cannot meet $\Gamma_E\setminus\{E\}$, where $P_0<\Wt$, nor leave through $\UP$, where $V=0<\Wt$; the $\NUP$, $B_\pm$ and the apex are excluded as before. So the play reaches $E$, along the axis from above. Off the axis, $\dot R=-\nu$ and $\dot\sigma=-1$, so $s=R-\nu\sigma$ is constant until the first axis contact, whose height is $s$; that contact is not below $E$, since the axis below $E$ lies in $\Omega_0$. Hence $s\ge y_E$. Conversely, every retained primary has time at least $V$ (by definition, and on $\Omega_0$ by the comparison above), and every genuine tributary value is at least $V$, because its play against the relay, followed by the primary from $E$, exits at time $\Wt$, so Proposition~\ref{prop:relayF2b} gives $V\le\Wt$. The continuation of the primary $\rho_a$ is never selected (Lemma~\ref{lem:axiscomp}). So $V$ is the lower envelope of retained primaries and genuine tributaries, which is $W$.
\end{proof}

\begin{theorem}[Value in regime F2b]\label{thm:F2b}
In regime F2b and on its boundary $\alpha=\alpha_Z(\nu)$, $V_E=V_P=W$ at every state of $K$, including the barriers and the cusps. The relay guarantees $t_f\ge W$; the evader guarantees $W$, except at the cusps, where it guarantees $\tau_c+\varepsilon$ for every $\varepsilon>0$, by committing to a primary heading, or by the tributary feedback until $\Gamma_E$ followed by a committed core heading. Each player can guarantee $W$ to arbitrary precision by sampled state feedback.
\end{theorem}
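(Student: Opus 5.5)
The proof assembles the four ingredients already established for F2b. First, Proposition~\ref{prop:coverF2b} gives $V=W$ on $K$. The feedback bounds for $V$ then transfer directly to $W$.
\begin{itemize}
\item Proposition~\ref{prop:relayF2b} shows that the relay guarantees $t_f\ge W(\x_0)$ from every state, barriers and cusps included. The relay is a nonanticipative strategy \PI{Lemma~1}, so $W\le V_P$.
\item Proposition~\ref{prop:auxub} shows that the evader guarantees $W$ off the cusps and $\tau_c+\varepsilon$ at them. The tools are the committed heading of Lemma~\ref{lem:commit}, the certificate $G(\tau_c+\delta;P_c)>0$ at $P_{c\pm}$, and, where $W=\Wt$, the tributary feedback until $\Gamma_E$ followed by a committed core heading.
\end{itemize}
Both evader rules are nonanticipative. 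A committed heading is an open-loop control, trivially nonanticipative. The switching rule plays the Lipschitz feedback $\mathbf u_T$, whose closed loop is well posed for each pursuer control. It switches at the hitting time of the closed set $\Gamma_E$ and afterwards plays an open-loop control determined by the state at that time. This gives $V_E\le W+\varepsilon$ everywhere, hence $V_E\le W\le V_P$. The reverse inequalities come from the sampled counter-controls of Appendix~\ref{app:value}, which gives $V_E=V_P=W$.

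The equality case $\alpha=\alpha_Z(\nu)$, i.e.\ $\Wt(P_c)=\tau_c$, must be checked. Each proposition above uses only $\Wt(P_c)\ge\tau_c$, namely in the continuity of $V$ at $P_c$ in Lemma~\ref{lem:noedge} and the upward jump in part (v) of its proof. The strict decrease of $F(\rho_B,\cdot)$ on $[0,\tau_c]$ gives $\Wt>\tau$ on $\Bbar_+[0,\tau_c)$ even when $F(\rho_B,\tau_c)=0$. So the F2b arguments go through verbatim, with the far-side value near the cusp now possibly tributary rather than folded primary.

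The substantive remaining step is the last sentence of the theorem, that \emph{sampled state feedback} suffices. For the pursuer I would use the sampled relay estimates of Part~I \PI{Lemmas~4--6}. The relay switches only near the axis, and the barrier arcs $\Bbar_\pm[0,\tau_c]$ stay in $\{x\ge x(P_c)>0\}$. Hence near the barrier the sampled relay is the constant control $(-1,-1)$, and Lemma~\ref{lem:Mfun} applies without change. The error then accumulates only from sliding on the axis, where $V$ has slope at most $1/(\nu-1)$, and this error is $O(h)$.

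For the evader, the committed heading needs no sampling: it is a time-dependent open-loop control, which is trivially realizable by holding the heading over each sampling interval. Holding introduces an $O(h)$ deviation, and the certificate~\eqref{eq:Lcert} absorbs it. This works because $L$ starts at $0$ and is nondecreasing up to an $O(h)$ error, and the usable endpoint has outward speed $g>0$, so exit occurs within $O(h)/g$ time.

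The tributary phase is the main obstacle. There one needs a sampled version of the hitting of $\Gamma_E$, and I would reuse the separation argument of Lemma~\ref{lem:sep}. I would switch at the first sampling instant at which the state lies in an $\varepsilon$-sublevel neighbourhood $\{P_0\le\Wt+\varepsilon\}$ of $\Gamma_E$. Lipschitz continuity of $\Wt$ and $P_0$ near $\Gamma_E$ then gives a loss $O(h)+\varepsilon$. Uniform positivity $\Wt\ge m$ outside $\Omega_0$ bounds the duration of the phase.
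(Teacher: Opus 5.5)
Your assembly of the theorem is the paper's. Propositions~\ref{prop:auxub}, \ref{prop:relayF2b} and~\ref{prop:coverF2b} give $V_E\le W\le V_P$, and sampled strategies give the reverse inequalities (Lemma~\ref{lem:sampF2b}). Your check of $\alpha=\alpha_Z(\nu)$ matches the paper's. So does your sampled evader: sampled $\mathbf u_T$ until the first sampling instant in $\{P_0\le\Wt+\varepsilon/3\}$, which contains a $\delta$-band of $\Gamma_E$ that takes time at least $\delta/M>h$ to cross, then a committed core heading. The paper slightly extends the core chart so that this set is closed and the heading is defined at the switching state. The committed heading needs no sample-and-hold: an open-loop program fixed at a sampling instant pairs directly with nonanticipative strategies. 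Your $O(h)$ perturbation of the certificate~\eqref{eq:Lcert} is therefore unnecessary. As sketched it would also have to redo the comparison $x\ge x^0$ and the equality case at $T$.

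The step that does not hold up is the sampled relay. You place the only loss in ``sliding on the axis, where $V$ has slope at most $1/(\nu-1)$''. A sampled relay never slides, however: it chatters across the axis. Below $E$ the axis is the evader's dispersal line $\De$, where $V=W_P(|x|,y)$ has transverse slopes $\mp C/g\ne0$. After each crossing the pursuer turns the wrong way for up to $h$. During that time the active branch can fall faster than unit rate by up to $2|\Phi_\omega|\approx2Cy/g=O(1)$. The evader can force a crossing every sampling interval, for example with $u_x=0$. So a bound taken episode by episode gives $O(h)\cdot O(1/h)$, an $O(1)$ loss, not $O(h)$. The slope bound $\lambda_y\le1/(\nu-1)$ controls only the Filippov limit.

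A rate might be recovered by tracking $t+V(0,y)$ and treating the kink term $-(C/g)|x|$ as a telescoping $O(h)$ boundary term, with extra care near $E$, where primaries, tributaries and $\Dpt$ meet. You have not done this, and the sampled-relay lemma of Part~I is a contradiction argument, not such an estimate.

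The paper needs no rate. It argues by contradiction (Lemma~\ref{lem:sampP}, adapted in Lemma~\ref{lem:sampF2b}): sampled solutions that exit before $T<W(\x_0)$ converge to a relay solution, to which Proposition~\ref{prop:relayF2b} applies. The only delicate exit is at $B_\pm$. It is handled by Lemma~\ref{lem:inv} near $B_\pm$, and by noting that a maximal interval spent on the barrier begins either from the lens, with the lens trace as left limit, or at $P_c$, where $V$ is continuous with $V(P_c)=\tau_c$. Your observation that the sampled relay plays exactly $(-1,-1)$ near $\Bbar_+[0,\tau_c]$ is one ingredient of that argument, but by itself it does not handle the axis.
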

\begin{proof}
By Propositions~\ref{prop:auxub}, \ref{prop:relayF2b} and~\ref{prop:coverF2b}, $V_E\le W\le V_P$. The reverse inequalities follow from sampled strategies (Appendix~\ref{app:value}, Lemma~\ref{lem:sampF2b}).
\end{proof}

The committed heading is open-loop once chosen, so it needs no sampling, and it sidesteps the question of whether a folded branch persists under a feedback near the continuation of the barrier. Together with Lemma~\ref{lem:Mfun}, it also settles the cusp: the relay guarantees $\tau_c$ from $P_{c\pm}$ (Proposition~\ref{prop:relayF2b}), and the evader guarantees $\tau_c+\varepsilon$. The committed heading itself was integrated from $144$ retained primary states, $50$ of them folded, for $12$ parameter pairs, against six pursuers including one that minimizes $\dot L$ at each step, and from the cusps with fourth-order steps; every run left $K$ by $T$, within one integration step (\OR{S9}). Separately, as an empirical test of the synthesis rather than of the proved strategy, an evader that follows at every step the branch achieving the minimum was played from $17$ states around the cusp and the barrier for each of the three F2b examples, against greedy and adversarial pursuers; its overshoot stayed below $5\times10^{-4}$, and the relay was never beaten by more than $10^{-5}$ (\OR{S9}, Table~S15).

\paragraph{The second threshold} On $\alpha=\alpha_Z(\nu)$, $\Wt(P_c)=\tau_c$. The proofs above use the F2b inequality only through $\Wt>\tau$ on the barrier before the cusp, which also holds here because $F(\rho_B,\cdot)$ decreases strictly to $F(\rho_B,\tau_c)=0$, and through the continuity of $V$ at $P_c$, which needs only $\Wt(P_c)\ge\tau_c$; so Theorem~\ref{thm:F2b} includes this curve. The local picture at the cusp is different. As in the proof of Theorem~\ref{thm:F2a}(a), $\tau_*<\tau_c$, the junction points of $\Dpt^+$ tend to $P_c$, and integrating $ds/d\rho_f=(\nu-1)/R$ along them gives $s(P_c)>y_E$, so the tributary is genuine near $P_c$. Since $\nabla\Wt(P_c)^\top(\sin\theta_c,\cos\theta_c)^\top=R_c/(\nu(\nu-1))$ with $R_c=\sqrt{\nu^2-1}$, and $\Bbar_+(\tau_c-\epsilon)=P_c+\frac{\nu\epsilon^2}{2}(\sin\theta_c,\cos\theta_c)^\top+\frac{\nu\epsilon^3}{3}\mathbf n(\theta_c)+O(\epsilon^4)$, we get $\Wt(\Bbar_+(\tau_c-\epsilon))=\tau_c+R_c\epsilon^2/(2(\nu-1))+O(\epsilon^3)$, below the folded value $\tau_c+2\epsilon+O(\epsilon^3)$. The tributary therefore carries the far side next to the cusp. With $\ell_c=\nu\epsilon^2/2$ the arc length from $\Bbar_+(\tau_c-\epsilon)$ to $P_c$ and $\mu>0$ the distance from $P_c$ along $\mathbf n(\theta_c)$,
\begin{align*}
\Delta W&=\epsilon+\frac{R_c}{2(\nu-1)}\epsilon^2+O(\epsilon^3)=\sqrt{2\ell_c/\nu}+O(\ell_c),\\
W(P_c+\mu\,\mathbf n(\theta_c))-\tau_c&=\frac{\mu}{\nu(\nu-1)}+O(\mu^2),
\end{align*}
instead of the laws $3\sqrt{2\ell_c/\nu}$ and $(6\mu/\nu)^{1/3}$ of strict F2b (Appendix~\ref{app:cusp}); both expansions agree with direct evaluation of $W$ (\OR{S9}).
\section{Numerical comparisons}\label{sec:num}
We compare the value with two independent solvers that use no characteristic information and test the strategies in closed loop.

\paragraph{Level-set solver} The escape time is approximated through the reach-time equation $\partial_t\zeta=\min\{0,\tilde H(\x,\nabla\zeta)\}$, $\tilde H(\x,\mathbf p)=-\nu\|\mathbf p\|+|p_y|+|p_xy-p_yx|$, initialized by the signed distance to $\partial K$, positive inside $K$ \citep{mitchell2005}, with fifth-order WENO differences \citep{jiang2000}, a local Lax--Friedrichs flux and third-order TVD Runge--Kutta time stepping \citep{shu1988} at CFL number $0.45$. The escape time at a node is the first $t$ with $\zeta<0$, interpolated linearly within the time step, which mirrors the strict-exit convention; implementation details and run configurations are in the Supplementary Material, Sects.~S1--S2.

\paragraph{Regime F2b} Figure~\ref{fig:levelsets} and Table~\ref{tab:validation} compare $W$, evaluated by bracketing every root of~\eqref{eq:Gscalar}, with the level-set solution away from the rays, the axis, the barrier and the kinks. Primary errors are below $10^{-7}$ in median and $10^{-4}$ at most; tributary errors are a few $10^{-3}$, and the tributary offset approximately halves with $h$ and changes sign with the alignment of the grid with the axis. These comparisons test the smooth parts of the value preferentially (\OR{S1}, \OR{S3}).

\begin{figure}[!htbp]
\centering
\includegraphics[width=0.9\textwidth]{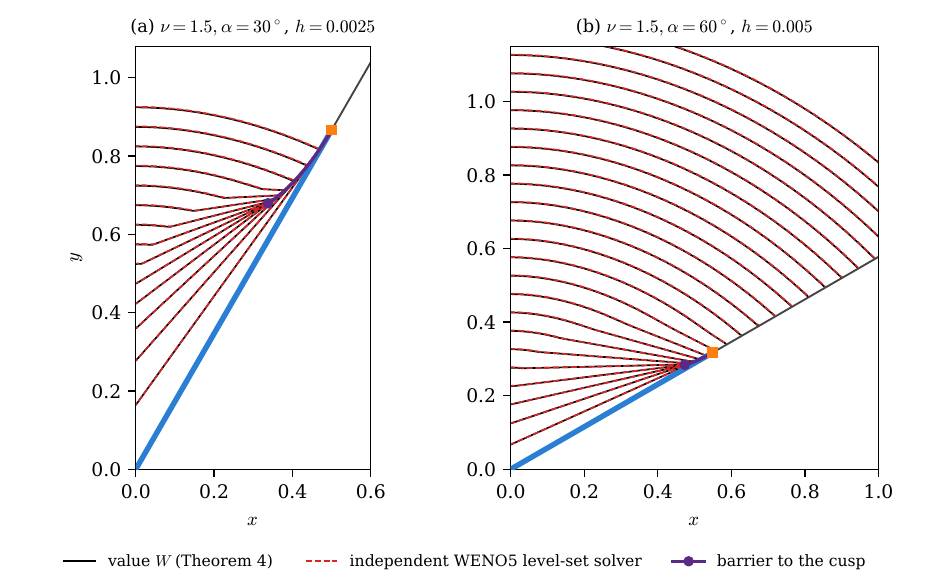}
\caption{Regime F2b: level sets of the value of Theorem~\ref{thm:F2b} (solid) and of the independent WENO5 solution (dashed) in the right half of the cone, at levels spaced by $0.1$. (a) $\nu=1.5$, $\alpha=30^\circ$, grid spacing $h=0.0025$; (b) $\nu=1.5$, $\alpha=60^\circ$, $h=0.005$. $\UP$ thick, $B_+$ marked; barrier from $B_+$ to the cusp $P_c$ (dot), across which both solutions jump.}
\label{fig:levelsets}
\end{figure}

\begin{table}[!htbp]
\centering\small
\caption{Level-set comparisons in regime F2b away from rays, the axis, barriers and kinks. Primary columns give the median relative error $|W_{\rm LS}-W|/\max(W,0.05)$, with $W_{\rm LS}$ the level-set value, and the maximum absolute error; tributary columns give the signed mean and the maximum absolute error.}
\label{tab:validation}
\begin{tabular}{lccccc}
\toprule
$(\nu,\alpha)$ & $h$ & prim.\ median & prim.\ max & trib.\ mean & trib.\ max\\
\midrule
$(1.5,30^\circ)$ & 0.0025 & $2.1\times10^{-8}$ & $3.8\times10^{-5}$ & $-0.0026$ & $0.0026$\\
$(1.5,60^\circ)$ & 0.005 & $3.2\times10^{-8}$ & $4.6\times10^{-5}$ & $-0.0020$ & $0.0020$\\
$(2,15^\circ)$ & 0.005 & $5.9\times10^{-8}$ & $6.0\times10^{-5}$ & $-0.0038$ & $0.0038$\\
\bottomrule
\end{tabular}
\end{table}

\paragraph{Regime F1: a test against a proved value} By Theorem~\ref{thm:value}, the value on the axis above $I$ is $W_T=W_U$ for $(\nu,\alpha)=(1.1,10^\circ)$. On a fixed box and at $101$ fixed axis points $0.575\le y\le0.60$, the level-set mean error at $h=0.001$ is $+0.00522$ with a grid node on the axis and $+0.21670$ with a half-cell shift (Fig.~\ref{fig:F1}c); the error grows with the distance of the nearest grid column from the axis, and its excess over the aligned result decreases with $h$ (\OR{S4}). A semi-Lagrangian discretization of the full game \citep{falcone2014}, with both orders of the players' optimizations, $48$ evader headings and $31$ pursuer controls, brackets $W_T$ at all tested resolutions with a narrowing bracket (Table~\ref{tab:SLfull}), although the bracket is not certified, and its first-order extrapolants lie $0.008$--$0.013$ above $W_T$ (\OR{S5}). Both offsets are therefore errors of the discretizations with respect to the value. The level-set offset is consistent with insufficient resolution near the universal line, where the tributary channel between $\Dpt^\pm$ is far narrower than the grid near $E$, but we have not isolated the mechanism.

\begin{table}[!htbp]
\centering
\caption{Full-game semi-Lagrangian computation for $\nu=1.1$, $\alpha=10^\circ$; P, E: pursuer or evader first. Axis: mean of $V-W_T$ on the axis column $0.575\le y\le0.60$. Tributaries: mean of $V-W_T$ at 214 fixed tributary points. Primaries: largest $|V-W|$ at 536 fixed primary points. Last row: first-order extrapolation of the mean errors.}
\label{tab:SLfull}
\setlength{\tabcolsep}{4pt}
\small
\begin{tabular}{lccc}
\toprule
$h$ & axis, P / E & trib.\ mean, P / E & prim.\ max\\
\midrule
0.002 & $-0.122$ / $+0.036$ & $-0.116$ / $+0.041$ & $0.047$\\
0.001 & $-0.050$ / $+0.026$ & $-0.046$ / $+0.030$ & $0.011$\\
0.0005 & $-0.020$ / $+0.017$ & $-0.016$ / $+0.021$ & $0.004$\\
\midrule
extrap. & $+0.009$ / $+0.008$ & $+0.013$ / $+0.012$ & \\
\bottomrule
\end{tabular}
\end{table}

\paragraph{Closed-loop tests of the strategies} We simulated the strategies of Theorems~\ref{thm:feedback} and~\ref{thm:F2a} from states drawn at random and placed near the barrier, $E$, $Z$, the channel and $\U$, against relay, constant, random and greedy opponents (\OR{S8}). The evader's overshoot stays within the switching tolerance plus the step size, and no evader beats $W$ against the relay (Table~\ref{tab:closedloop}). With the evader's control held over sampling periods $h\le0.02$ the largest overshoot is $3.8\times10^{-3}$; for $h=0.04$ some trajectories pass $E$ between sampling instants, because the band of $A_\varepsilon$ around $E$ on the axis is only about $3\times10^{-3}$ wide ($W_P-\Wt$ vanishes to second order there), as Lemma~\ref{lem:sampE} anticipates. From 66 states within $10^{-2}$ to $10^{-6}$ of the barrier in label, the committed heading of Lemma~\ref{lem:commit} exits within one integration step of $W$ once the integration resolves the near-tangential exit (\OR{S8}). \citet{braun2026} study the effect of sample-and-hold implementation numerically for the prying-pedestrian surveillance--evasion game, with feedback strategies learned from open-loop solutions; here the sampled strategies carry proved guarantees (Theorems~\ref{thm:value}, \ref{thm:F2a} and~\ref{thm:F2b}), and the simulations illustrate them.

\begin{table}[!htbp]
\centering\small
\caption{Closed-loop tests of Theorems~\ref{thm:feedback} (F1, five pairs including $\alpha=\alpha_F(1.1)-10^{-8}$) and~\ref{thm:F2a} (F2a, nine pairs, four of them close to $\alpha_Z$): largest overshoot of the evader feedback against the tested pursuers and smallest margin of the relay against the tested evaders, over all pairs. Per-pair results are in the Supplementary Material, Table~S12.}
\label{tab:closedloop}
\begin{tabular}{lccc}
\toprule
regime & initial states & $\max(t_f-W)$, evader & $\min(t_f-W)$, relay\\
\midrule
F1 & 270 & $1.9\times10^{-3}$ & $+5.8\times10^{-6}$\\
F2a & 421 & $2.0\times10^{-3}$ & $+5.6\times10^{-6}$\\
\bottomrule
\end{tabular}
\end{table}

The closed-form arcs agree with a numerical integration of the characteristic equations to $10^{-12}$ (\OR{S6}); the identities used in the proofs were checked symbolically, the covering of Lemma~\ref{lem:cover} on 660 parameter pairs across F1 (\OR{S7}), and the structure used in Theorem~\ref{thm:F2a} on 28 pairs across F2a (\OR{S9}).

\section{Comparison with the conic game for a Dubins pursuer}\label{sec:lewin}
For a slower evader the Reeds--Shepp synthesis is simpler than the forward-only one of \citet{lewin1979conic} (Section~8 of Part~I); for a faster evader the two become structurally close. In their notation $\nu=1/w$ and $\Theta=\alpha$; both games have the reduced dynamics~\eqref{eq:reduced} and differ only in the pursuer's control set, $v_p=1$ for the Dubins car. For convex cones their Regions~I and~II contain no barrier and compare directly with our synthesis (Table~\ref{tab:lewin}). Above the distinguished point of the axis the two pursuers resolve the same situation in opposite ways: the Dubins car must turn to one side, so the axis above the graze point $G$ is a pursuer's dispersal line, whereas the Reeds--Shepp car backs away with singular steering $\omega_p=0$ along the universal line $\U$, a choice of side that becomes a sliding mode of the relay. The barrier has the same circle-involute geometry in both games, an involute of the circle of radius $\nu=1/w$ about the turning centre ending at a cusp; \citet{lewin1979conic} find it only in obtuse cones, and reversal, by shortening the usable part from $r(\nu+\sin\alpha)$ to $r(\nu-\sin\alpha)$, brings it into convex cones. Finally, they exclude knowledge of the opponent's current control and treat singular arcs as limits of $\varepsilon$-optimal play; here the singular steering on $\U$ is realized by the relay, and Theorems~\ref{thm:value}, \ref{thm:F2a} and~\ref{thm:F2b} show that sampled state feedback attains the value to arbitrary precision, so the value is the same with and without access to the opponent's current control. In the game between two identical Dubins cars \citep{ruiz2024dubins} a universal surface is singular for the evader's control; $\U$ is singular for the pursuer's, and reversal makes that control admissible.

\begin{table}[!htbp]
\centering
\caption{Conic escape game for convex cones, $0<\alpha<\pi/2$: Dubins pursuer (DC; \citealp{lewin1979conic}, Regions~I--II, every $w=1/\nu$) and Reeds--Shepp pursuer with a faster evader (RS, $\nu>1$, this paper). RS entries are proved for every $\alpha$ (Theorems~\ref{thm:value}, \ref{thm:F2a} and~\ref{thm:F2b}).}
\label{tab:lewin}
\small
\renewcommand{\arraystretch}{1.15}\setlength{\tabcolsep}{5pt}
\begin{tabularx}{\linewidth}{@{}l>{\raggedright\arraybackslash\hsize=0.95\hsize}X>{\raggedright\arraybackslash\hsize=1.05\hsize}X@{}}
\toprule
 & DC & RS, $\nu>1$\\
\midrule
Usable part & $\rho<r(\nu+\sin\alpha)$ & $\rho<r(\nu-\sin\alpha)$\\
Holding region & none (no barrier for $\Theta\le\pi/2$) & none (Part~I, Prop.~3(c))\\
Point dividing the axis & graze point $G$ (a primary is tangent to the axis) & $E$ (subsolution test fails, $\lambda_y=1/(\nu-1)$)\\
Axis below it & evader's dispersal line & evader's dispersal line $\De$\\
Axis above it & pursuer's dispersal line & universal line $\U$, pursuer backs away ($\omega_p=0$)\\
Surface leaving the axis there & equivocal surface, to the ray or to a switch curve & evader's dispersal line $\Dpt$, to the barrier or to the $\NUP$\\
Switch curves & in Region~II & none\\
State constraint on the ray & yes (beyond $S$) & no\\
Barrier & only for $\Theta>\pi/2$: involute of the $1/w$-circle, ending at a cusp & involute of the $\nu$-circle; the jump ends at $Z_\pm$ (F1, F2a) or at the cusp $P_c$ (F2b)\\
Singular pursuer control & intermediate turn rate (equivocal surface, state constraint) & $\omega_p=0$ on $\U$, realized by relay sliding\\
Information pattern & no current-control knowledge; singular arcs as limits of $\varepsilon$-optimal play & value approached arbitrarily closely by sampled state feedback at every state\\
\bottomrule
\end{tabularx}
\end{table}

\section{Conclusion}\label{sec:concl}
Together with Part~I, this solves the conic escape game for every $\nu\le1$ and every $\nu>1$ at every $\alpha\in(0,\pi/2)$; for $\nu\le1$ only the evader's strategy value at one corner was exceptional, and for $\nu>1$ there is no exception. For the faster evader the escape time is organized by a universal line on the front axis, on which the reversing pursuer backs away, by its tributaries, and by a $\BUP$ characteristic that ends at a cusp; reversal is what brings this cusped barrier, which \citet{lewin1979conic} find for a Dubins pursuer only in obtuse cones, into convex cones.

Regime F2 is not one regime: the geometric threshold $\nu\sin\theta_c=1$ says whether the $\BUP$ characteristic reaches the axis before its cusp, but whether the value jumps up to the cusp is decided by the sign of $\Wt(P_c)-\tau_c$. In F2a the F1 synthesis and strategies remain optimal. In F2b, where folded primaries carry the value near the cusp, the feedback argument of F1 breaks down near the continuation of the barrier, and the evader commits in advance to the heading of a primary, certified against every pursuer control by the scalar equation of the primaries; proving the relay bound for an auxiliary envelope and recovering the covering afterwards avoids any classification of the dispersal lines, whose topology in F2b we describe only for the examples. On the numerical side, the proved value shows that the persistent offsets of level-set and semi-Lagrangian solvers on the universal line are discretization errors, which suggests the universal line as a benchmark for solvers of games with singular surfaces. Extensions to half-planes, nonconvex cones and finite sensor range are separate problems.

\section*{Supplementary information}
The Supplementary Material, available as an ancillary file of this arXiv submission, contains the level-set and semi-Lagrangian methods and run configurations, the F2 grid comparison, the F1 level-set and semi-Lagrangian studies against the value of Theorem~\ref{thm:value}, characteristic quantities and residuals, the symbolic and numerical checks of the F1 proofs, the closed-loop simulations of the strategies, and the jump table and an additional portrait for regime F2.

\section*{Statements and Declarations}
\paragraph{Competing interests} The author declares no competing interests.
\paragraph{Author contributions} U.~Ruiz is the sole author of this work.
\paragraph{Funding} This research did not receive any specific grant from funding agencies in the public, commercial, or not-for-profit sectors.
\paragraph{Data availability} The solvers, raw numerical grids, run settings and scripts that reproduce every computation, table and figure, including the closed-loop simulations, are available from the author upon reasonable request.
\paragraph{Use of generative AI} During the preparation of this work the author used Claude (Anthropic) in order to edit the text, check derivations and numerical consistency, draft parts of the proofs, and write and run code for the numerical computations and figures. After using this tool, the author reviewed and edited the content as needed and takes full responsibility for the content of the published article.

\appendix
\section{Proof of Theorem~\ref{thm:value}}\label{app:value}
We use the pairing of sample-and-hold controls with nonanticipative strategies of Appendix~B of Part~I, which does not depend on $\nu$, and denote by $M$ a bound for $\|\dot\x\|$ on the bounded sets considered. By Theorem~\ref{thm:feedback}, it remains to show $V_E\ge W$ and $V_P\le W$.

\begin{lemma}[Sampled relay]\label{lem:sampP}
Let $\x_0\in K$ and $T<W(\x_0)$, and let the pursuer play $v_p=-1$, $\omega_p=-\sgn x(kh)$ on $[kh,(k+1)h)$, with $\sgn0:=1$. There is $h_0>0$ such that for $h<h_0$ and every measurable evader control, $\x(t)\in K$ on $[0,T]$. Hence $V_E\ge W$ on $K$.
\end{lemma}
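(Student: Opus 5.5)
The proof is a perturbation of the continuous-time argument of Lemma~\ref{lem:mono}: the sampled relay agrees with the Filippov relay except on sampling intervals during which the state crosses or slides along the axis. I would show that on such intervals the function $\varphi(t)=W(\x(t))+t$ loses at most $O(h)$ per unit time. Then $W(\x(t))\ge W(\x_0)-t-Ch\,t$, which stays positive on $[0,T]$ once $h$ is small.

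The case $\x_0$ in the barrier or at $B_\pm$ needs no special treatment. The barrier arcs $\Lambda_\pm$ lie in $\{|x|\ge x_Z\}$, so no switch of the sampled relay occurs near them. Lemma~\ref{lem:inv} then applies verbatim there, since the sampled control equals $(-1,-1)$ while $x>x_Z/2$.

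\emph{Step 1: compactness and a neighbourhood of the axis.} Fix $T<W(\x_0)$ and put $\kappa=W(\x_0)-T>0$. Against any evader control, the states reachable on $[0,T]$ lie in a bounded set $Q$, with $\|\dot\x\|\le M$ on $Q$. Choose a strip $N_\delta=\{|x|<\delta\}\cap Q$ with $\delta<x_Z/2$. In this strip $W$ is given by the even, locally Lipschitz functions $a$ (primary, below the dispersal curves $\Dpt^\pm$) and $\Wt$ (tributary, above them), glued continuously as in Lemma~\ref{lem:mono}(i), and has Lipschitz constant $L$ there. Outside $N_\delta$ the sampled control can be wrong only if the state crossed $x=0$ within the last $h$. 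For $h<\delta/M$ this cannot happen, so the sampled relay equals the relay $(-1,-\sgn x)$ and the proof of Lemma~\ref{lem:mono} gives $\frac{d}{dt}W\ge-1$.

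\emph{Step 2: the wrong-side error inside $N_\delta$.} On $[kh,(k+1)h)$ the pursuer plays $\omega_p=-\sgn x(kh)$. Whenever $x(t)$ has the opposite sign, we have $|x(t)|\le Mh$. I would compare the sampled velocity with the velocity under the correct-side control $(-1,\mp1)$. The difference is $2(y,-x)$ in the $\omega_p$ term, so the error in $\frac{d}{dt}W$ is bounded by $2|\nabla W\cdot(y,-x)|=2|\Phi_\omega|$. For both branches, $\Phi_\omega=O(|x|)$ near the axis: for the tributary, $(1-\cos\sigma)/(\nu-1)=O(x^2)$ by \eqref{eq:gradT}; for the primaries, $\Phi_\omega=-\psi/g$ with $\psi$ bounded. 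This is too crude for the primaries, because $|\Phi_\omega|$ does not vanish at the axis. Instead I would use the relay's Hamiltonian inequality with either sign of $\omega_p$: at the axis, $H^*=0$ with $|\Phi_\omega|$ replaced by $\pm\Phi_\omega$ fails only by $2|\Phi_\omega|$, so on a wrong-side interval $\frac{d}{dt}W\ge-1-2\sup|\Phi_\omega|$. Such intervals have length at most $h$, but they can occur on every sampling interval while the state chatters along the axis. To control this I would use sliding-mode averaging: over two consecutive sampling intervals with opposite controls, the first-order contributions $\pm\Phi_\omega$ cancel up to $O(h)$, because $\Phi_\omega$ on the primary branch changes sign with $x$ (it is odd under mirroring) and the state moves only $O(h)$. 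This gives a net defect of $O(h)$ per unit time.

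\emph{Step 3: the main obstacle, and the conclusion.} The main difficulty is Step~2 near $E$ and below it, where the dispersal line $\De$ makes $W$ nondifferentiable, with a kink of size $O(1)$ in $\partial_xW$. Near $\De$, a trajectory on the wrong side is pushed further across, and the averaging must be done with the minimum of the two branches. I would handle it with the representation $W=\min(a_+,a_-)$, the two one-sided primaries, near the axis below $E$. Each branch $a_\pm$ is smooth across the axis by the chart extension of Lemma~2(a) of Part~I. For the branch on the side where the pursuer is currently turning, the sampled control is exactly its saddle control, so that branch decreases at most at unit rate. Since the minimum of the two branches is at most that branch, $W(\x(t))+t$ drops by at most $O(h)$ across the interval, namely the Lipschitz cost of the $O(Mh)$ excursion. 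Above $E$, $\Wt$ is $C^{1,1}$ (Proposition~\ref{prop:trib}), and the error is $O(h)$ directly.

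Summing over $[0,T]$ gives $W(\x(t))\ge W(\x_0)-t-C_1h$. Taking $h_0$ such that $C_1h_0<\kappa/2$, we get $W\ge\kappa/2$ along the trajectory on $[0,T]$. Since $W\to0$ at every exit point reachable under the near-relay (exits through the $\NUP$ and at $B_\pm$ being excluded by Lemma~\ref{lem:inv}, whose proof is unaffected for $h<\delta/M$), the state remains in $K$ on $[0,T]$. The pairing of Appendix~B of Part~I turns the sampled relay into a nonanticipative pursuer strategy, and letting $T\uparrow W(\x_0)$ gives $V_E\ge W$.
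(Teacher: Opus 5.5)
\paragraph{Gap: the error near the axis below $E$ does not sum to $O(h)$.} Your route is quantitative: you want $W(\x(t))+t$ to be nearly nondecreasing along sampled trajectories. It breaks at the summation in Steps~2--3.

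Your per-interval bound runs as follows. On $[kh,(k+1)h)$ the pursuer plays the saddle control of the branch $a_{\sgn x(kh)}$, so that branch plus $t$ does not decrease. At time $t$, $W=\min(a_+,a_-)$ differs from it by at most $|a_+-a_-|(\x(t))\le 2L|x(t)|\le 2LMh$. Note that ``the minimum is at most that branch'' bounds $W$ from above; only the Lipschitz comparison gives the lower bound you need.

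This is a loss of order $h$ \emph{per interval} on which the state ends on the wrong side, and such intervals are not rare. If the evader simply runs down the axis, $\dot x=\omega_p y$. The state then chatters with amplitude about $yh$ and ends on the wrong side on a fixed fraction of the $T/h$ intervals. Summing gives a loss of order $|\partial_x a_+|\,y\,T$, not $C_1h$, so $W(\x(t))\ge W(\x_0)-t-C_1h$ does not follow.

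The averaging in Step~2 does not repair this. On a wrong-side stretch, $\omega_p$ and the $\Phi_\omega$ of the current branch flip sign together. The defect is therefore $-2|\Phi_\omega|\approx-2|\partial_xa_+|\,y$ on either side of the axis, and it does not cancel between consecutive intervals. For the primaries this $\Phi_\omega$ is of order one at the axis, unlike the tributary's.

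\paragraph{What actually compensates, and how the paper proceeds.} What compensates is the slack your per-interval bound discards. On correct-side phases the pursuer pushes the state back toward the ridge $x=0$, where $W$ is larger, and the $|x|$-dependent part of $W(x,y)=a_+(|x|,y)$ telescopes.

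A quantitative proof would have to use this. For example, track the axis value $\phi(y)=a_+(0,y)$, for which $\frac{d}{dt}\phi(y(t))\ge-1-C|x|$ whatever the sign of $\omega_p$, because $\lambda_y\le1/(\nu-1)$ below $E$. Then let $W-\phi$ telescope between entries and exits of a strip of fixed width. The junction of three branches at $E$, where $\Dpt^\pm$ are tangent to the axis, would need the same care.

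The paper avoids any rate estimate by compactness. Suppose exits occur at times $t_n\le T$ for $h_n\downarrow0$. A subsequence of the sampled trajectories converges uniformly to a Filippov relay solution, because the dynamics contains no products of the two players' controls. Lemma~\ref{lem:mono} then applies to the limit. What remains is the case analysis at the exit point, notably at $B_\pm$ with Lemma~\ref{lem:inv}. Your observation that the sampled relay equals $(-1,-1)$ near $\Lambda_\pm$ is used there as well.
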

\begin{proof}
Argue by contradiction as in Lemma~5 of Part~I: there are $h_n\downarrow0$, evader controls $\mathbf u_n$ and solutions $\x_n$ leaving $K$ at times $t_n\le T$. Along a subsequence, $\x_n\to\bar\x$ uniformly, $t_n\to t^*\le T$, $\bar\x$ is a relay solution (the identification of the limit uses only the absence of products of the two players' controls in~\eqref{eq:reduced}), and $\bar\x(t^*)\in\partial K$. Contact with the $\NUP$ is excluded as in Lemma~5 of Part~I. Contact with $\UP\cup\{\mathbf 0\}$ approached from $\Pset$ gives $t^*\ge\lim_{t\uparrow t^*}\varphi(t)\ge W(\x_0)>T$ by Lemma~\ref{lem:mono}. The remaining case is $\bar\x(t^*)=B_+$ (or $B_-$), with $\bar\x(t)\in N$ for $t\in[t^*-\kappa,t^*]$. By Lemma~\ref{lem:inv}, $\bar\x$ cannot pass from $\{D>0\}$ to $\{D\le0\}$ or from $\{D=0\}$ to $\{D<0\}$, so one of the following holds.
\begin{itemize}
\item $D(\bar\x(t))<0$ near $t^*$. Then $W(\bar\x(t))\to0$ and the previous argument applies.
\item $D(\bar\x(\bar t))>0$ for some $\bar t\in[t^*-\kappa,t^*)$. Then $D(\x_n(\bar t))>0$ for large $n$. Near $\Lambda_+$ every sample of $\x_n$ lies in $\{x>x_Z/2\}$ once $Mh_n<x_Z/2$, so $\x_n$ plays exactly $(-1,-1)$ there, and Lemma~\ref{lem:inv} keeps $\x_n$ in $\{D>0\}\cap N\subset K$, contradicting $t_n\to t^*$.
\item $D(\bar\x(t))=0$ on $[t^*-\kappa,t^*]$. Then $\frac{d}{dt}D=0$ a.e.\ forces $\mathbf u=-\nu\mathbf n$, the unique minimizer, so $\bar\x$ moves along $\Bbar_+$ with its retro-time decreasing at unit rate \PI{Lemma~3(c)}. Let $t_a$ be the start of the maximal interval ending at $t^*$ on which $\bar\x\in\Lambda_+$. If $t_a>0$, then by Lemma~\ref{lem:inv} the state cannot arrive from $\{D>0\}$, so either it arrives from $\{D<0\}$, the lens, at a point of $\Lambda_+\setminus\{Z_+\}$, or $\bar\x(t_a)=Z_+$, reached from the lens or along the continuation of the $\BUP$ characteristic beyond $Z_+$. In the first case $W(\bar\x(t))$ tends to the lens trace $\tau(\bar\x(t_a))$ as $t\uparrow t_a$; in the second, $W$ is continuous at $Z_+$ with $W(Z_+)=\tau_Z$ (Lemma~\ref{lem:P}(a)). In both cases Lemma~\ref{lem:mono} on $[0,t_a)$ gives $\tau(\bar\x(t_a))+t_a\ge W(\x_0)$, and following $\Bbar_+$ to $B_+$ takes the time $\tau(\bar\x(t_a))$, so $t^*=t_a+\tau(\bar\x(t_a))\ge W(\x_0)>T$. If $t_a=0$, then $\x_0\in\Lambda_+$, and Lemma~\ref{lem:inv} applied to $\x_n$ itself, with $D(\x_0)=0$ and the constant control near $\Lambda_+$, gives $D(\x_n)\ge0$ up to $t^*+\kappa'$, so $\x_n$ stays in $K$ near $B_+$, a contradiction.
\end{itemize}
Against every evader strategy, the sampled relay yields a pursuer control for which the evader does not exit before $T$; hence $V_E\ge T$ for every $T<W(\x_0)$.
\end{proof}

Unlike \PI{Lemma~5}, there is no corner case: the barrier arcs stay in $\{|x|\ge x_Z\}$, where the sampled relay does not switch.

\begin{lemma}[Uniform sampled primary feedback]\label{lem:unif}
Assume~\eqref{eq:F1}, and let $0<\bar\rho<\rho_B$ and $\varepsilon>0$. Let $\overline\Cset(\bar\rho)$ be the image under~\eqref{eq:primary} of the compact strip $\{0\le\rho_f\le\bar\rho,\ 0\le\tau\le\tau_D(\rho_f)\}$. There is $h_1>0$, depending only on $\bar\rho$ and $\varepsilon$, such that for every $\x_1\in\overline\Cset(\bar\rho)$, every $h<h_1$ and every measurable pursuer control, the evader that holds on $[kh,(k+1)h)$ the right-branch primary feedback evaluated at $\x(kh)$ leaves $K$ by the time $W_P(\x_1)+\varepsilon$.
\end{lemma}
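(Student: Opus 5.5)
The plan is a perturbation argument around the continuous-time primary feedback of Lemma~\ref{lem:prifb}, made uniform by compactness of $\overline\Cset(\bar\rho)$, in the spirit of Lemma~6 of Part~I.

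First fix a slightly larger compact strip. Take $\tilde\rho=(\bar\rho+\rho_B)/2$ and let $\overline\Cset(\tilde\rho)$ be the image of $\{0\le\rho_f\le\tilde\rho,\ 0\le\tau\le\tau_D(\rho_f)\}$. On this set $g\ge\rho_B-\tilde\rho>0$, $\psi>0$ and $\nu S<1$ (Lemma~\ref{lem:prim}). The chart extends across the axis and the $\UP$ (Lemma~2(a) of Part~I), so $W_P$, $\nabla W_P=(-C,S)^\top/g$ and the feedback $\mathbf u_P(\x)=\nu(C,-S)^\top$ are Lipschitz on a neighbourhood of it. Let $L$ be the Lipschitz constant of $\mathbf u_P$ and $M$ a speed bound.

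Along a held step, $\|\mathbf u(t)-\mathbf u_P(\x(t))\|\le LMh$. Inserting this error into the derivative computations of Lemma~\ref{lem:prifb} should give two bounds, with $C_0$ depending only on $\tilde\rho$:
\[
\tfrac{d}{dt}W_P(\x(t))\le-1+C_0h,\qquad \dot\rho_f\le C_0h .
\]
The first comes from (27) of Part~I with $\Lambda\le1/(\rho_B-\tilde\rho)$. The second holds because the $\Delta_v$ and $\Delta_\omega$ coefficients are nonpositive by the sign argument using $\nu S<1$.

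Let $T_0=\sup W_P$ over $\overline\Cset(\bar\rho)$, which is at most $\tau_I$. Choose $h_1$ so that $C_0h_1(T_0+1)<\varepsilon/2$ and $C_0h_1(T_0+1)<\tilde\rho-\bar\rho$. Then, while the state stays in the enlarged chart up to time $T_0+1$, $\rho_f$ stays below $\tilde\rho$. Also $W_P$ decreases at rate at least $1-C_0h$, so it reaches $0$ (the $\UP$) by time $W_P(\x_1)/(1-C_0h)\le W_P(\x_1)+\varepsilon/2$.

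The state must not leave the enlarged chart except through $\UP\cup\{\mathbf 0\}$. The $\BUP$ edge is excluded by the $\rho_f$ bound. For the axis I use the margin $\nu C-y=[g+S(1-\nu S)]/C$ from Lemma~\ref{lem:prifb}. It has a positive minimum $m_0$ on the compact set, so for $h$ small, $\dot x\ge m_0-LMh>0$ in a neighbourhood of the axis points of the set. Hence the state never reaches $x=0$ from $x>0$, and it enters $\{x>0\}$ immediately when it starts on the axis. At the apex and on the $\UP$, reaching the boundary means exit.

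The main obstacle is the strict-exit convention. When the state arrives at the $\UP$, I must produce points outside $K$ within $\varepsilon/2$. I would use the outward speed $\ge g-LMh>0$ at the $\UP$ point reached, since $\rho_f<\tilde\rho$, as in the end of Lemma~\ref{lem:commit}. Near the apex I would use the apex usability for $\nu>1$ together with the uniform continuity of the held control. A second delicate point is that the neighbourhoods and constants must not depend on $\x_1$; compactness of $\overline\Cset(\tilde\rho)$ and the fact that all bounds are taken over it, not along a single trajectory, should settle this.
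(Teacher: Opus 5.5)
Your proposal is correct and follows essentially the paper's own proof. Both arguments enlarge the strip to $\rho_f\le(\bar\rho+\rho_B)/2$ and perturb the estimates of Lemma~\ref{lem:prifb} by the held error $LMh$, which gives $\dot W_P\le-1+C_0h$ and $\dot\rho_f\le C_0h$. Both then use the axis margin $[g+S(1-\nu S)]/C$ and the outward margins on $\UP_+$ and at the apex, and close with a first-exit argument whose constants are fixed on the compact enlarged strip, hence independent of $\x_1$.

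One small piece of bookkeeping is missing. Your condition $C_0h_1(T_0+1)<\varepsilon/2$ gives $W_P(\x_1)/(1-C_0h)\le W_P(\x_1)+\varepsilon/2\le T_0+1$ only when $C_0h\le1/(T_0+1)$. This holds once you assume, as you may without loss of generality, that $\varepsilon\le2$.
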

\begin{proof}
We follow Lemma~6 of Part~I and keep track of the constants. Let $\hat\rho=(\bar\rho+\rho_B)/2$. On $\overline\Cset(\hat\rho)$, $G_\tau=\rho_B-\rho_f\ge\rho_B-\hat\rho>0$, so by the implicit function theorem $W_P$, $\rho_f$ and the feedback $\mathbf k(\x)=\nu(\cos\theta,-\sin\theta)^\top$, $\theta=\alpha+W_P(\x)$, extend smoothly to an open neighbourhood of this compact set, across the axis and the $\UP$. Fix a compact neighbourhood $Q$ of $\overline\Cset(\hat\rho)$ inside it, let $L$ be a Lipschitz constant of $\mathbf k$ on $Q$ and $M$ a bound of $\|\dot\x\|$ on $Q$ over all controls, and let $T_0=\tau_I+1$; note that $W_P<\tau_I$ on $\overline\Cset(\hat\rho)$. All these quantities depend only on $\bar\rho$.

\emph{Margins.} On the axis part of $\overline\Cset(\hat\rho)$, $\dot x\ge\nu C-y=[g+S(1-\nu S)]/C\ge m_1>0$ against every pursuer control, because $g\ge\rho_B-\hat\rho$ and $\nu S\le\nu\sin(\alpha+\tau_D(\hat\rho))<1$. On the part of $\UP_+$ with $\rho\le\hat\rho$ the outward angular velocity is at least $\nu-\rho-\sin\alpha\ge\rho_B-\hat\rho$, by Eq.~(8) of Part~I, and at the apex, where the bearing is undefined, the feedback $\nu(\cos\alpha,-\sin\alpha)^\top$ gives an outward normal speed $\nu+v_p\sin\alpha\ge\nu-\sin\alpha$ through $M_+$. By continuity these margins persist, halved, on a fixed strip around each of these edges and for held errors up to some $e_1>0$.

\emph{Estimates.} While $\x(t)\in Q\cap\{x\ge0\}$, the held control differs from $\mathbf k(\x(t))$ by at most $LMh$. Since Eqs.~(27)--(28) of Part~I hold for the exact feedback and every pursuer control (Lemma~\ref{lem:prifb}), $\dot W_P\le-1+C_0h$ and $\dot\rho_f\le C_0h$ there, with $C_0$ depending only on $Q$.

\emph{Choice of $h_1$.} Take $h_1$ with $LMh_1<e_1$, $C_0h_1T_0<\hat\rho-\bar\rho$, $C_0h_1<\frac12$ and $C_0h_1\tau_I/(1-C_0h_1)\le\varepsilon$. Let $h<h_1$ and $\x_1\in\overline\Cset(\bar\rho)$. Up to time $T_0$, $\rho_f$ increases by less than $\hat\rho-\bar\rho$, so the state does not reach the edge $\rho_f=\hat\rho$; by the axis margin it enters $\{x>0\}$ immediately if it starts on the axis and does not return. A first-exit argument therefore keeps the state in $\overline\Cset(\hat\rho)$ until it reaches $\UP$ or the apex, which happens by the time $W_P(\x_1)/(1-C_0h)\le W_P(\x_1)+\varepsilon<T_0$, because $W_P\ge0$ on $\overline\Cset(\hat\rho)$ with equality only there. At the contact the outward margin gives a strict exit immediately.
\end{proof}

\begin{lemma}[Sampled evader]\label{lem:sampE}
Let $\x_0\in K$ and $\varepsilon>0$. If $\x_0\in\Pset$, let the evader hold the primary feedback of the half containing $\x_0$ (the right one on the axis), as in Lemma~\ref{lem:unif}. If $\x_0\in\Tset$, let it hold $\mathbf u_T(\x(kh))$ on $[kh,(k+1)h)$ until the first sampling instant $t_1=kh$ with $\x(kh)\in A_\varepsilon$, and from then on the sampled primary feedback of the half containing $\x(t_1)$. There is $h_0>0$ such that for $h<h_0$ and every measurable pursuer control, $t_f\le W(\x_0)+3\varepsilon$. Hence $V_P\le W$ on $K$.
\end{lemma}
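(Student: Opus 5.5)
The plan is to mirror the exact-feedback proof of Theorem~\ref{thm:feedback}(ii), replacing each continuous-time estimate by its sampled counterpart with an $O(h)$ error and bounding the number of places where an error is incurred. For $\x_0\in\Pset$ the claim follows from Lemma~\ref{lem:unif}. Since $\x_0\in\Pset$ has $\rho_f(\x_0)<\rho_B$, take $\bar\rho=\rho_f(\x_0)$ (or $\rho_B-\eta$ if that is larger). Lemma~\ref{lem:unif} then gives exit by $W_P(\x_0)+\varepsilon$ for $h<h_1(\bar\rho,\varepsilon)$. The only point to check is that starting on the axis with the right branch is covered, and it is, by the axis margin in that lemma.

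For $\x_0\in\Tset$ I would split the play into the tributary phase and the primary phase.

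\emph{Tributary phase.} $\mathbf u_T$ is Lipschitz on compact subsets of $\{y>0\}$ (Proposition~\ref{prop:trib}). The held control therefore differs from $\mathbf u_T(\x(t))$ by at most $LMh$, and \eqref{eq:WTdot} gives $\frac{d}{dt}\Wt\le-1+C_1h$ while the state stays in a compact set $Q_T\subset\{y>0\}$ containing the sublevel set $\{\Wt\le\Wt(\x_0)+1\}\cap K$ away from a neighbourhood of the apex. The apex neighbourhood is excluded, because $\Wt$ is small there only on $\Omega_0$, and we switch before entering $\Omega_0$ deeply. Hence $\Wt(\x(t))\le\Wt(\x_0)-t(1-C_1h)$.

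\emph{Sampled switch.} The switch must occur by roughly $t^*=(\Wt(\x_0)-W_E)/(1-C_1h)$. This is the main obstacle, because switching is checked only at sampling instants and the band of $A_\varepsilon$ near $E$ is thin. I would use Lemma~\ref{lem:sep}(c): any path entering $\Omega_0$ deeper than $\delta$ from $\Gamma$ spends a time at least $\delta/M$ inside $A_\varepsilon$. If $h<\delta/M$, some sampling instant falls in that interval, so the sampled switch fires. The rest of the argument of step~(2) of the proof of Theorem~\ref{thm:feedback} needs one more ingredient. At time $t^*$ we have $\Wt\le W_E+C_1h t^*$, and this must place the state outside $\Tset\cup\Sigma^\pm_\eta$. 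I would get this by strengthening the choice of $\eta$, so that $\Sigma_\eta\subset\{\Wt>W_E+c\}$ with $c>C_1h t^*$ for small $h$, and by using that on $\Tset$ away from a small neighbourhood $N_E$ of $E$ we have $\Wt\ge W_E+c'$ (Lemma~\ref{lem:cover}, compactness). The remaining case is a state near $E$. I would choose $N_E$ inside the interior of $A_\varepsilon$ relative to $\overline\Cset$, which is possible because $W_P-\Wt$ is continuous and vanishes at $E$, while $\Tset$ near $E$ is the thin channel, where $W_P-\Wt$ is still small. A sample taken there then triggers the switch. Passing through $N_E$ takes time of order $\operatorname{diam}N_E/M$, so $h$ small guarantees a sample inside it.

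\emph{Primary phase and accounting.} After the switch at $t_1$, the state satisfies $\x(t_1)\in A_\varepsilon\subset\overline\Cset(\rho_B-\eta)$ on one side, so Lemma~\ref{lem:unif} with $\bar\rho=\rho_B-\eta$ gives exit by $t_1+W_P(\x(t_1))+\varepsilon$. Moreover $W_P(\x(t_1))\le\Wt(\x(t_1))+\varepsilon$ by the definition of $A_\varepsilon$, and $\Wt(\x(t_1))\le\Wt(\x_0)-t_1+C_1ht_1$. Choosing $h$ with $C_1hT\le\varepsilon$, where $T$ bounds $t_1$, yields $t_f\le W(\x_0)+3\varepsilon$. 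Exits through the $\NUP$ before the switch only shorten $t_f$.

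Finally, these sampled evader controls are sample-and-hold responses to the pursuer's control. The pairing of Appendix~B of Part~I turns them into a bound on $V_P$, giving $V_P\le W(\x_0)+3\varepsilon$ for every $\varepsilon$, and hence $V_P\le W$.
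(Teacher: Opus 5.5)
Your proposal is correct and follows essentially the paper's proof: the $\Pset$ case by Lemma~\ref{lem:unif}; for $\Tset$, the sampled decrease $\frac{d}{dt}\Wt\le-1+C_0h$ on a compact subset of $\{y>0\}$, a switch no later than the first sampling instant after $(\Wt(\x_0)-W_E)/(1-C_0h)$ via Lemma~\ref{lem:cover}, the choice of $\eta$, \eqref{eq:Pdecomp} and Lemma~\ref{lem:sep}(c) with $h<\delta/M$, then Lemma~\ref{lem:unif} with $\bar\rho=\rho_B-\eta$ and the same $3\varepsilon$ accounting. Your slack $C_1ht^*$ and the neighbourhood $N_E$ are unnecessary: with your own definition of $t^*$ you already have $\Wt\le W_E$ at the first sampling instant $\ge t^*$, so Lemma~\ref{lem:cover} applies directly; the point that needs stating explicitly, as the paper does, is that the compact set (with the part of $\Omega_0$ at distance $\ge\delta$ from $\Gamma$ removed, $\delta$ from Lemma~\ref{lem:sep}) is fixed before $M$ and $L$.
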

\begin{proof}
For $\x_0\in\Pset$ this is Lemma~\ref{lem:unif} with $\bar\rho=\rho_f(\x_0)$, by symmetry if $x_0<0$. Let $\x_0\in\Tset$.

\emph{Localization.} We fix the compact set before any constant that depends on it. Let $T_0=2(\Wt(\x_0)-W_E)+1$ and $\rho_0=\|\x_0\|$. For $\rho=\|\x\|>0$ the rotational terms of~\eqref{eq:reduced} cancel and $\dot\rho=(xu_x+yu_y-v_py)/\rho\le\nu+1$, so $\|\x(t)\|\le\rho_1=\rho_0+(\nu+1)T_0$ on $[0,T_0]$ for every control. On $\Tset$, $s\ge y_E$ gives $R\ge y_E$, which with $|x|\le y\tan\alpha$ bounds $y$ below by some $y_1>0$; the closure of $\Pset\cap\{\rho_f\ge\rho_E\}$ and the $\delta_0$-band of $\Gamma$ in $\Omega_0$ are compact subsets of $\{y>0\}$ for some $\delta_0>0$. Let $Q=\{\x\in K:\|\x\|\le\rho_1\}\cap\bigl(\overline\Tset\cup\overline{\Pset\cap\{\rho_f\ge\rho_E\}}\cup\text{band}\bigr)$, a compact subset of $\{y>0\}$, since $s\ge y_E$ also on $\overline\Tset$ (Lemma~\ref{lem:cover}). Only now let $M$ bound $\|\dot\x\|$ on $Q$, $L$ be a Lipschitz constant of $\mathbf u_T$ on $Q$, and take $\delta\le\delta_0$ as in Lemma~\ref{lem:sep}. Once $Mh<\delta$, the state does not reach $\Omega_0$ at distance $\ge\delta$ from $\Gamma$ before the switch: by Lemma~\ref{lem:sep}(c) this would require a time interval of length $\ge\delta/M>h$ in $A_\varepsilon$, which contains a sampling instant. Hence, up to the switch and up to $T_0$, the state stays in $Q$.

\emph{Decrease.} The held control differs from $\mathbf u_T(\x(t))$ by at most $LMh$, so by~\eqref{eq:WTdot} and $\|\nabla\Wt\|=1/(\nu-1)$, $\frac{d}{dt}\Wt\le-1+C_0h$ with $C_0=LM/(\nu-1)$, uniformly over pursuer controls.

\emph{Switch.} Let $\hat t=(\Wt(\x_0)-W_E)/(1-C_0h)$, which is $<T_0-h$ once $C_0h\le\frac12$ and $h<1$, and let $kh$ be the first sampling instant $\ge\hat t$ before the exit. Then $\Wt(\x(kh))\le W_E$, so $\x(kh)\notin\Tset\cup\Sigma^\pm_\eta$ by Lemma~\ref{lem:cover}, and since $E\in A_\varepsilon$, \eqref{eq:Pdecomp} gives $\x(kh)\in A_\varepsilon\cup(\Omega_0\setminus A_\varepsilon)$, where the second alternative is excluded by the localization. Hence $t_1\le\hat t+h$ and
\[
W_P(\x(t_1))\le\Wt(\x(t_1))+\varepsilon\le\Wt(\x_0)-(1-C_0h)t_1+\varepsilon .
\]

\emph{After the switch.} $\x(t_1)\in A_\varepsilon\subset\overline\Cset(\rho_B-\eta)$ or its mirror image, so Lemma~\ref{lem:unif} with $\bar\rho=\rho_B-\eta$ gives a threshold $h_1$, independent of $\x(t_1)$, such that $t_f-t_1\le W_P(\x(t_1))+\varepsilon$ for $h<h_1$. Altogether $t_f\le\Wt(\x_0)+2\varepsilon+C_0ht_1\le W(\x_0)+3\varepsilon$ for small $h$.

Against every pursuer strategy, the sampled evader yields a control with $J\le W(\x_0)+3\varepsilon$; hence $V_P\le W$.
\end{proof}

\begin{proof}[Proof of Theorem~\ref{thm:value}]
Theorem~\ref{thm:feedback} gives $V_E\le W\le V_P$, and Lemmas~\ref{lem:sampP}--\ref{lem:sampE} give $V_E\ge W$ and $V_P\le W$. Both lemmas use sampled state feedback only, which gives the last statement \PI{Remark~2}.
\end{proof}

\begin{lemma}[Sampled strategies in F2b]\label{lem:sampF2b}
In regime F2b and for $\alpha=\alpha_Z(\nu)$, $V_E\ge W$ and $V_P\le W$ on $K$.
\end{lemma}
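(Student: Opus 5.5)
The two inequalities are proved separately, following the pattern of Lemmas~\ref{lem:sampP} and~\ref{lem:sampE}. Throughout, $V=W$ by Proposition~\ref{prop:coverF2b}, and the feedback bounds of Propositions~\ref{prop:auxub} and~\ref{prop:relayF2b} are used.

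\emph{Lower bound $V_E\ge W$.} The pursuer plays the sampled relay, $v_p=-1$, $\omega_p=-\sgn x(kh)$. Fix $T<W(\x_0)$ and argue by contradiction as in Lemma~\ref{lem:sampP}. Suppose there are $h_n\downarrow0$ and solutions $\x_n$ leaving $K$ by $T$. A subsequence converges uniformly to a relay solution $\bar\x$ that touches $\partial K$ at $t^*\le T$. Exits through $\UP\cup\{\mathbf 0\}$ are excluded by Proposition~\ref{prop:relayF2b}, because $V(\bar\x(t))+t$ is nondecreasing and $V\to0$ there. Exits through the $\NUP$ are excluded as in Part~I. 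This leaves the case $\bar\x(t^*)=B_\pm$.

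In that case two facts are needed.
\begin{itemize}
\item The whole closed barrier arc $\Bbar_+[0,\tau_c]$ lies in $\{x\ge x(P_c)\}$ with $x(P_c)=\nu\sin\theta_c-1>0$ (Proposition~\ref{prop:barfast}(iii)). So for small $h$ the sampled relay plays exactly $(-1,-1)$ near it.
\item Lemma~\ref{lem:Mfun} replaces Lemma~\ref{lem:inv} near each barrier point before the cusp. The function $M$ is nonincreasing under $(-1,-1)$, so $\{M\le0\}$ and $\{M<0\}$ are preserved locally.
\end{itemize}
The three subcases of Lemma~\ref{lem:sampP} then go through. If the limit lies on the lens side, the previous argument applies. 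If the limit is on the far side at some time, the $\x_n$ stay on the far side and hence in $K$. If the limit slides along the barrier, it must have entered the barrier from the lens, with $W$ tending to the lens trace $\tau$, or at the cusp $P_c$, where $V$ is continuous with value $\tau_c$. Following the barrier to $B_+$ then takes the remaining retro-time, so $t^*\ge W(\x_0)>T$.

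The main obstacle is the neighbourhood of the cusp, since the sets $U$ of Lemma~\ref{lem:Mfun} shrink as $\tau_0\to\tau_c$. I would handle it by compactness. Cover the arc $\Bbar_+[\tau',\tau_c-\kappa]$ by finitely many such $U$. Near $P_c$, use the support identity~\eqref{eq:Gsupport} instead: under $(-1,-1)$, a state near $P_c$ cannot reach the line of $M_+$ before about $\tau_c$. Together with the continuity of $V$ at $P_c$, this shows that a limit trajectory passing near $P_c$ needs time at least $\tau_c-o(1)$ to reach $B_+$. That is enough for the bound.

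\emph{Upper bound $V_P\le W$.} Committed headings need no sampling. For a retained primary state, Lemma~\ref{lem:commit} gives an open-loop control, which is trivially a nonanticipative strategy, with $t_f\le W$. At the cusps, the certificate following Lemma~\ref{lem:commit} gives $\tau_c+\varepsilon$.

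When $W=\Wt$ outside $\Omega_0$, the evader holds $\mathbf u_T(\x(kh))$ on each sampling interval. As in the decrease step of Lemma~\ref{lem:sampE}, this gives $\frac{d}{dt}\Wt\le-1+C_0h$ on a compact set fixed beforehand. The state must reach $\Omega_0$ by time about $(\Wt(\x_0)-m)/(1-C_0h)$. At the first sampling instant at which the state lies in $\Omega_0$, within distance $\delta$ of $\Gamma_E$ (using the separation argument of Lemma~\ref{lem:sep}), it commits to the core primary heading through that state. Since $P_0\le\Wt+\varepsilon$ on that band by uniform continuity, the exit time is at most $W(\x_0)+O(\varepsilon)+O(h)$.

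Because the commitment after the switch is open-loop, no analogue of Lemma~\ref{lem:unif} is needed. The remaining care is to fix the compact localization set before choosing $\delta$, $M$ and $h$, as in Lemma~\ref{lem:sampE}.
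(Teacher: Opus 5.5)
Your proposal is correct and follows the paper's proof. On the pursuer's side, you run the contradiction argument of Lemma~\ref{lem:sampP} on the closed arc $\Bbar_+[0,\tau_c]$: the sampled relay plays exactly $(-1,-1)$ near that arc, Proposition~\ref{prop:relayF2b} replaces Lemma~\ref{lem:mono}, and continuity of $V$ at $P_c$ covers a sliding interval that starts at the cusp. On the evader's side, you use open-loop committed headings, plus the sampled tributary feedback with one switch near $\Gamma_E$ to a committed core heading; the paper switches on a set $N_\varepsilon$ in a slightly extended core chart, and your band inside $\Omega_0$ works equally well.

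There are two small corrections.
\begin{enumerate}
\item At $B_+$ itself you still need Lemma~\ref{lem:inv}, which is valid there because $q>0$. Lemma~\ref{lem:Mfun} requires $\tau_0>0$ and does not include the $\NUP$ continuation.
\item Your cusp paragraph is unnecessary. Local invariance is only invoked at a single barrier point before the cusp, or along a compact subarc for the fixed $\x_0$, so the shrinking of the sets $U$ never matters. The cusp enters only through continuity of $V$ at $P_c$, and the support identity certifies only the constant control $(-1,-1)$ in any case.
\end{enumerate}
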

\begin{proof}
\emph{Pursuer.} The proof of Lemma~\ref{lem:sampP} applies with $\Lambda_+$ replaced by the closed barrier arc $\Bbar_+[0,\tau_c]$, $Z_+$ by $P_c$, and Lemma~\ref{lem:mono} by Proposition~\ref{prop:relayF2b}. The barrier arc stays in $\{x\ge x(P_c)\}$ with $x(P_c)>0$, so the sampled relay plays exactly $(-1,-1)$ near it once $Mh_n<x(P_c)/2$. Contact of the limit relay solution with $B_+$ is treated by Lemma~\ref{lem:inv} near $B_+$, where $q>0$; a maximal barrier interval beginning at a positive time starts either from the lens, with the trace as left limit, or at $P_c$, where $V$ is continuous with $V(P_c)=\tau_c$, and in both cases Proposition~\ref{prop:relayF2b} gives the contradiction.

\emph{Evader.} A committed heading (Lemma~\ref{lem:commit}) is an open-loop program in the reduced frame, fixed at a sampling instant, so it needs no further sampling; the same holds at the cusps. Let $\x_0$ be a state outside $\Omega_0$ where $W=\Wt$, and $\varepsilon>0$. Choose $\eta>0$ with $\rho_E+\eta<\rho_a$ and extend the core chart to the labels $\rho_f\le\rho_E+\eta$ up to their first axis contacts; on its closure $g\ge\rho_B-\rho_E-\eta>0$ and $\tau\le\tau_D(\rho_E+\eta)<\tau_a$, so it is regular, and so is its mirror image. Let
\[
N_\varepsilon=\{\x\ \text{in the closed extended chart or its mirror image}:\ P_0(\x)\le\Wt(\x)+\varepsilon/3\},
\]
where $P_0$ is the time of the (extended) core primary. Since $P_0\le\Wt$ on $\Gamma_E$ and both are uniformly continuous there, $N_\varepsilon$ contains a band $\{\x\in\Omega_0:\operatorname{dist}(\x,\Gamma_E)<\delta\}$ for some $\delta>0$, including a relative neighbourhood of $E$. Fix the horizon $T_0=2\Wt(\x_0)+2$; since $\frac{d}{dt}\|\x\|\le\nu+1$, the state stays in a ball before $T_0$, and outside $\Omega_0$, or in the band, it stays at a positive distance from the apex. This gives a compact set $Q\subset\{y>0\}$, fixed before the constants: let $M=\sup_Q\|\dot\x\|$ over all controls and $L$ a Lipschitz constant of $\mathbf u_T$ on $Q$. The evader holds $\mathbf u_T(\x(kh))$ on $[kh,(k+1)h)$ until the first sampling instant $t_1$ with $\x(t_1)\in N_\varepsilon$, and then commits to the heading of the extended core primary through $\x(t_1)$. Before $t_1$, $\|\mathbf u_T(\x(kh))-\mathbf u_T(\x(t))\|\le LMh$, so $\frac{d}{dt}\Wt\le-1+C_0h$ with $C_0=LM/(\nu-1)$. Take $h<\delta/M$, $C_0h<\frac12$ and $C_0hT_0<\varepsilon/3$. The state cannot remain outside $\Omega_0$ beyond $2\Wt(\x_0)$, since $\Wt\ge m>0$ there; to enter $\Omega_0$ it crosses the band, which takes a time at least $\delta/M>h$, so a sampling instant falls in $N_\varepsilon$; hence $t_1\le T_0$. Then Lemma~\ref{lem:commit} gives $t_f\le t_1+P_0(\x(t_1))\le t_1+\Wt(\x(t_1))+\varepsilon/3\le\Wt(\x_0)+C_0ht_1+\varepsilon/3<\Wt(\x_0)+\varepsilon$.

Both constructions use sampled state feedback or open-loop controls only, so they pair with nonanticipative strategies as in Appendix~B of Part~I.
\end{proof}

The threshold $h_0$ of Lemma~\ref{lem:sampE} is governed by the width of the band of $A_\varepsilon$ around $E$. Along the axis, $W_P-\Wt$ vanishes to second order at $E$, because $\lambda_y=1/(\nu-1)$ there. Differentiating the axis equation gives $W_P(0,y)-\Wt(0,y)=c_E(y-y_E)^2+O(|y-y_E|^3)$ with $c_E=C_*[(2\nu-1)+(\nu-1)S_*]/[2S_*(1+S_*)(\nu-1)^3]$, which is $528$ for $(1.1,10^\circ)$; the cubic term is not negligible at the scale of the band, and solving $W_P-\Wt=\varepsilon=10^{-3}$ directly gives a band $0.55008\le y\le0.55284$ of width $2.75\times10^{-3}$, and a trajectory running down $\U$ at speed $\nu-1$ is sampled in it only if $h\lesssim0.028$. This heuristic scale, which ignores the other constants of Lemma~\ref{lem:sampE}, matches the closed-loop tests of Section~\ref{sec:num}.

\section{Expansions at the cusp}\label{app:cusp}
We write $\Bbar_+(\tau)$ for the point of the $\BUP$ characteristic at retro-time $\tau$, $\theta_0=\alpha+\tau$, and $q(\tau)=\cos\alpha-\nu\tau$, so that $q(\tau_c)=0$. The expansions below concern the folded-primary branch $W_{\rm fold}$ near $P_c$. They describe the jump of $W$ only where that branch carries the far-side value, which near $P_c$ is the case in strict F2b, $\alpha>\alpha_Z(\nu)$; in F2a the tributary carries the value near $P_c$ and $W$ is smooth there, and on $\alpha=\alpha_Z(\nu)$ the tributary carries the far side, with the laws given in Section~\ref{sec:F2b}. We consider two approaches to the cusp: along $\Bbar_+$ toward $P_c$, and from $P_c$ along the normal $\mathbf n(\theta_c)$ to the far side.

\begin{lemma}[Scalar equation along the barrier]\label{lem:barG}
For $\tau\ge0$ and $\delta\in\mathbb R$,
\begin{equation}\label{eq:barG}
G(\tau+\delta;\Bbar_+(\tau))=\nu(\delta-\sin\delta)-q(\tau)(1-\cos\delta),
\end{equation}
and the launch parameter~\eqref{eq:rhof} of the primary through $\Bbar_+(\tau)$ at retro-time $\tau+\delta$ satisfies $\rho_B-\rho_f=\nu(1-\cos\delta)-q(\tau)\sin\delta$. Moreover, for every retro-time $\tau'$ and $\theta'=\alpha+\tau'$, $G(\tau';\x+\mathbf w)=G(\tau';\x)+w_x\cos\theta'-w_y\sin\theta'$, and $\rho_f$ changes by $w_x\sin\theta'+w_y\cos\theta'$.
\end{lemma}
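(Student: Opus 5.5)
The proof is a direct substitution of the explicit path of $\Bbar_+$ into the scalar function~\eqref{eq:Gscalar}, using its linearity in $\x$. First I will dispose of the last assertion, since the others rest on it. By~\eqref{eq:Gscalar}, $G(\tau';\x)=(x+1)\cos\theta'-y\sin\theta'-\cos\alpha+\nu\tau'$ is affine in $\x$, with gradient $(\cos\theta',-\sin\theta')^\top$. This gives $G(\tau';\x+\mathbf w)=G(\tau';\x)+w_x\cos\theta'-w_y\sin\theta'$. Likewise~\eqref{eq:rhof} is affine in $\x$ with gradient $(\sin\theta',\cos\theta')^\top$, which gives the stated change of $\rho_f$.

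For~\eqref{eq:barG} I will use the involute form~\eqref{eq:involute}. It writes $\Bbar_+(\tau)$ as
\[
(x+1,\,y)=\nu(\sin\theta_0,\cos\theta_0)+q(\tau)(\cos\theta_0,-\sin\theta_0),\qquad \theta_0=\alpha+\tau .
\]
Set $\theta'=\theta_0+\delta$ and substitute into $(x+1)\cos\theta'-y\sin\theta'$. The angle-difference formulas give
\[
\nu(\sin\theta_0\cos\theta'-\cos\theta_0\sin\theta')+q(\cos\theta_0\cos\theta'+\sin\theta_0\sin\theta')=-\nu\sin\delta+q\cos\delta .
\]
Next I add $-\cos\alpha+\nu(\tau+\delta)$ and replace $\cos\alpha-\nu\tau$ by $q(\tau)$. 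The result is $\nu\delta-\nu\sin\delta+q\cos\delta-q$, which is~\eqref{eq:barG}.

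The launch parameter is handled the same way. The combination $(x+1)\sin\theta'+y\cos\theta'$ becomes $\nu\cos\delta+q\sin\delta$. Subtracting $\sin\alpha$ and using $\rho_B=\nu-\sin\alpha$ gives $\rho_B-\rho_f=\nu(1-\cos\delta)-q(\tau)\sin\delta$.

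As a consistency check, $\partial_\delta$ of~\eqref{eq:barG} is $\nu(1-\cos\delta)-q\sin\delta$, matching $G_\tau=\rho_B-\rho_f$ from~\eqref{eq:rhof}, and at $\delta=0$ both vanish.

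There is no real obstacle here. The only care needed is with signs in the angle-difference identities, and with confirming that~\eqref{eq:involute} holds for every $\tau\ge0$, including beyond $\tau_c$ where $q<0$. That holds because~\eqref{eq:involute} is an algebraic rewriting of~\eqref{eq:primary} with $\psi=\nu-\sin\theta$ and $\eta=\cos\alpha-\cos\theta-\nu\tau$, valid for all $\tau$.
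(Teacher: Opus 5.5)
Your proposal is correct and is essentially the paper's proof: you substitute the explicit path of $\Bbar_+$ into the affine functions~\eqref{eq:Gscalar} and~\eqref{eq:rhof}, apply angle-difference identities, and get the last claim from affinity in $\x$. The only difference is cosmetic: the paper substitutes the $(\psi,\eta)$ form of~\eqref{eq:primary} and then notes that the $\sin\theta_0$, $\cos\theta_0$ terms cancel, whereas your use of the involute form~\eqref{eq:involute} builds that cancellation in from the start.
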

\begin{proof}
By~\eqref{eq:primary}, $\Bbar_+(\tau)$ has $\psi=\nu-\sin\theta_0$ and $\eta=\cos\alpha-\cos\theta_0-\nu\tau$. With $\theta=\theta_0+\delta$, $x\cos\theta-y\sin\theta=-\psi\sin\delta+\eta\cos\delta$ and $x\sin\theta+y\cos\theta=\psi\cos\delta+\eta\sin\delta$. Substituting into~\eqref{eq:Gscalar} and~\eqref{eq:rhof}, the terms in $\sin\theta_0$ and $\cos\theta_0$ cancel. The last statement holds because $G$ and $\rho_f$ are affine in $\x$.
\end{proof}

\paragraph{Along the barrier} At $\tau=\tau_c-\epsilon$, $\epsilon>0$, we have $q=\nu\epsilon$, so the roots $\delta\ne0$ near $0$ of~\eqref{eq:barG} satisfy $\delta-\sin\delta=\epsilon(1-\cos\delta)$. The leading balance $\delta^3/6=\epsilon\delta^2/2$ gives $\delta\simeq3\epsilon$, and the next order gives $\delta=3\epsilon-\tfrac{9}{10}\epsilon^3+O(\epsilon^5)$. For this root $\rho_B-\rho_f=\tfrac32\nu\epsilon^2+O(\epsilon^4)>0$, so it belongs to a primary launched just below $\rho_B$ that passes through $\Bbar_+(\tau_c-\epsilon)$ with $G_\tau>0$: the folded primary. Its retro-time there is $\tau_c-\epsilon+\delta$, while the lens-side limit is $\tau_c-\epsilon$, so the jump carried by this branch is $\Delta W=W_{\rm fold}-\tau=\delta=3\epsilon+O(\epsilon^3)$. By~\eqref{eq:bartangent} the speed along $\Bbar_+$ is $|q|=\nu(\tau_c-\tau)$, so $\Bbar_+(\tau_c-\epsilon)$ lies at arc length exactly $\ell_c=\nu\epsilon^2/2$ from $P_c$, and $\Delta W=3\sqrt{2\ell_c/\nu}+O(\ell_c^{3/2})$; the Euclidean distance to $P_c$ has the same leading behaviour.

\paragraph{Across the barrier} At $P_c+\mu\,\mathbf n(\theta_c)$, $\mu>0$, Lemma~\ref{lem:barG} with $\tau=\tau_c$, $q=0$ and $\mathbf w=\mu\,\mathbf n(\theta_c)$ gives $G(\tau_c+\delta)=\nu(\delta-\sin\delta)-\mu\cos\delta$ and $\rho_B-\rho_f=\nu(1-\cos\delta)+\mu\sin\delta$. The root near $0$ is positive and satisfies $\nu(\delta-\sin\delta)=\mu\cos\delta$, so $W_{\rm fold}-\tau_c=\delta=(6\mu/\nu)^{1/3}+O(\mu)$, again with $\rho_f<\rho_B$. For $\mu=0$ the only root is $\delta=0$, with $\rho_f=\rho_B$: the folded branch merges with the $\BUP$ characteristic at $P_c$.

\end{document}